\newtheorem{theorem}{Theorem}
\newtheorem{example}{Example}
\newtheorem{corollary}{Corollary}[theorem]
\newtheorem{proposition}{Proposition}
\newtheorem{lemma}{Lemma}
\newtheorem{definition}{Definition}
\definecolor{forestgreen}{rgb}{0.13, 0.55, 0.13}
\def\eqref#1{equation~\ref{#1}}
\def\1{\bm{1}}
\def\eps{{\epsilon}}
\DeclareMathAlphabet{\mathsfit}{\encodingdefault}{\sfdefault}{m}{sl}
\SetMathAlphabet{\mathsfit}{bold}{\encodingdefault}{\sfdefault}{bx}{n}
\def\gA{{\mathcal{A}}}
\def\gS{{\mathcal{S}}}
\def\sN{{\mathbb{N}}}
\def\sP{{\mathbb{P}}}
\newcommand{\E}{\mathbb{E}}
\newcommand{\R}{\mathbb{R}}
\definecolor{myred}{rgb}{0.81, 0.06, 0.13}
\definecolor{myblue}{rgb}{0.15, 0.23, 0.89}
\newcommand\CoAuthorMark{\footnotemark[\arabic{footnote}]} %
\title{Beyond variance reduction: Understanding \\the true impact of baselines on policy optimization}
\author[1]{Wesley Chung\thanks{Equal contribution.}}
\author[2,3]{Valentin Thomas\protect\CoAuthorMark}
\author[3,4]{Marlos C. Machado}
\author[1,2,3]{Nicolas Le Roux}
\affil[1]{\footnotesize Mila, McGill University}
\affil[2]{\footnotesize Mila, University of Montreal}
\affil[3]{\footnotesize Google Research, Brain Team, Montreal, Canada}
\affil[4]{\footnotesize Now at Google DeepMind, Edmonton, Canada}
\affil[ ]{\texttt {\{wesley.chung2,  vltn.thomas\}@gmail.com, marlosm@google.com, nicolas@le-roux.name}}
\date{}
\begin{document}

\maketitle

\begin{abstract}

Bandit and reinforcement learning (RL) problems can often be framed as optimization problems where the goal is to maximize average performance while having access only to stochastic estimates of the true gradient. Traditionally, stochastic optimization theory predicts that learning dynamics are governed by the curvature of the loss function and the noise of the gradient estimates. In this paper we demonstrate that this is not the case for bandit and RL problems. To allow our analysis to be interpreted in light of multi-step MDPs, we focus on techniques derived from stochastic optimization principles~(e.g., natural policy gradient and EXP3) and we show that some standard assumptions from optimization theory are violated in these problems. We present theoretical results showing that, at least for bandit problems, curvature and noise are not sufficient to explain the learning dynamics and that seemingly innocuous choices like the baseline can determine whether an algorithm converges. These theoretical findings match our empirical evaluation, which we extend to multi-state MDPs.
\end{abstract}

\section{Introduction}
In the standard multi-arm bandit setting~\citep{robbins1952some}, an agent needs to choose, at each timestep $t$, an arm $a_t \in \{1, ..., n\}$ to play, receiving a potentially stochastic reward $r_t$ with mean $\mu_{a_t}$. The goal of the agent is usually to maximize the total sum of rewards, $\sum_{i=1}^T r_t$, or to maximize the average performance at time $T$, $\E_{i\sim \pi} \mu_i$ with $\pi$ being the probability of the agent of drawing each arm~\citep{bubeck2012regret}. While the former measure is often used in the context of bandits,\footnote{The objective is usually presented as regret minimization.} $\E_{i\sim \pi} \mu_i$ is more common in the context of Markov Decision Processes (MDPs), which have multi-arm bandits as a special case.

In this paper we focus on techniques derived from stochastic optimization principles, such as EXP3~\citep{auer2002nonstochastic, seldin2013evaluation}. %
Despite the fact that they have higher regret in the non-adversarial setting than techniques explicitly tailored to minimize regret in bandit problems, like UCB~\citep{agrawal1995sample} or Thompson sampling~\citep{russo2017tutorial}, they naturally extend to the MDP setting, where they are known as \emph{policy gradient} methods.

We analyze the problem of learning to maximize the average reward, $J$, by gradient ascent:
\begin{align}
    \theta^\ast &= \arg\max_\theta J(\theta)
    = \arg\max_\theta \sum_{a} \pi_\theta(a) \mu_a \; , \label{eq:bandit_loss} 
\end{align} 
with $\mu_a$ being the average reward of arm $a$.
In this case, we are mainly interested in outputting an effective policy at the end of the optimization process, without explicitly considering the performance of intermediary policies.%

Optimization theory predicts that the convergence speed of stochastic gradient methods will be affected by the variance of the gradient estimates and by the geometry of the function $J$, represented by its curvature. Roughly speaking, the geometry dictates how effective true gradient ascent is at optimizing $J(\theta)$ while the variance can be viewed as a penalty, capturing how much slower the optimization process is by using noisy versions of this true gradient. More concretely, doing one gradient step with stepsize $\alpha$, using a stochastic estimate $g_t$ of the gradient, leads to \citep{bottou2018optimization}:
\begin{align*}\E[J(\theta_{t+1})] - J(\theta_t) &\geq (\alpha - \tfrac{L \alpha^2}{2}) \|\E [g_t] \|^2_2  - \tfrac{ L \alpha^2}{2} \text{Var}[g_t ],\end{align*}
when $J$ is $L$-smooth, i.e. its gradients are $L$-Lipschitz.

As large variance has been identified as an issue for policy gradient (PG) methods, many works have focused on reducing the noise of the updates. One common technique is the use of control variates~\citep{greensmith2004variance, hofmann2015variance}, referred to as \emph{baselines} in the context of RL. These baselines $b$ are subtracted from the observed returns to obtain shifted returns, $r(a_i) - b$, and do not change the expectation of the gradient. In MDPs, they are typically state-dependent.
While the value function is a common choice, previous work showed that the minimum-variance baseline for the REINFORCE \citep{williams1992simple} estimator is different and involves the norm of the gradient~\citep{peters2008reinforcement}.
Reducing variance has been the main motivation for many previous works on baselines~\citep[e.g.,][]{gu2016q, liu2017action, grathwohl2017backpropagation, wu2018variance, cheng2020trajectory}, but the influence of baselines on other aspects of the optimization process has hardly been studied. We take a deeper look at baselines and their effects on optimization.

\subsubsection*{Contributions}
We show that baselines can impact the optimization process 
beyond variance reduction and lead to qualitatively different learning curves, even when the variance of the gradients is the same. 
For instance, given two baselines with the same variance, the more negative baseline promotes \textit{committal} behaviour where a policy quickly tends towards a deterministic one, while the more positive baseline leads to \textit{non-committal} behaviour, where the policy retains higher entropy for a longer period.

Furthermore, we show that \textbf{the choice of baseline can even impact the convergence of natural policy gradient} (NPG), something variance cannot explain. In particular, we construct a three-armed bandit where using the baseline minimizing the variance can lead to convergence to a deterministic, sub-optimal policy for any positive stepsize, while another baseline, with larger variance, guarantees convergence to the optimal policy. As such a behaviour is impossible under the standard assumptions in optimization, this result shows how these assumptions may be violated in practice. It also provides a counterexample to the convergence of NPG algorithms in general, a popular variant with much faster convergence rates than vanilla PG when using the true gradient in tabular MDPs~\citep{agarwal2019optimality}. 

Further, \textbf{we identify on-policy sampling as a key factor to these convergence issues} as it induces a vicious cycle where making bad updates can lead to worse policies, in turn leading to worse updates. A natural solution is to break the dependency between the sampling distribution and the updates through off-policy sampling. We show that ensuring all actions are sampled with sufficiently large probability at each step is enough to guarantee convergence in probability. Note that this form of convergence is stronger than convergence of the expected iterates, a more common type of result \citep[e.g.,][]{mei2020global,agarwal2019optimality}.

We also perform an empirical evaluation on multi-step MDPs, showing that baselines have a similar impact in that setting. We observe \textbf{a significant impact on the empirical performance} of agents when using two different sets of baselines yielding the same variance, once again suggesting that learning dynamics in MDPs are governed by more than the curvature of the loss and the variance of the gradients.

\begin{figure*}[t]
\label{fig:npg_simplex}
\centering
  \begin{subfigure}[b]{0.24\linewidth}
    \includegraphics[width=\textwidth]{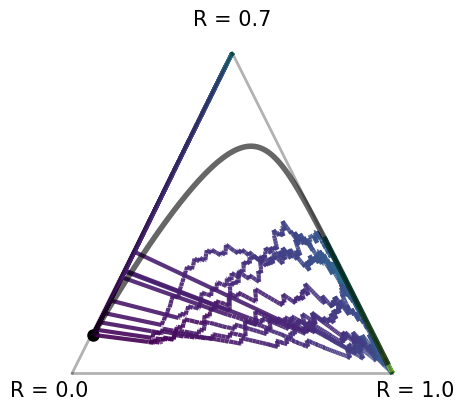}
    \caption{$b^-_\theta = b^*_\theta - \nicefrac{1}{2} $}
    \label{fig:-1}
  \end{subfigure}
  \hfill
    \begin{subfigure}[b]{0.24\linewidth}
    \includegraphics[width=\textwidth]{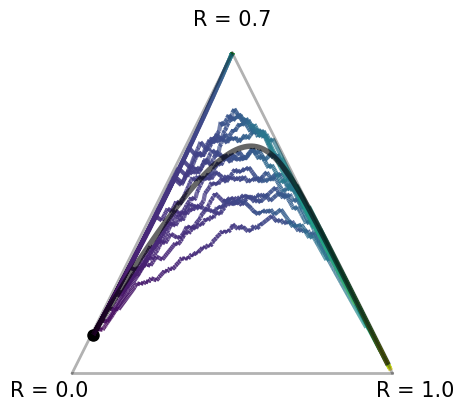}
    \caption{$b_\theta = b^*_\theta$}
    \label{fig:0}
  \end{subfigure}
  \hfill
  \begin{subfigure}[b]{0.24\linewidth}
    \includegraphics[width=\textwidth]{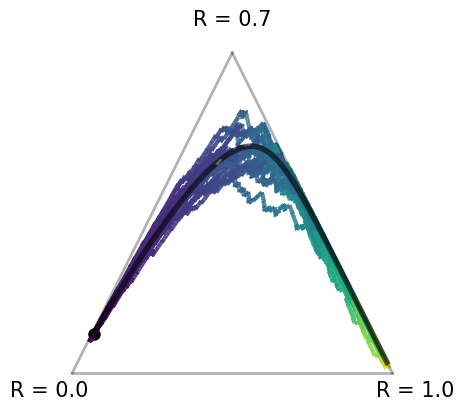}
    \caption{$b^+_\theta = b^*_\theta + \nicefrac{1}{2}$}
    \label{fig:1}
  \end{subfigure}
  \hfill
  \begin{subfigure}[b]{0.24\linewidth}
    \includegraphics[width=\textwidth]{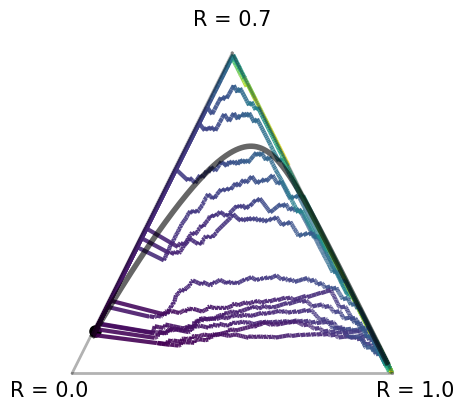}
    \caption{$b_\theta=V^{\pi_\theta}$}
    \label{fig:v}
  \end{subfigure}
  \caption{ \label{fig:trajectories}  We plot 15 different trajectories of natural policy gradient with softmax parameterization, when using various baselines, on a 3-arm bandit problem with rewards $(1,0.7,0)$ and stepsize $\alpha = 0.025$ and $\theta_0 = (0, 3, 5)$. The black dot is the initial policy and colors represent time, from purple to yellow. The black line is the trajectory when following the true gradient (which is unaffected by the baseline). Different values of $\epsilon$ denote different perturbations to the minimum-variance baseline. We see some cases of convergence to a suboptimal policy for both $\epsilon = -\nicefrac{1}{2}$ and $\epsilon = 0$. This does not happen for the larger baseline $\epsilon = \nicefrac{1}{2}$ or the value function as baseline. Figure made with Ternary~\citep{pythonternary}.}

\end{figure*}

\section{Baselines, learning dynamics \& exploration}
\label{sec:committal}

The problem defined in Eq.~\ref{eq:bandit_loss} can be solved by gradient ascent. Given access only to samples, the true gradient cannot generally be computed and the true update is replaced with a stochastic one, resulting in the following update:
\begin{align}
    \theta_{t+1} &= \theta_t + \frac{\alpha}{N} \sum_i r(a_i) \nabla_\theta \log \pi_\theta(a_i) \label{eq:mc_update} \; ,
\end{align}
where $a_i$ are actions drawn according to the agent's current policy $\pi_\theta$, $\alpha$ is the stepsize, and $N$, which can be 1, is the number of samples used to compute the update.
To reduce the variance of this estimate without introducing bias, we can introduce a baseline $b$, resulting in the gradient estimate $(r(a_i) - b) \nabla_\theta \log \pi_\theta(a_i)$.

While the choice of baseline is known to affect the variance, we show that baselines can also lead to qualitatively different behaviour of the optimization process, even when the variance is the same. This difference cannot be explained by the expectation or variance, quantities which govern the usual bounds for convergence rates~\citep{bottou2018optimization}.

\subsection{Committal and non-committal behaviours}

To provide a complete picture of the optimization process, we analyze the evolution of the policy during optimization.
We start in a simple setting, a deterministic three-armed bandit, where it is easier to produce informative visualizations. 

To eliminate variance as a potential confounding factor, we consider different baselines with the same variance. We start by computing the baseline leading to the minimum-variance of the gradients for the algorithm we use. For vanilla policy gradient, we have $b^*_\theta = \frac{\E[r(a_i) \|\nabla \log \pi_\theta (a_i)\|^2_2 ] }{\E[ \|\nabla \log \pi_\theta (a_i) \|^2_2] } $ ~\citep{peters2008reinforcement, greensmith2004variance} (see Appendix~\ref{app:optimal_baseline} for details and the NPG version). Note that this baseline depends on the current policy and changes throughout the optimization. As the variance is a quadratic function of the baseline, the two baselines $b^+_\theta = b^*_\theta + \epsilon$ and $b^-_\theta = b^*_\theta - \epsilon$ result in gradients with the same variance (see Appendix \ref{app:var_perturbed_baseline} for details). Thus, we use these two perturbed baselines to demonstrate that there are phenomena in the optimization process that variance~cannot~explain.

Fig.~\ref{fig:trajectories} presents fifteen learning curves on the probability simplex representing the space of possible policies for the three-arm bandit, when using NPG and a softmax parameterization. We choose $\epsilon = \nicefrac{1}{2}$ to obtain two baselines with the same variance: $b^+_\theta = b^*_\theta + \nicefrac{1}{2}$ and $b^-_\theta = b^*_\theta - \nicefrac{1}{2}$.

Inspecting the plots, the learning curves for $\epsilon = -\nicefrac{1}{2}$ and $\epsilon = \nicefrac{1}{2}$ are qualitatively different, even though the gradient estimates have the same variance. For $\epsilon = -\nicefrac{1}{2}$, the policies quickly reach a deterministic policy (i.e., a neighborhood of a corner of the probability simplex), which can be suboptimal, as indicated by the curves ending up at the policy choosing action 2. On the other hand, for $\epsilon=\nicefrac{1}{2}$, every learning curve ends up at the optimal policy, although the convergence might be slower. The learning curves also do not deviate much from the curve for the true gradient. 
Again, these differences cannot be explained by the variance since the baselines result in identical variances.

Additionally, for $b_\theta=b^*_\theta$, the learning curves spread out further. Compared to $\epsilon=\nicefrac{1}{2}$, some get closer to the top corner of the simplex, leading to convergence to a suboptimal solution, suggesting that the minimum-variance baseline may be worse than other, larger baselines. In the next section, we theoretically substantiate this and show that, for NPG, it is possible to converge to a suboptimal policy with the minimum-variance baseline; but there are larger baselines that guarantee convergence to an optimal~policy.

We look at the update rules to explain these different behaviours. When using a baseline $b$ with NPG, sampling $a_i$ results in the update
\begin{align*}
    \theta_{t+1} &= \theta_t + \alpha  [r(a_i)-b] F_\theta^{-1} \nabla_\theta \log \pi_\theta(a_i) \label{eq:npg_bandit_update} \\
    &= \theta_t + \alpha \frac{r(a_i)-b}{\pi_\theta(a_i)} \mathbbm{1}_{a_i} + \alpha\lambda e
\end{align*}
where $F_\theta^{-1} = \E_{a \sim \pi} [\nabla \log \pi_\theta (a) \nabla \log \pi_\theta (a)^\top]$, $\mathbbm{1}_{a_i}$ is a one-hot vector with $1$ at index $i$, and $\lambda e$ is a vector containing $\lambda$ in each entry.  The second line follows for the softmax policy (see Appendix \ref{app:npg_softmax_bandit}) and $\lambda$ is arbitrary since shifting $\theta$ by a constant does not change the policy.

Thus, supposing we sample action $a_i$, if $r(a_i) - b$ is positive, which happens more often when the baseline $b$ is small (more negative), the update rule will increase the probability $\pi_\theta(a_i)$. This leads to an increase in the probability of taking the actions the agent took before, regardless of their quality (see Fig.\ref{fig:-1} for $\epsilon=-\nicefrac{1}{2}$). Because the agent is likely to choose the same actions again, we call this \textit{committal} behaviour.

While a smaller baseline leads to committal behaviour, a larger (more positive) baseline makes the agent second-guess itself. If $r(a_i) - b$ is negative, which happens more often when $b$ is large, the parameter update decreases the probability $\pi_\theta(a_i)$ of the sampled action $a_i$, reducing the probability the agent will re-take the actions it just took, while increasing the probability of other actions. This might slow down convergence but it also makes it harder for the agent to get stuck. This is reflected in the $\epsilon = \nicefrac{1}{2}$ case (Fig.\ref{fig:1}), as all the learning curves end up at the optimal policy. We call this \textit{non-committal} behaviour.

While the previous experiments used perturbed variants of the minimum-variance baseline to control for the variance, this baseline would usually be infeasible to compute in more complex MDPs. Instead, a more typical choice of baseline would be the value function~\citep[Ch.~13]{sutton18book}, which we evaluate in Fig. \ref{fig:v}.
Choosing the value function as a baseline generated trajectories converging to the optimal policy, even though their convergence may be slow, despite it not being the minimum variance baseline.
The reason becomes clearer when we write the value function as $\displaystyle V^\pi = b^\ast_\theta - \tfrac{\textrm{Cov}(r, \|\nabla \log \pi\|^2)}{\E[\|\nabla \log \pi\|^2]}$ (see Appendix \ref{app:value_minvar_baseline}). The term $\textrm{Cov}(r, \|\nabla \log \pi\|^2)$ typically becomes negative as the gradient becomes smaller on actions with high rewards during the optimization process, leading to the value function being an optimistic baseline, justifying a choice often made by practitioners. 

Additional empirical results can be found in Appendix \ref{app:exp_3armbandit} for natural policy gradient and vanilla policy gradient for the softmax parameterization. Furthermore, we explore the use of projected stochastic gradient ascent and directly optimizing the policy probabilities $\pi_\theta(a)$. We find qualitatively similar results in all three cases; baselines can induce \textit{committal} and \textit{non-committal} behaviour. 

\section{Convergence to suboptimal policies with natural policy gradient (NPG)}

We empirically showed that PG algorithms can reach suboptimal policies and that the choice of baseline can affect the likelihood of this occurring. 
In this section, we provide theoretical results proving that it is indeed possible to converge to a suboptimal policy when using NPG. 
We discuss how this finding fits with existing convergence results and why standard assumptions are not satisfied in this setting. 

\subsection{A simple example}\label{sec:divergence_example}

Standard convergence results assume access to the true gradient~\citep[e.g.,][]{agarwal2019optimality} or, in the stochastic case, assume that the variance of the updates is uniformly bounded for all parameter values~\citep[e.g.,][]{bottou2018optimization}.
These assumptions are in fact quite strong and are violated in a simple two-arm bandit problem with fixed rewards. 
Pulling the optimal arm gives a reward of $r_1 = +1$, while pulling the suboptimal arm leads to a reward of $r_0 = 0$. We use the sigmoid parameterization and call $p_t = \sigma(\theta_t)$ the probability of sampling the optimal arm at time $t$.

Our stochastic estimator of the natural gradient is %
\begin{align*}
g_t = \left\{
                \begin{array}{l}
                  \frac{1 - b}{p_t}, \text{with probability}\ p_t \\
                  \frac{b}{1-p_t}, \text{with probability}\ 1-p_t,
                \end{array}
              \right.
\end{align*}
where $b$ is a baseline that does not depend on the action sampled at time $t$ but may depend on $\theta_t$. 
By computing the variance of the updates, $\textrm{Var}[g_t] =   \tfrac{(1-p_t - b)^2}{p_t (1-p_t)}$, we notice it is unbounded when the policy becomes deterministic, i.e. $p_t \to 0$ or $p_t \to 1$, violating the assumption of uniformly bounded variance, unless $b = 1-p_t$, which is the optimal baseline.
Note that using vanilla (non-natural) PG would, on the contrary, yield a bounded variance. In fact, we prove a convergence result in its favour in Appendix \ref{app:theory_2arm} (Prop. \ref{prop_vpg_cv}).

For NPG, the proposition below establishes potential convergence to a suboptimal arm and we demonstrate this~empirically~in~Fig.~\ref{fig:divergence_2arm_bandit}.

\begin{restatable}[]{proposition}{divtwoarms}
Consider a two-arm bandit with rewards $1$ and $0$ for the optimal and suboptimal arms, respectively. Suppose we use natural policy gradient starting from $\theta_0$, with a fixed baseline $b < 0$, and fixed stepsize $\alpha > 0$. If the policy samples the optimal action with probability $\sigma(\theta)$, then the probability of picking the suboptimal action forever and having $\theta_t$ go to $-\infty$ is strictly positive.
Additionally, if $\theta_0 \le 0$, we have
\[ P(\textup{suboptimal action forever}) \ge (1-e^{\theta_0}) (1-e^{\theta_0 + \alpha b})^{-\frac{1}{\alpha b}}. \]
\label{proposition_divergence}
\end{restatable}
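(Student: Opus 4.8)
The plan is to condition on the event that the optimal arm is never sampled and then analyze the resulting \emph{deterministic} trajectory of $\theta$.

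\textbf{Step 1 (reduction to a deterministic recursion).} Let $E$ be the event ``the suboptimal arm is chosen at every step'', and let $A_t$ be the event ``suboptimal arm at step $t$''. On $A_0\cap\dots\cap A_{t-1}$ the only update ever applied is the suboptimal one, $\theta_{t+1}=\theta_t+\alpha b/(1-\sigma(\theta_t))$, so $\theta_t$ takes a deterministic value; denote it $\tilde\theta_t$, defined by $\tilde\theta_0=\theta_0$ and $\tilde\theta_{t+1}=\tilde\theta_t+\alpha b/(1-\sigma(\tilde\theta_t))=\tilde\theta_t+\alpha b\,(1+e^{\tilde\theta_t})$. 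Since $P(A_t\mid A_0\cap\dots\cap A_{t-1})=1-\sigma(\tilde\theta_t)$, the chain rule and continuity of measure give $P(E)=\prod_{t=0}^{\infty}\bigl(1-\sigma(\tilde\theta_t)\bigr)$. Moreover, on $E$ we have $\theta_t=\tilde\theta_t$, so once we show $\tilde\theta_t\to-\infty$ the event ``suboptimal action forever and $\theta_t\to-\infty$'' coincides with $E$, and it suffices to lower bound $P(E)$.

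\textbf{Step 2 (monotonicity, divergence, positivity).} Because $b<0$ and $1-\sigma(\tilde\theta_t)\in(0,1)$, the increment $\alpha b/(1-\sigma(\tilde\theta_t))$ is at most $\alpha b<0$; hence $(\tilde\theta_t)$ is strictly decreasing and $\tilde\theta_t\le\theta_0+t\,\alpha b\to-\infty$. Consequently $\sigma(\tilde\theta_t)\le e^{\tilde\theta_t}\le e^{\theta_0}e^{t\alpha b}$, whose sum over $t$ is a convergent geometric series since $e^{\alpha b}<1$. A standard fact about infinite products is that $\prod_t(1-a_t)$ with $a_t\in[0,1)$ has a strictly positive limit iff $\sum_t a_t<\infty$; applying this with $a_t=\sigma(\tilde\theta_t)$ shows $P(E)>0$, which is the first assertion.

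\textbf{Step 3 (the explicit bound).} Assume now $\theta_0\le0$, so that every factor lies in $(\tfrac12,1)$. Using $1-\sigma(x)=1/(1+e^x)$ together with $e^{\tilde\theta_t}\le e^{\theta_0+t\alpha b}$ gives $P(E)\ge\prod_{t\ge0}\bigl(1+e^{\theta_0}e^{t\alpha b}\bigr)^{-1}$. Take logarithms, isolate the $t=0$ term via $-\log(1+e^{\theta_0})\ge\log(1-e^{\theta_0})$ (valid since $(1+x)(1-x)\le1$), which produces the factor $1-e^{\theta_0}$; for the tail, combine $\log(1+x)\le x$ with a Riemann-sum/integral comparison of $\sum_{t\ge1}e^{\theta_0+t\alpha b}$ against $\int e^{\theta_0+x\alpha b}\,dx$ (equivalently, a chord bound $\log(1-y)\ge\tfrac{\log(1-y_0)}{y_0}y$ on $[0,y_0]$ with $y_0=e^{\theta_0+\alpha b}$) to assemble the remaining terms into a single power of $1-e^{\theta_0+\alpha b}$ with exponent $-1/(\alpha b)$.

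\textbf{Where the work is.} Steps 1--2 are essentially bookkeeping. The subtle part is Step 3: matching the exact exponent $-1/(\alpha b)$ rather than the weaker $1/(1-e^{\alpha b})$ that the crudest estimates produce. Indeed, the naive replacement $P(E)\ge\prod_t(1-e^{\theta_0+t\alpha b})$ is already \emph{not} enough, so one must keep the second-order slack coming from $\tilde\theta_t$ lying strictly below the linear envelope $\theta_0+t\alpha b$ — equivalently, from $\sigma(x)<e^x$ — tracked through the correction $\theta_0+t\alpha b-\tilde\theta_t=-\alpha b\sum_{s<t}e^{\tilde\theta_s}$, and to balance this carefully against the summation/integral estimate. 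That optimization of the constant is the one place the argument needs genuine care.
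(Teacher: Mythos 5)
Your Steps 1 and 2 are correct and are essentially the paper's argument: on the all-suboptimal event the trajectory is deterministic, each pull of the bad arm moves $\theta$ left by at least $\alpha|b|$, so $\tilde\theta_t\le\theta_0+t\alpha b$, and $P(E)=\prod_{t\ge0}\bigl(1-\sigma(\tilde\theta_t)\bigr)>0$ because $\sum_t\sigma(\tilde\theta_t)\le e^{\theta_0}\sum_t e^{t\alpha b}<\infty$. Your version even covers $\theta_0>0$ directly, where the paper resorts to a finite-prefix argument, so the first assertion is fully proved.

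The genuine gap is Step 3, which you sketch but do not carry out --- and which cannot be carried out as described. Once the $t=0$ factor is spent through $(1+e^{\theta_0})^{-1}\ge 1-e^{\theta_0}$, you need the tail $\prod_{t\ge1}(1+e^{\theta_0+t\alpha b})^{-1}$ to dominate $(1-e^{\theta_0+\alpha b})^{-1/(\alpha b)}$. The tools you list give at best the exponent $\tfrac{1}{1-e^{\alpha b}}$ (your chord bound), which is strictly larger than $-\tfrac{1}{\alpha b}$ since $1-e^{\alpha b}<-\alpha b$, hence a strictly weaker bound, exactly as you concede. The rescue you propose --- exploiting the slack $\theta_0+t\alpha b-\tilde\theta_t=-\alpha b\sum_{s<t}e^{\tilde\theta_s}$ --- cannot close this, because that slack is second order in $c=e^{\theta_0}$ while the deficit is first order: writing $q=e^{\alpha b}$, even the exact tail equals $1-\tfrac{cq}{1-q}+O(c^2)$, whereas the target factor is $1-\tfrac{cq}{\alpha|b|}+O(c^2)$, and $1-q<\alpha|b|$ for every $\alpha|b|>0$. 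Consequently the displayed inequality itself fails once $\theta_0$ is sufficiently negative; for instance with $\theta_0=-5$, $\alpha=1$, $b=-1$ the exact probability is $\approx 0.98945$ while the stated bound evaluates to $\approx 0.99080$. For what it is worth, this is also precisely where the paper's own derivation slips: it upper-bounds $\sum_{t\ge1}f(t)$ by $\int_1^\infty f$ for a decreasing positive $f$, which is the reversed direction; the corrected comparison (against $\int_0^\infty$) yields $(1-e^{\theta_0})^{1-\frac{1}{\alpha b}}$, and your crude estimates yield the valid alternative $(1-e^{\theta_0})(1-e^{\theta_0+\alpha b})^{\frac{1}{1-e^{\alpha b}}}$. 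So: the qualitative claim is established by your argument, but the quantitative bound with exponent $-\tfrac{1}{\alpha b}$ is not, and no amount of the ``careful balancing'' you defer in Step 3 will produce it.
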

\vspace{-0.8cm}
\begin{proof}
All the proofs may be found in the appendix.
\end{proof}

\begin{figure}[t!]
\centering
\begin{subfigure}[b]{0.32\linewidth}   
    \includegraphics[width=\textwidth]{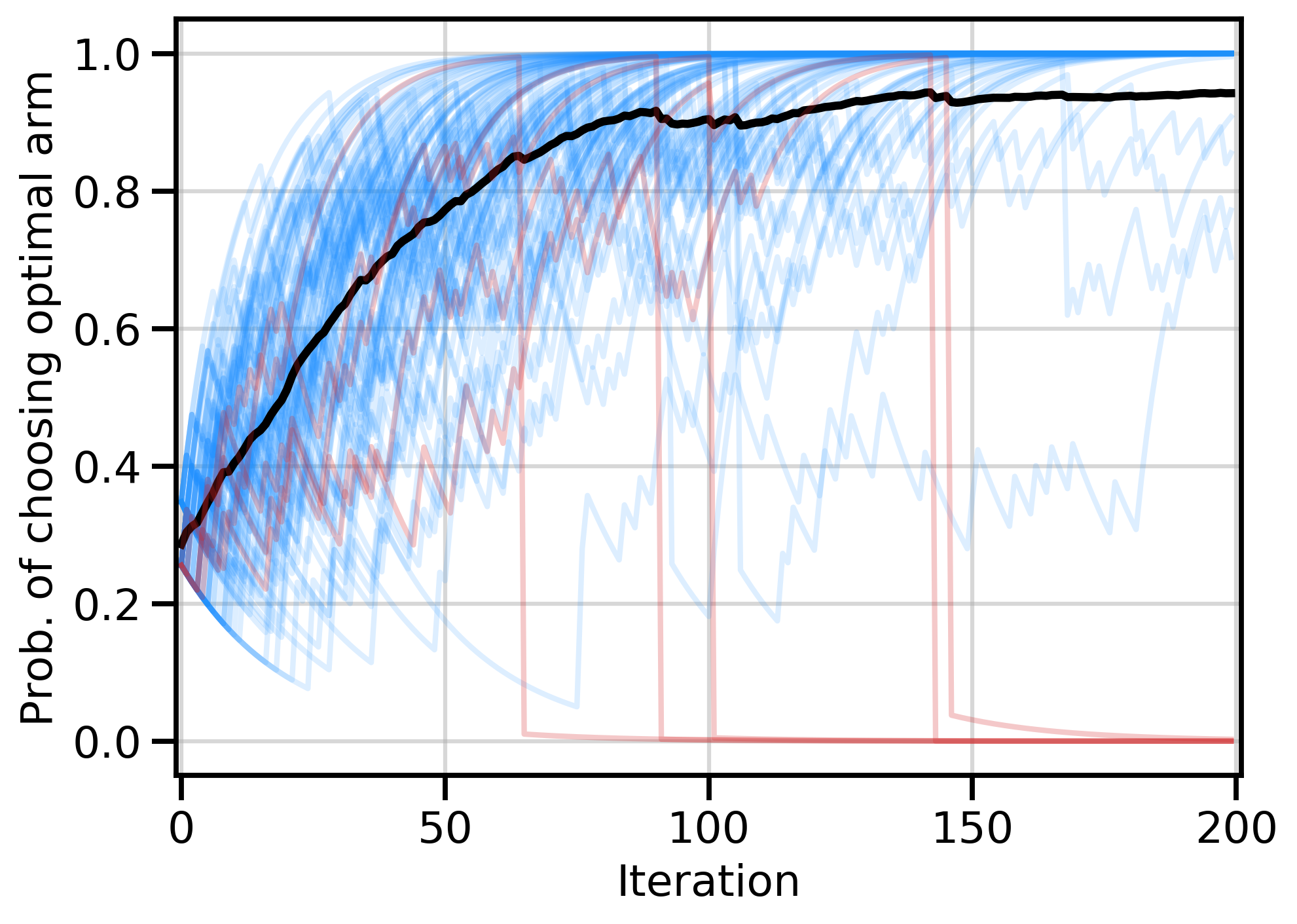}
    \caption{$\alpha=0.05$}
\end{subfigure}
\begin{subfigure}[b]{0.32\linewidth}   
    \includegraphics[width=\textwidth]{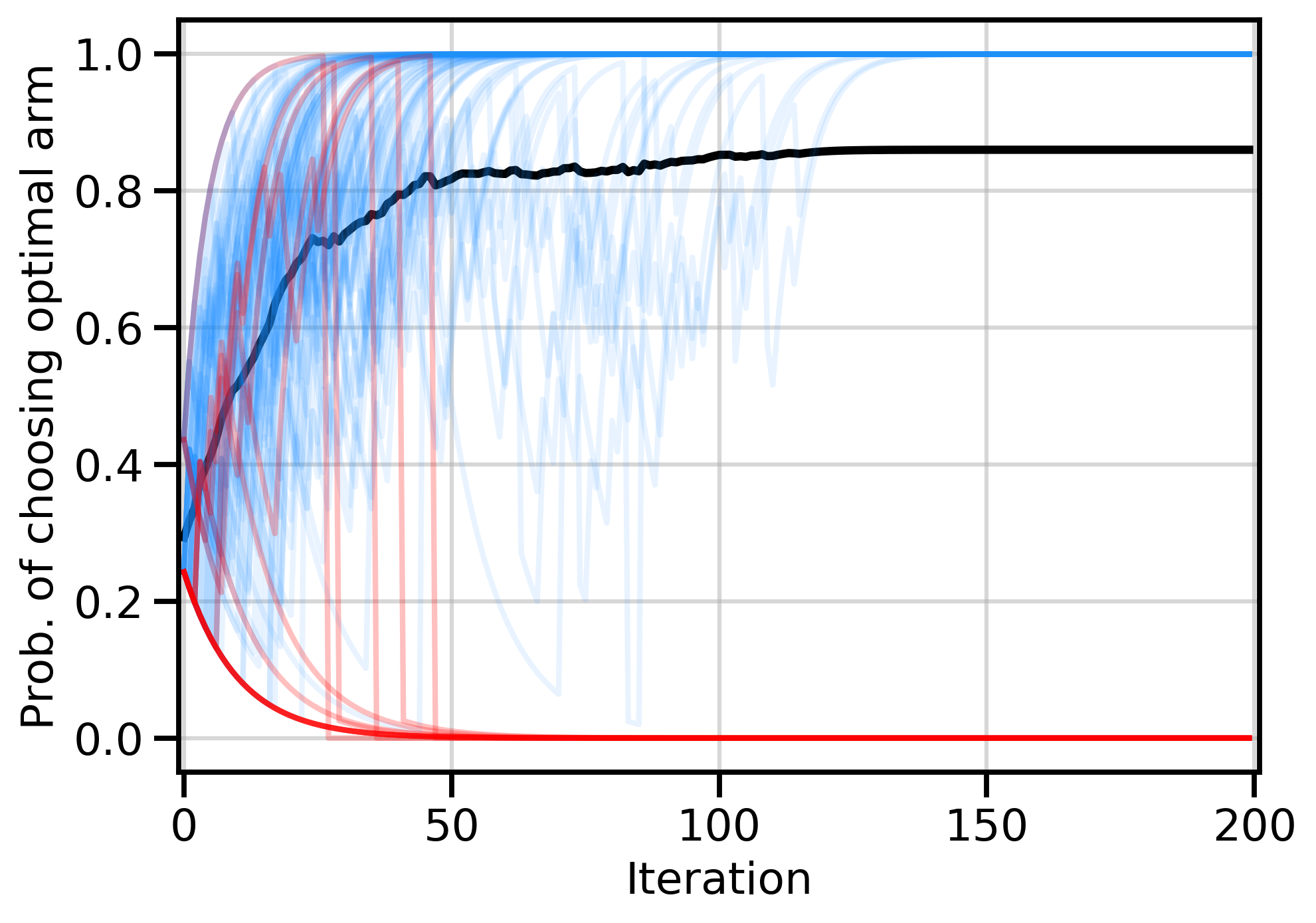}
    \caption{$\alpha=0.1$}
\end{subfigure}
\begin{subfigure}[b]{0.32\linewidth}   
    \includegraphics[width=\textwidth]{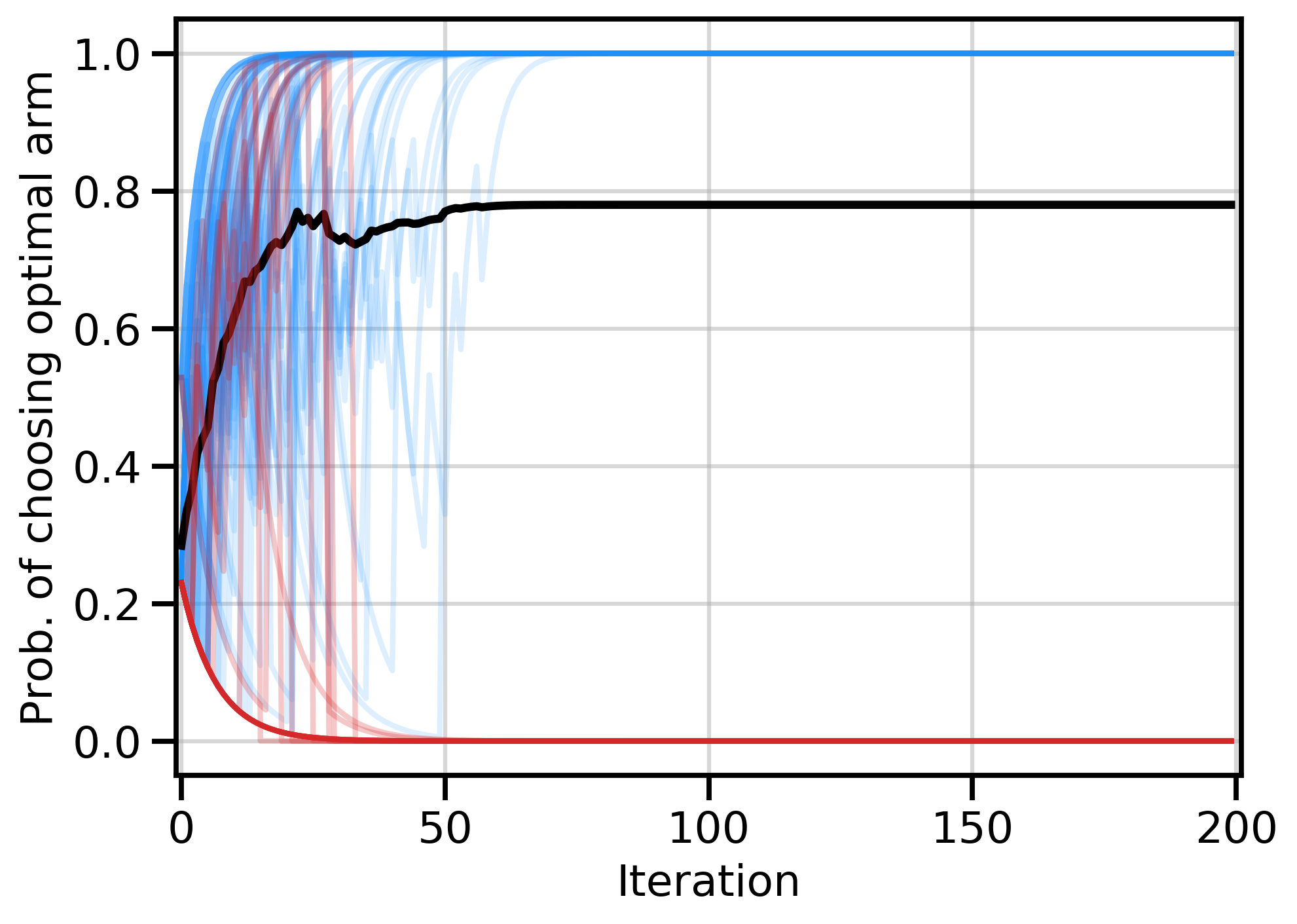}
    \caption{$\alpha=0.15$}
\end{subfigure}

\caption{Learning curves for 100 runs of 200 steps, on the two-arm bandit, with baseline $b\!=\!-1$ for three different stepsizes $\alpha$. \emph{Blue:} Curves converging to the optimal policy. \emph{Red:} Curves  converging to a suboptimal policy. \emph{Black:} Avg. performance. 
The number of runs that converged to the suboptimal solution are 5\%, 14\% and 22\% for the three $\alpha$'s. Larger $\alpha$'s are more prone to getting stuck at a suboptimal solution but settle on a deterministic policy more quickly.}
\label{fig:divergence_2arm_bandit}
\end{figure}

The updates provide some intuition as to why there is convergence to suboptimal policies. The issue is the \textit{committal} nature of the baseline. Choosing an action leads to an increase of that action's probability, even if it is a poor choice. Choosing the suboptimal arm leads to a decrease in $\theta$ by $\tfrac{\alpha b}{1-p_t}$, thus increasing the probability the same arm is drawn again and further decreasing $\theta$. By checking the probability of this occurring forever, $P(\text{suboptimal arm forever}) = \prod_{t=1}^\infty (1-p_t)$, we show that $1-p_t$ converges quickly enough to 1 that the infinite product is nonzero, showing it is possible to get trapped choosing the wrong arm forever~(Prop. \ref{proposition_divergence}), and $\theta_t \to -\infty$ as $t$ grows.

This issue could be solved by picking a baseline with lower variance. For instance, the minimum-variance baseline $b = 1 - p_t$ leads to $0$ variance and both possible updates are equal to $+\alpha$, guaranteeing that $\theta \to +\infty$, thus convergence. In fact, any baseline $b \in (0,1)$ suffices since both updates are positive and greater than $\alpha \min(b, 1-b)$. However, this is not always the case, as we show in the next section.

To decouple the impact of the variance with that of the committal nature of the baseline, Prop.~\ref{prop:2armed-perturbedminvar} analyzes the learning dynamics in the two-arm bandit case for perturbations of the optimal baseline, i.e. we study baselines of the form $b = b^* + \epsilon$ and show how $\epsilon$, and particularly its sign, affects learning. Note that, because the variance is a quadratic function with its minimum in $b^*$, both $+\epsilon$ and $-\epsilon$ have the same variance. Our findings can be summarized as follows:
\begin{proposition}
    \label{prop:2armed-perturbedminvar}
    For the two-armed bandit defined in Prop.~\ref{proposition_divergence}, when using a perturbed min-variance baseline $b = b^* + \epsilon$, the value of $\epsilon$ determines the learning dynamics as follows:
    \begin{itemize}[noitemsep, topsep=0pt]
        \item For $\epsilon < -1$, there is a positive probability of converging to the suboptimal arm.
        \item For $\epsilon \in (-1, 1)$, we have convergence in probability to the optimal policy.
        \item For $\epsilon \ge 1$, the supremum of the iterates goes to $+\infty$ in probability.
    \end{itemize}
\end{proposition}

While the proofs can be found in Appendix~\ref{sec:appendix_perturbed_minvar}, we provide here some intuition behind these results.

For $\epsilon < -1$, we reuse the same argument as for $b<0$ in Prop.~\ref{proposition_divergence}. The probability of drawing the correct arm can decrease quickly enough to lead to convergence to the suboptimal arm.

For $\epsilon \in (-1,1)$, the probability of drawing the correct arm cannot decrease too fast. Hence, although the updates, as well as the variance of the gradient estimate, are potentially unbounded, we still have convergence to the optimal solution in probability.

Finally, for $\eps \ge 1$, we can reuse an intermediate argument from the $\epsilon \in (0,1)$ case to argue that for any threshold $C$, the parameter will eventually exceed that threshold. For $\epsilon \in (0,1)$, once a certain threshold is crossed, the policy is guaranteed to improve at each step. However, with a large positive perturbation, updates are larger and we lose this additional guarantee, leading to the weaker result.

We want to emphasize that not only we get provably different dynamics for $\epsilon < -1$ and $\eps \ge 1$, showing the importance of the sign of the perturbation, but that there also is a sharp transition around $|\epsilon| = 1$, which cannot be captured solely by the variance.

\subsection{Reducing variance with baselines can be detrimental}
\label{sec:reducing_detrimental}
As we saw with the two-armed bandit, the direction of the updates is important in assessing convergence. More specifically, problems can arise when the choice of baseline induces committal behaviour.
We now show a different bandit setting where committal behaviour happens even when using the minimum-variance baseline, thus leading to convergence to a suboptimal policy. Furthermore, we design a better baseline which ensures all updates move the parameters towards the optimal policy. This cements the idea that the quality of parameter updates must not be analyzed in terms of variance but rather in terms of the probability of going in a bad direction, since a baseline that induces higher variance leads to convergence while the minimum-variance baseline does not.
The following theorem summarizes this.

\begin{restatable}[]{theorem}{threearmedbandit}
There exists a three-arm bandit where using the stochastic natural gradient on a softmax-parameterized policy with the minimum-variance baseline can lead to convergence to a suboptimal policy with probability $\rho > 0$, and there is a different baseline (with larger variance) which results in convergence to the optimal policy with probability~1.
\label{proposition_threearmedbandit}
\end{restatable}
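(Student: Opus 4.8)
The plan is to construct an explicit MDP---most naturally a three-armed bandit, mirroring the experiments in Section~\ref{sec:committal}---and exhibit two baselines with the claimed dichotomy. First I would set up the bandit with rewards $r_1 > r_2 > r_3$ (the values $(1, 0.7, 0)$ used earlier are a natural candidate) and a softmax-parameterized policy $\pi_\theta$ over three actions. For the stochastic natural policy gradient, I would write down the update: on sampling action $a_i$ with probability $p_i$, the natural gradient estimate in the $i$-th coordinate is proportional to $(r_i - b)/p_i$ (with the softmax Fisher metric giving the clean cancellation analogous to the two-armed case), so sampling action $i$ pushes $\theta_i$ up by roughly $\alpha(r_i - b)$ and leaves the relative preference for the other actions essentially governed by whether $r_i - b$ is positive. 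The key structural observation is the one already flagged in the text: the sign of $r_i - b$ controls whether an action becomes self-reinforcing (committal) or self-correcting.

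For the negative half of the statement, I would compute the minimum-variance baseline $b^*$ for this bandit and show it can fall strictly between $r_2$ and $r_1$ (or below $r_2$), so that $r_2 - b^* < 0$ is possible but, crucially, that there is a region of policy space near the vertex $e_2$ (the deterministic suboptimal policy choosing action $2$) where repeatedly sampling action $2$ keeps $r_2 - b^*$ ... actually the cleaner route is: near $e_2$, $b^*$ is itself a weighted average dominated by the $r_2$ term, so $b^* \approx r_2$ and $r_2 - b^* $ can be slightly positive there, making action $2$ committal once the policy is close enough to $e_2$. Then, as in Proposition~\ref{proposition_divergence}, I would show that with positive probability the sample path enters this committal basin and gets trapped: bound $P(\text{action }2\text{ forever})$ from below by an infinite product $\prod_t (1 - (\text{probability of escaping at step }t))$ and show this product is strictly positive because the escape probabilities are summable (the $\theta_2$ coordinate diverges geometrically fast, so the non-action-$2$ mass shrinks fast enough). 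This gives $\rho > 0$.

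For the positive half, I would exhibit a fixed baseline $b$ with $r_2 < b < r_1$ (e.g. $b = 0.85$ for rewards $(1,0.7,0)$, echoing the "0.5 is best" observation in the four-rooms experiment). With this choice, $r_1 - b > 0$ while $r_2 - b < 0$ and $r_3 - b < 0$, so sampling either suboptimal arm \emph{decreases} that arm's relative probability and increases the optimal arm's probability, whereas sampling the optimal arm reinforces it. I would then argue almost-sure convergence to $e_1$: set up a supermartingale or Lyapunov argument on a quantity like $\log(p_2/p_1) + \log(p_3/p_1)$ (or $1 - p_1$) showing it has a strictly negative drift bounded away from zero as long as $p_1$ is bounded away from $1$, hence $p_1 \to 1$ a.s. The Borel--Cantelli / martingale-convergence machinery here is standard once the drift sign is pinned down; the main care is handling the fact that the step sizes in $\theta$-space are unbounded when probabilities get small, which actually \emph{helps} convergence in this direction rather than hurting it. Finally I would note this baseline indeed has strictly larger variance than $b^*$ by the quadratic-in-$b$ structure of the variance, completing the contrast.

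The main obstacle I anticipate is the negative direction---making the "trapped with positive probability" argument fully rigorous. Unlike the two-armed case where the analysis reduces to a single scalar $\theta$ and a clean telescoping product, here I must control a two-dimensional trajectory in logit space and verify that the minimum-variance baseline actually induces committal behavior near $e_2$ \emph{and} that the basin of attraction is entered with positive probability from a suitable initialization. Pinning down the sign and magnitude of $r_2 - b^*$ along the relevant trajectories, and getting a clean enough lower bound on the infinite product of survival probabilities, is where the real work lies; the positive direction and the variance comparison are comparatively routine.
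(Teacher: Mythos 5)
Your positive half is essentially sound and close to the paper's route: with a gap baseline $r_2 < b < r_1$, every sampled update increases $\theta_1$ or decreases $\theta_2,\theta_3$, so $\pi_t(1)\ge\pi_0(1)$ deterministically; the paper exploits exactly this monotonicity to get bounded martingale increments ($|X_t-X_{t-1}|\le \alpha\Delta_1/\pi_0(1)$) and then applies Azuma to $\theta_1 - \E[\theta_1]$ to conclude $\theta_1\to+\infty$ a.s.\ while the other logits stay bounded. Your Lyapunov/supermartingale sketch on $\log(p_2/p_1)$ can be made to work, but you should make the monotonicity step explicit, since it is what tames the unbounded increments rather than a generic drift argument.

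The genuine gap is in the negative half, and it is twofold. First, your mechanism is factually off: near the vertex $e_2$ the weight of $r_2$ in the minimum-variance baseline is $\pi_2\|\nabla\log\pi(a_2)\|^2$, and $\|\nabla\log\pi(a_2)\|^2=(1-\pi_2)^2+\pi_1^2+\pi_3^2$ vanishes quadratically in $1-\pi_2$ while the other weights vanish only linearly, so $b^*$ is \emph{not} dominated by the $r_2$ term; it tends to $(r_1+r_3)/2=0.5$, leaving a gap $r_2-b^*\approx 0.2$ (your conclusion that action 2 is committal there is right, but by the opposite mechanism and not "slightly positive"). Second, and more importantly, your plan hinges on showing the iterates enter this committal basin near $e_2$ with positive probability from the uniform initialization, a step you flag but do not resolve and which requires controlling a two-dimensional trajectory in logit space. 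The paper sidesteps both issues: at the uniform initialization one computes $b^*\approx 0.57 < 0.7 = r_2$, so the committal regime holds from step one; choosing the representative $\lambda=0$ of the (non-unique) natural-gradient direction, sampling action 2 changes only $\theta_2$, and an induction on the weight ratio $w_2/w_1$ shows that along the "always sample action 2" path the baseline can only decrease (toward $0.5$), so $r_2-b^*\ge\delta$ persists, $\theta_2$ grows by at least $\alpha\delta$ per step, $\pi_2=\sigma(\theta_2+\mathrm{const})$, and the infinite-product bound from Proposition~\ref{proposition_divergence} applies verbatim to give $\rho>0$. Without either this monotonicity-of-the-baseline induction or a rigorous "reach the basin" argument, your negative half does not go through; note also that the paper additionally treats the variant where the baseline minimizes the variance of the natural-gradient updates (minimum-norm solution), a case your sketch does not address.
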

The bandit used in this theorem is the one we used for the experiments depicted in Fig.~\ref{fig:trajectories}.
The key is that the minimum-variance baseline can be lower than the second best reward; so pulling the second arm will increase its probability and induce committal behaviour.
This can cause the agent to prematurely commit to the second arm and converge to the wrong policy.
On the other hand, using any baseline whose value is between the optimal reward and the second best reward, which we term a \textit{gap} baseline, will always increase the probability of the optimal action at every step, no matter which arm is drawn. Since the updates are sufficiently large at every step, this is enough to ensure convergence with probability 1, despite the higher variance compared to the minimum variance baseline.
The key is that whether a baseline underestimates or overestimates the second best reward can affect the algorithm convergence and this is more critical than the resulting variance of the gradient estimates.

As such, more than lower variance, good baselines are those that can assign positive effective returns to the good trajectories and negative effective returns to the others. These results cast doubt on whether finding baselines which minimize variance is a meaningful goal to pursue. The baseline can affect optimization in subtle ways, beyond variance, and further study is needed to identify the true causes of some improved empirical results observed in previous works. This importance of the sign of the returns, rather than their exact value, echoes with the cross-entropy method~\citep{de2005tutorial}, which maximizes the probability of the trajectories with the largest returns, regardless of their actual value.

\section{Off-policy sampling} 

So far, we have seen that \textit{committal} behaviour can be problematic as it can cause convergence to a suboptimal policy. This can be especially problematic when the agent follows a near-deterministic policy as it is unlikely to receive different samples which would move the policy away from the closest deterministic one, regardless of the quality of that policy.

Up to this point, we assumed that actions were sampled according to the current policy, a setting known as \emph{on-policy}. This setting couples the updates and the policy and is a root cause of the \textit{committal} behaviour: the update at the current step changes the policy, which affects the distribution of rewards obtained and hence the next updates. 
However, we know from the optimization literature that bounding the variance of the updates will lead to convergence \citep{bottou2018optimization}. As the variance becomes unbounded when the probability of drawing some actions goes to 0, a natural solution to avoid these issues is to sample actions from a behaviour policy that selects every action with sufficiently high probability.
Such a policy would make it impossible to choose the same, suboptimal action forever. 

\subsection{Convergence guarantees with IS}

Because the behaviour policy changed, we introduce importance sampling (IS) corrections to preserve the unbiased updates~\citep{kahn1951estimation, precup2000eligibility}. 
These changes are sufficient to guarantee convergence for any baseline:

\begin{restatable}[]{proposition}{is}
\label{lem:main_off-policy_IS}
Consider a $n$-armed bandit with stochastic rewards with bounded support and a unique optimal action. The behaviour policy $\mu_t$ selects action $i$ with probability $\mu_t(i)$ and let $\epsilon_t = \min_i \mu_t(i)$. When using NPG with importance sampling and a bounded baseline $b$, if $\lim_{t \to \infty} t \ \epsilon_t^2 = +\infty$ , then the target policy $\pi_t$ converges to the optimal policy in probability.
\end{restatable}
\begin{proof} 
\textit{(Sketch)}
Using Azuma-Hoeffding's inequality, we can show that for well chosen constants $\Delta_i, \delta$ and $C > 0$ ,
\begin{eqnarray*}
\sP\left( \theta_t^1 \ge  \theta_0^1 + \alpha \delta \Delta_1 t\right) %
&\ge& 1- \exp \left( -\frac{\delta^2 \Delta_1^2}{2 C^2}t \epsilon_t^2 \right)\\
\end{eqnarray*} 
where $\theta^1$ is the parameter associated to the optimal arm.
Thus if $\lim_{t\to\infty} t \epsilon_t^2 = +\infty$, the RHS goes to $1$. 
In a similar manner, we can upper bound $\sP\left( \theta_t^i \ge  \theta_0^i + \alpha \delta \Delta_i t\right)$ for all suboptimal arms, and 
applying an union bound, we get the desired result.
\end{proof}
\vspace{-3mm}

The condition on $\mu_t$ imposes a cap on how fast the behaviour policy can become deterministic: no faster than $t^{-1/2}$. Intuitively, this ensures each action is sampled sufficiently often and prevents premature convergence to a suboptimal policy.
The condition is satisfied for any sequence of behaviour policies which assign at least $\epsilon_t$ probability to each action at each step, such as $\epsilon$-greedy policies. It also holds if $\epsilon_t$ decreases over time at a sufficiently slow rate. By choosing as behaviour policy $\mu$ a linear interpolation between $\pi$ and the uniform policy, $\mu(a) = (1-\gamma) \pi(a) + \frac{\gamma}{K}, \gamma \in (0,1]$, where $K$ is the number of arms, we recover the classic EXP3 algorithm~\citep{auer2002nonstochastic, seldin2012evaluation}.

We can also confirm that this condition is not satisfied for the simple example we presented when discussing convergence to suboptimal policies. There, $p_t$ could decrease exponentially fast since the tails of the sigmoid function decay exponentially and the parameters move by at least a constant at every step. In this case, $\epsilon_t = \Omega (e^{-t})$, resulting in $\lim_{t\to \infty} t e^{-2 t} = 0$, so Proposition~\ref{lem:main_off-policy_IS} does not apply.

\subsection{Importance sampling, baselines \& variance}
As we have seen, using a separate behaviour policy that samples all actions sufficiently often may lead to stronger convergence guarantees, even if it increases the variance of the gradient estimates in most of the space, as what matters is what happens in the high variance regions, which are usually close to the boundaries. Fig.~\ref{fig:simplex_variance} shows the ratios of gradient variances between on-policy PG without baseline, on-policy PG with the minimum variance baseline, and off-policy PG using importance sampling~(IS) where the sampling distribution is $\mu(a) = \frac{1}{2} \pi(a) + \frac{1}{6}$, i.e. a mixture of the current policy $\pi$ and the uniform distribution. While using the minimum variance baseline decreases the variance on the entire space compared to not using a baseline, IS actually \emph{increases} the variance when the current policy is close to uniform. However, IS does a much better job at reducing the variance close to the boundaries of the simplex, where it actually matters to guarantee convergence.

\begin{figure}[t!]
\begin{center}
    \begin{subfigure}[b]{0.08\linewidth}
    \includegraphics[trim={2cm 1.5cm 2cm 0},clip,width=\textwidth]{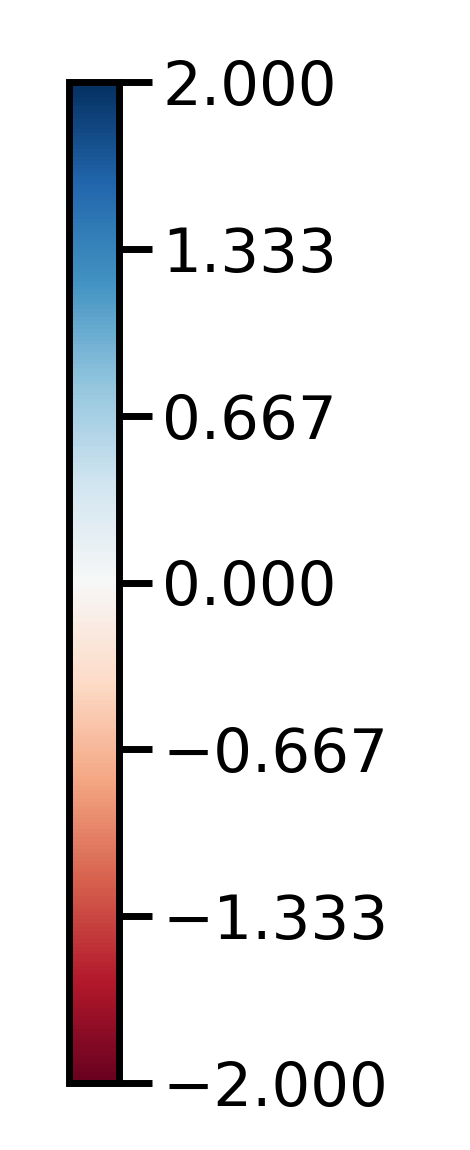}
    \label{fig:vary_corr}
  \end{subfigure}
  \begin{subfigure}[b]{.27\linewidth}
    \includegraphics[trim={2cm 1.5cm 2cm 0mm},clip, width=\textwidth]{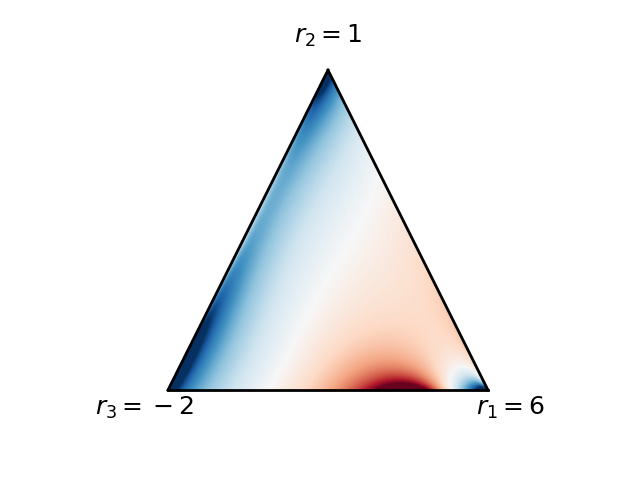}
    \caption{{\color{myred}$b=0$} / {\color{myblue}IS}.}
  \end{subfigure}
  \begin{subfigure}[b]{0.27\linewidth}
    \includegraphics[trim={2cm 1.5cm 2cm 0},clip,width=\textwidth]{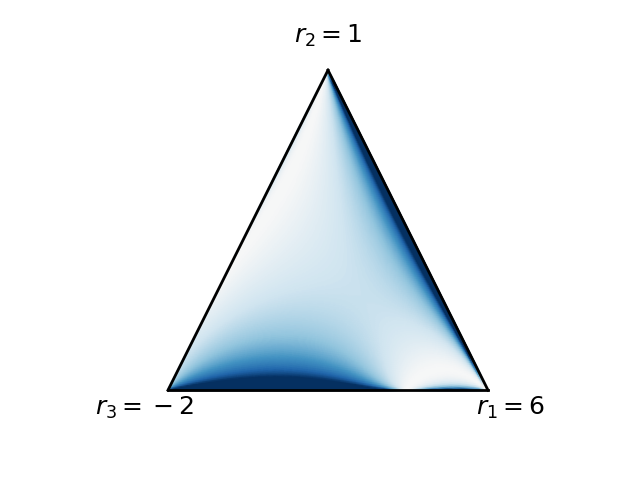}
    \caption{{\color{myred}$b=0$} / {\color{myblue}$b^\ast$}.}
  \end{subfigure}
  \begin{subfigure}[b]{0.27\linewidth}
    \includegraphics[trim={2cm 1.5cm 2cm 0},clip,width=\textwidth]{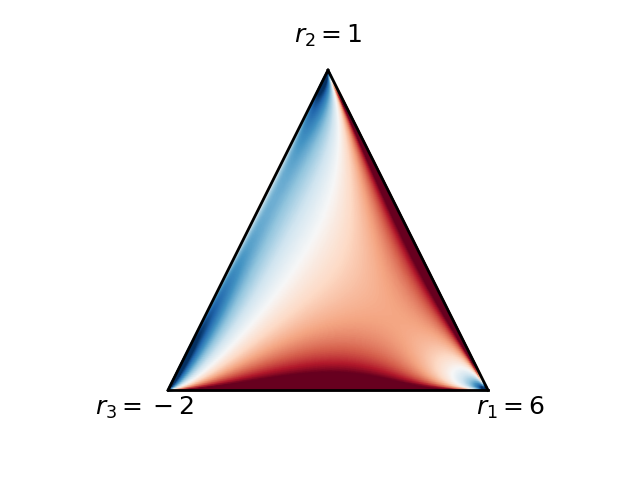}
    \caption{{\color{myred}$b^\ast$} / {\color{myblue}IS}.}
  \end{subfigure}
\caption{Comparison between the variance of different methods on a 3-arm bandit. Each plot depicts the $\log$ of the ratio between the variance of two approaches. For example, Fig. (a) depicts $\log \tfrac{\textrm{Var}[g_{b=0}]}{\textrm{Var}[g_{\text{IS}}]}$, the $\log$ of the ratio between the variance of the gradients of PG without a baseline and PG with IS. The triangle represents the probability simplex with each corner representing a deterministic policy on a specific arm. The method written in blue (resp. red) in each figure has lower variance in blue (resp. red) regions of the simplex. The sampling policy $\mu$, used in the PG method with IS, is a linear interpolation between $\pi$ and the uniform distribution, $\mu(a) = \frac{1}{2} \pi(a) + \frac{1}{6}$. Note that this is not the min. variance sampling distribution and it leads to higher variance than PG without a baseline in some parts of the simplex.} \label{fig:simplex_variance}
\end{center}
\end{figure}

This suggests that convergence of PG methods is not so much governed by the variance of the gradient estimates in general, but by the variance in the worst regions, usually near the boundary. While baselines can reduce the variance, they generally cannot prevent the variance in those regions from exploding, leading to the policy getting stuck. Thus, good baselines are not the ones reducing the variance across the space but rather those that can prevent the learning from reaching these regions altogether. Large values of $b$, such that $r(a_i) - b$ is negative for most actions, achieve precisely that. On the other hand, due to the increased flexibility of sampling distributions, IS can limit the nefariousness of these critical regions, offering better convergence guarantees despite not reducing variance~everywhere.

Importantly, although IS is usually used in RL to correct for the distribution of past samples~\citep[e.g.,][]{munos16retrace}, we advocate here for expanding the research on designing appropriate sampling distributions as done by~\citet{hanna2017data, hanna2018importance} and \citet{parmas2019unified}. This line of work has a long history in statistics~\citep[c.f.,][]{liu2008monte}. 

\subsection{Other mitigating strategies}
We conclude this section by discussing alternative strategies to mitigate the convergence issues. While they might be effective, and some are indeed used in practice, they are not without pitfalls.

First, one could consider reducing the stepsizes, with the hope that the policy would not converge as quickly towards a suboptimal deterministic policy and would eventually leave that bad region. Indeed, if we are to use vanilla PG in the two-arm bandit example, instead of NPG, this effectively reduces the stepsize by a factor of $\sigma(\theta)(1-\sigma(\theta))$ (the Fisher information). In this case, we are able to show convergence in probability to the optimal policy. See Proposition~\ref{prop_vpg_cv} in Appendix \ref{app:theory_2arm}.

Empirically, we find that, when using vanilla PG, the policy may still remain stuck near a suboptimal policy when using a negative baseline, similar to Fig.~\ref{fig:divergence_2arm_bandit}. While the previous proposition guarantees convergence eventually, the rate may be very slow, which remains problematic in practice. There is theoretical evidence that following even the true vanilla PG may result in slow convergence~\citep{schaul2019ray}, suggesting that the problem is not necessarily due to noise.

An alternative solution would be to add entropy regularization to the objective. By doing so, the policy would be prevented from getting too close to deterministic policies. While this might prevent convergence to a suboptimal policy, it would also exclude the possibility of fully converging to the optimal policy, though the policy may remain near it.

In bandits, EXP3 has been found not to enjoy high-probability guarantees on its regret so variants have been developed to address this deficiency \citep[c.f.][]{lattimore_szepesvari_2020}. For example, by introducing bias in the updates, their variance can be reduced significantly \cite{auer2002nonstochastic, neu2015explore}.
Finally, other works have also developed provably convergent policy gradient algorithms using different mechanisms, such as exploration bonuses or ensembles of policies~\citep{cai2019provably, efroni2020optimistic, agarwal2020pc}.

\section{Extension to multi-step MDPs}
\begin{figure*}
\centering
 \begin{subfigure}[b]{0.21\linewidth}   
    \includegraphics[trim={0cm -0.8cm 0cm 0},clip,width=\textwidth]{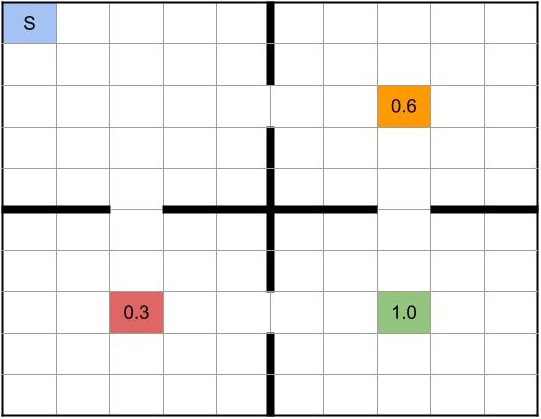}
    \caption{MDP}
    \label{fig:4rooms_mdp}
  \end{subfigure}
  \begin{subfigure}[b]{0.25\linewidth}
    \includegraphics[width=\textwidth]{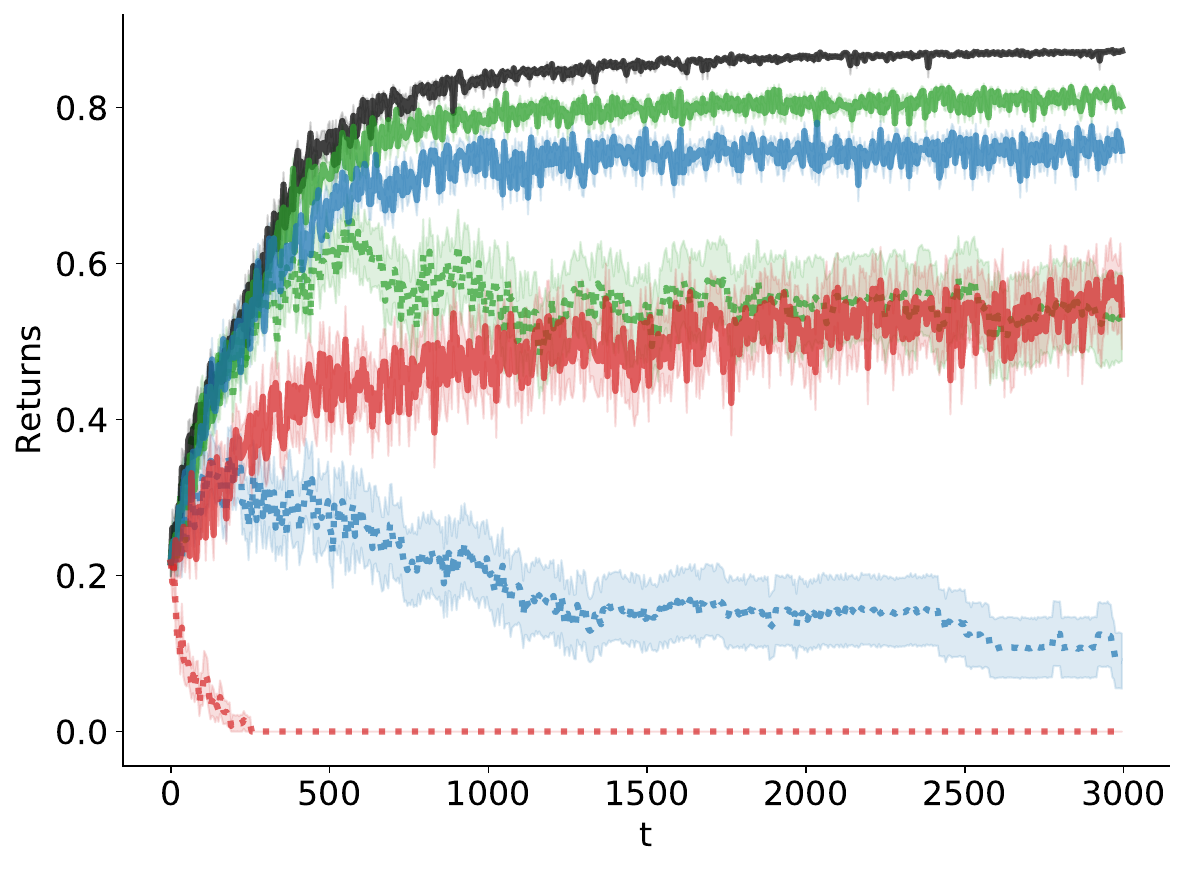}
    \caption{Returns}
    \label{fig:4rooms_return}
  \end{subfigure}
    \begin{subfigure}[b]{0.25\linewidth}
    \includegraphics[width=\textwidth]{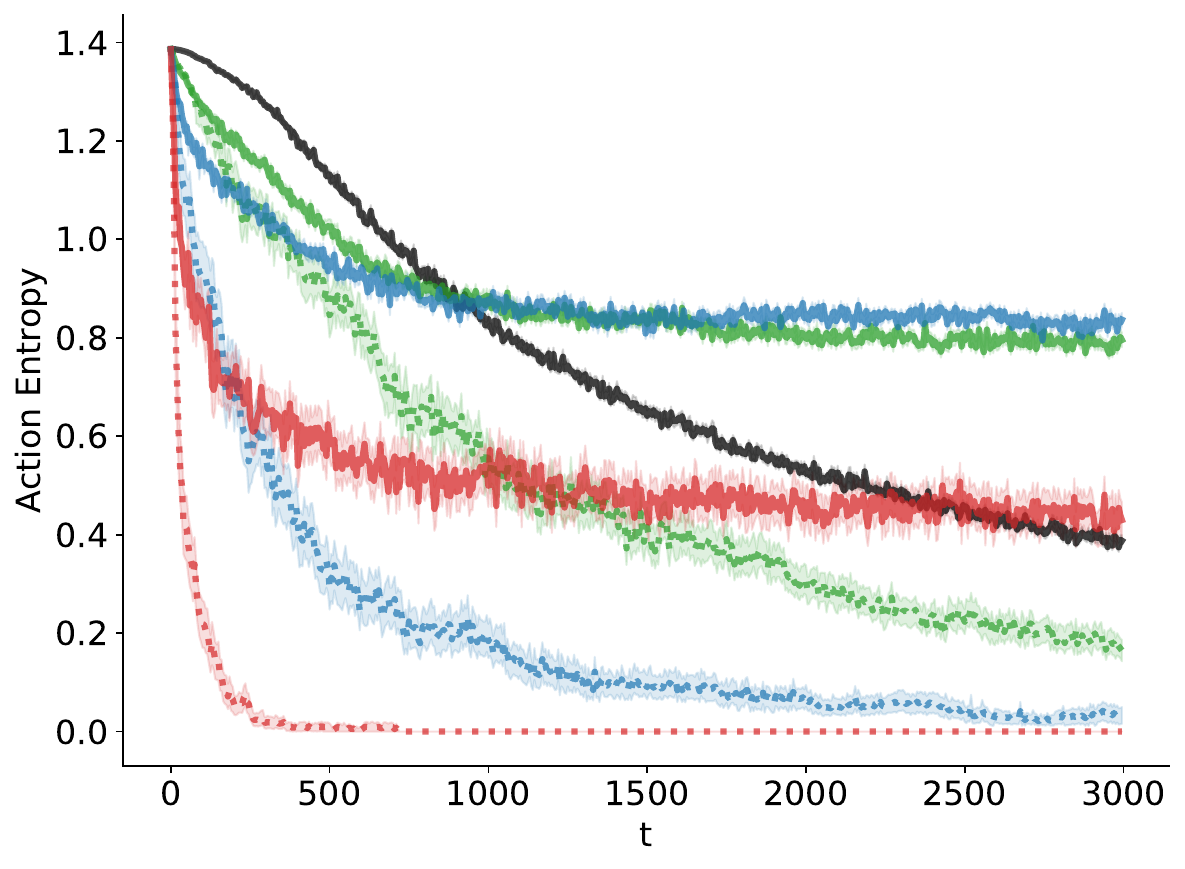}
    \caption{Entropy (A)}
    \label{fig:4rooms_action}
  \end{subfigure}
  \begin{subfigure}[b]{0.25\linewidth}
    \includegraphics[width=\textwidth]{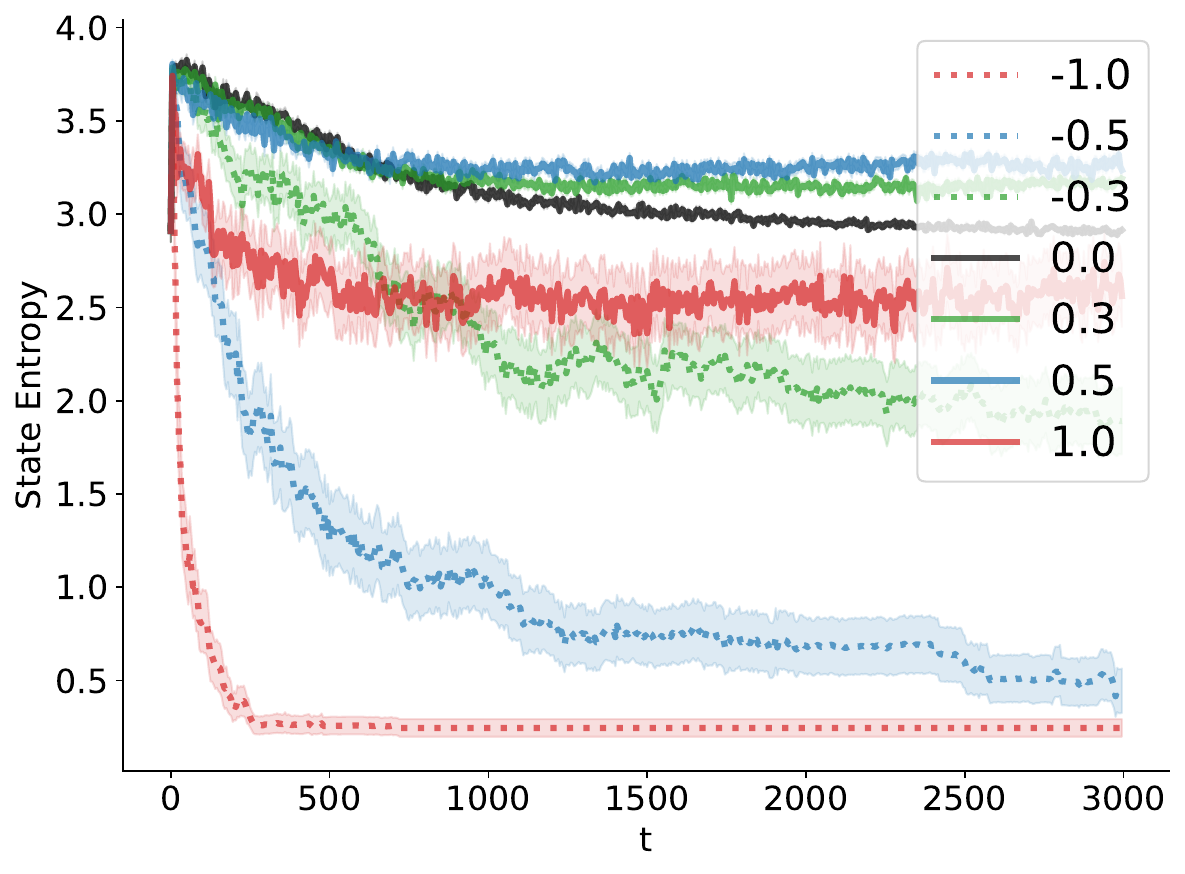}
    \caption{Entropy (S)}
    \label{fig:4rooms_state}
  \end{subfigure}
  \caption{We plot the discounted returns, the entropy of the policy over the states visited in each trajectory, and the entropy of the state visitation distribution, averaged over 50 runs, for multiple baselines. The baselines are of the form $b(s) = b^*(s) + \epsilon$, perturbations of the minimum-variance baseline, with $\epsilon$ indicated in the legend. The shaded regions denote one standard error. Note that the policy entropy of lower baselines tends to decay faster than for larger baselines. Also, smaller baselines tend to get stuck on suboptimal policies, as indicated by the returns plot. See text for additional details.~\label{fig:stats_4rooms}}
\end{figure*}
We focused our theoretical analyses on multi-arm bandits so far. However, we are also interested in more general environments where gradient-based methods are commonplace. 
We now turn our attention to the Markov Decision Process (MDP) framework~\citep{puterman2014markov}. An MDP is a set $\{\gS, \gA, P, r, \gamma, \rho\}$ where $\gS$ and $\gA$ are the set of states and actions, $P$ is the environment transition function,
$r$ is the reward function, $\gamma \in [0, 1)$ the discount factor,
and $\rho$ is the initial state distribution. The goal of RL algorithms is to find a policy $\pi_\theta$, parameterized by $\theta$, which maximizes the (discounted) expected return; i.e. Eq.~\ref{eq:bandit_loss} becomes
\begin{align*}
    \arg\max_\theta J(\theta)
    = \arg\max_\theta \sum_s d^{\pi_\theta}_\gamma(s)\sum_{a} \pi_\theta(a | s) r(s, a), %
\end{align*}
where there is now a discounted distribution over states induced by $\pi_\theta$. Although that distribution depends on $\pi_\theta$ in a potentially complex way, the parameter updates are similar to Eq.~\ref{eq:mc_update}:
\begin{align*}
    \theta_{t+1} &= \theta_t + \frac{\alpha}{N} \sum_i [Q(s_i, a_i) - b(s_i)]\nabla_\theta \log \pi_\theta(a_i | s_i) \label{eq:mc_update_mdp} \; ,
\end{align*}
where $(a_i, s_i)$ pairs are drawn according to the discounted state-visitation distribution induced by $\pi_\theta$ and $Q$ is the state-action value function induced by $\pi_\theta$~\citep[c.f.][]{sutton18book}. To match the bandit setting and common practice, we made the baseline state dependent. 

Although our theoretical analyses do not easily extend to multi-step MDPs, we empirically investigated if the similarity between these formulations leads to similar differences in learning dynamics when changing the baseline. We consider a 10x10 gridworld consisting of 4 rooms as depicted on Fig.~\ref{fig:4rooms_mdp}. We use a discount factor $\gamma=0.99$. The agent starts in the upper left room and two adjacent rooms contain a goal state of value $0.6$ or $0.3$. The best  goal (even discounted), with a value of $1$, lies in the furthest room, so that the agent must learn to cross the sub-optimal rooms and reach the furthest one. 

Similar to the bandit setting, for a state $s$, we can derive the minimum-variance baseline $b^*(s)$ assuming access to state-action values $Q(s,a)$ for $\pi_\theta$ and consider perturbations to it. Again, we use baselines $b(s) = b^*(s) + \epsilon$ and $b(s) = b^*(s) - \epsilon$, since they result in identical variances. 
We use a natural policy gradient estimate, which substitutes $\nabla \log \pi (a_i|s_i)$ by $F^{-1}_{s_i} \nabla \log \pi(a_i|s_i)$ in the update rule, where $F_{s_i}$ is the Fisher information matrix for state $s_i$ and solve for the exact $Q(s,a)$ values using dynamic programming for all updates (see Appendix \ref{app:npg_mdp_estimate} for details).

In order to identify the committal vs. non-committal behaviour of the agent depending on the baseline, we monitor the entropy of the policy and the entropy of the stationary state distribution over time.
Fig.\ref{fig:4rooms_return} shows the average returns over time and Fig.\ref{fig:4rooms_action} and \ref{fig:4rooms_state} show the entropy of the policy in two ways. The first is the average entropy of the action distribution along the states visited in each trajectory, and the second is the entropy of the distribution of the number of times each state is visited up to that point in training.

The action entropy for smaller baselines tends to decay faster compared to larger ones, indicating convergence to a deterministic policy. This quick convergence is premature in some cases since the returns are not as high for the lower baselines.
In fact for $\epsilon=-1$, we see that the agent gets stuck on a policy that is unable to reach any goal within the time limit, as indicated by the returns of $0$.  
On the other hand, the larger baselines tend to achieve larger returns with larger entropy policies, but do not fully converge to the optimal policy as evidenced by the gap in the returns plot.

Since committal and non-committal behaviour can be directly inferred from the PG and the sign of the effective rewards $R(\tau) - b$, we posit that these effects extend to all MDPs.
In particular, in complex MDPs, the first trajectories explored are likely to be suboptimal and a low baseline will increase their probability of being sampled again, requiring the use of techniques such as entropy regularization to prevent the policy~from~getting~stuck~too~quickly. 

\section{Conclusion} 

We presented results that dispute common beliefs about baselines, variance, and policy gradient methods in general. 
As opposed to the common belief that baselines only provide benefits through variance reduction, we showed that they can significantly affect the optimization process in ways that cannot be explained by the variance and that lower variance can even sometimes be detrimental.

Different baselines can give rise to very different learning dynamics, even when they reduce the variance of the gradients equally. They do that by either making a policy quickly tend towards a deterministic one (\emph{committal} behaviour) or by maintaining high-entropy for a longer period of time (\emph{non-committal} behaviour). We showed that \textit{committal} behaviour can be problematic and lead to convergence to a suboptimal policy. Specifically, we showed that stochastic natural policy gradient does not always converge to the optimal solution due to the unusual situation in which the iterates converge to the optimal policy in expectation but not almost surely. 
Moreover, we showed that baselines that lead to lower-variance can sometimes be detrimental to optimization, highlighting the limitations of using variance to analyze the convergence properties of these methods. 
We also showed that standard convergence guarantees for PG methods do not apply to some settings because the assumption of bounded variance of the updates is violated.

The aforementioned convergence issues are also caused by the problematic coupling between the algorithm's updates and its sampling distribution since one directly impacts the other. As a potential solution, we showed that off-policy sampling can sidestep these difficulties by ensuring we use a sampling distribution that is different than the one induced by the agent's current policy. This supports the hypothesis that on-policy learning can be problematic, as observed in previous work~\citep{schaul2019ray, hennes2020neural}. Nevertheless, importance sampling in RL is generally seen as problematic~\citep{vanHasselt2018deadly_triad} due to instabilities it introduces to the learning process. Moving from an imposed policy, using past trajectories, to a chosen sampling policy reduces the variance of the gradients for near-deterministic policies and can lead to much better behaviour. 

More broadly, this work suggests that treating bandit and reinforcement learning problems as a black-box optimization of a function $J(\theta)$ may be insufficient to perform well. As we have seen, the current parameter value can affect all future parameter values by influencing the data collection process and thus the updates performed. Theoretically, relying on immediately available quantities such as the gradient variance and ignoring the sequential nature of the optimization problem is not enough to discriminate between certain optimization algorithms. In essence, to design highly-effective policy optimization algorithms, it may be necessary to develop a better understanding of how the optimization process evolves over many steps.

\section*{Acknowledgements}
We would like to thank Kris de Asis, Alan Chan, Ofir Nachum, Doina Precup, Dale Schuurmans, and Ahmed Touati for helpful discussions. We also thank Courtney Paquette, Vincent Liu and Scott Fujimoto for reviewing an earlier version of this paper. Nicolas Le Roux is supported by a Canada CIFAR AI Chair.

\bibliography{full}
\bibliographystyle{plainnat}

\begin{appendix}

\newpage

\noindent \noindent \textbf{\LARGE Appendix}\\
\vspace{1em}

\section*{Organization of the appendix}

We organize the appendix into several thematic sections. 

The first one, section \ref{app:exp} contains additional experiments and figures on bandits and MDPs. 
We have further investigations into committal and non-committal behaviour with baselines. 
More precisely subsection~\ref{app:exp_3armbandit} contains additional experiments for the 3 arm bandits for vanilla policy gradient, natural policy gradient and policy gradient with direct parameterization and a discussion on the effect the hyperparameters have on the results. In all cases, we find evidence for committal and non-committal behaviours. 
In the rest of the section, we investigate this in MDPs, starting with a smaller MDP with 2 different goals in subsection~\ref{app:simple_mdp} and constant baselines. We also provide additional experiments on the 4 rooms environment in subsection~\ref{app:4rooms_mdp}, including the vanilla policy gradient and constant baselines with REINFORCE. 

Then, section~\ref{app:theory_2arm} contains theory for the two-armed bandit case, namely proofs of convergence to a suboptimal policy (Proposition~\ref{proposition_divergence} in Appendix~\ref{app:2arm_constant_baseline_div}) and an analysis of perturbed minimum-variance baselines (Proposition~\ref{prop:2armed-perturbedminvar} in Appendix~\ref{sec:appendix_perturbed_minvar}). For the latter, depending on the perturbation, we may have possible convergence to a suboptimal policy, convergence to the optimal policy in probability, or a weaker form of convergence to the optimal policy. Finally, we also show vanilla policy gradient converges to the optimal policy in probability regardless of the baseline in Appendix~\ref{app:2arm_vanilla_pg}. 

Section~\ref{app:theory_multiarm} contains the theory for multi-armed bandit, including the proof of theorem~\ref{proposition_threearmedbandit}. This theorem presents a counterexample to the idea that reducing variance always improves optimization. We show that there is baseline leading to reduced variance which may converge to a suboptimal policy with positive probability (see Appendix~\ref{app:3arm_minvar_baseline_div}) while there is another baseline with larger variance that converges to the optimal policy with probability $1$ (see Appendix~\ref{app:3arm_gap_baseline}).
We identify on-policy sampling as being a potential source of these convergence issues. We provide proofs of proposition~\ref{lem:main_off-policy_IS} in Appendix \ref{app:3arm_off_policy}, which shows convergence to the optimal policy in probability when using off-policy sampling with importance sampling.  

Finally, in section~\ref{app:other_results}, we provide derivations of miscellaneous, smaller results such as the calculation of the minimum-variance baseline (Appendix~\ref{app:optimal_baseline}), the natural policy gradient update for the softmax parameterization (Appendix~\ref{app:npg_softmax_bandit}) and the connection between the value function and the minimum-variance baseline  (Appendix~\ref{app:value_minvar_baseline}).

\section{Other experiments}
\label{app:exp}

\subsection{Three-armed bandit}
\label{app:exp_3armbandit}
In this subsection, we provide additional experiments on the three-armed bandit with natural and vanilla policy gradients for the softmax parameterization, varying the initializations.  Additionally, we present results for the direct parameterization and utilizing projected stochastic gradient ascent. 

The main takeaway is that the effect of the baselines appears more strongly when the initialization is unfavorable (for instance with a high probability of selecting a suboptimal action at first). The effect also are diminished when using small learning rates as in that case the effect of the noise on the optimization process lessens.

While the simplex visualization is very appealing, we mainly show here learning curves as we can showcase more seeds that way and show the effects are noticeable across many runs.

\subsubsection*{Natural policy gradient}
\begin{figure}[!ht]
\centering
  \begin{subfigure}[b]{0.245\linewidth}
    \includegraphics[width=\textwidth]{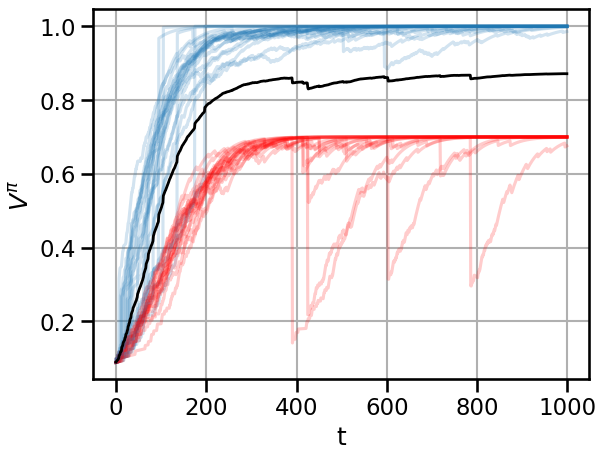}
    \caption{$b = b^\ast - \nicefrac{1}{2}$}
  \end{subfigure}
    \begin{subfigure}[b]{0.245\linewidth}
    \includegraphics[width=\textwidth]{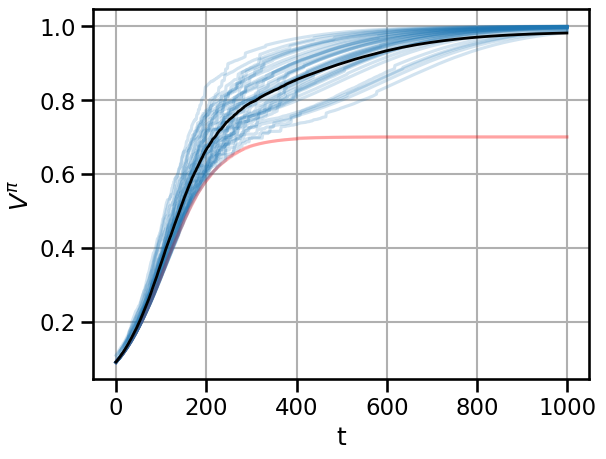}
    \caption{$b = b^\ast$}
  \end{subfigure}
  \begin{subfigure}[b]{0.245\linewidth}
    \includegraphics[width=\textwidth]{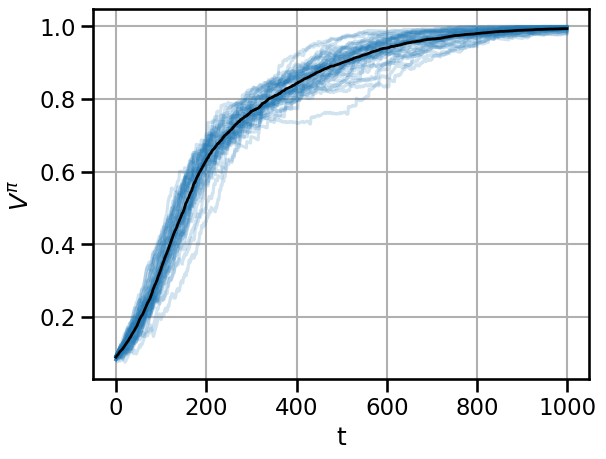}
    \caption{$b = b^\ast + \nicefrac{1}{2}$}
  \end{subfigure}
  \begin{subfigure}[b]{0.245\linewidth}
    \includegraphics[width=\textwidth]{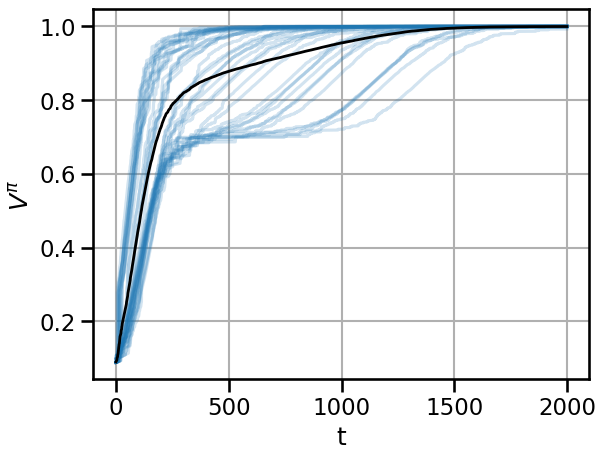}
    \caption{$b=V^\pi$}
  \end{subfigure}
  \caption{We plot 40 different learning curves (in blue and red) of natural policy gradient, when using various baselines, on a 3-arm bandit problem with rewards $(1, 0.7, 0)$, $\alpha = 0.025$ and $\theta_0 = (0, 3, 5)$. The black line is the average value over the 40 seeds for each setting. The red curves denote the seeds that did not reach a value of at least $0.9$ at the end of training. Note that the value function baseline convergence was slow and thus was trained for twice the number of time steps.}~\label{appfig:learning_curves_npg_035}
\end{figure}
Figure~\ref{appfig:learning_curves_npg_035} uses the same setting as Figure~\ref{fig:trajectories} with 40 trajectories instead of 15. We do once again observe many cases of convergence to the wrong arm for the negative baseline and some cases for the minimum variance baseline, while the positive baseline converges reliably. In this case the value function also converges to the optimal solution but is much slower.

\begin{figure}[!ht]
\centering
  \begin{subfigure}[b]{0.245\linewidth}
    \includegraphics[width=\textwidth]{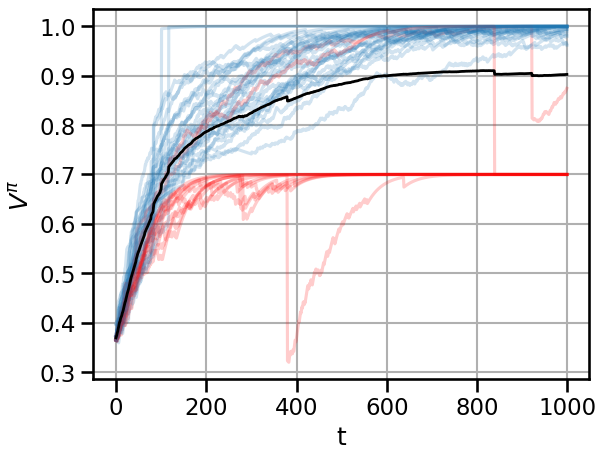}
    \caption{$b = b^\ast - \nicefrac{1}{2}$}
  \end{subfigure}
    \begin{subfigure}[b]{0.245\linewidth}
    \includegraphics[width=\textwidth]{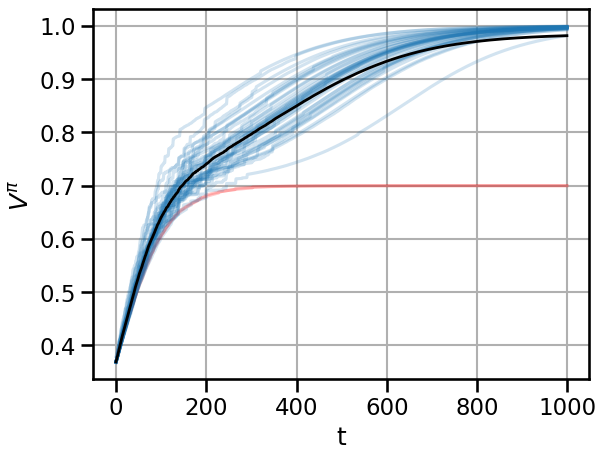}
    \caption{$b = b^\ast$}
  \end{subfigure}
  \begin{subfigure}[b]{0.245\linewidth}
    \includegraphics[width=\textwidth]{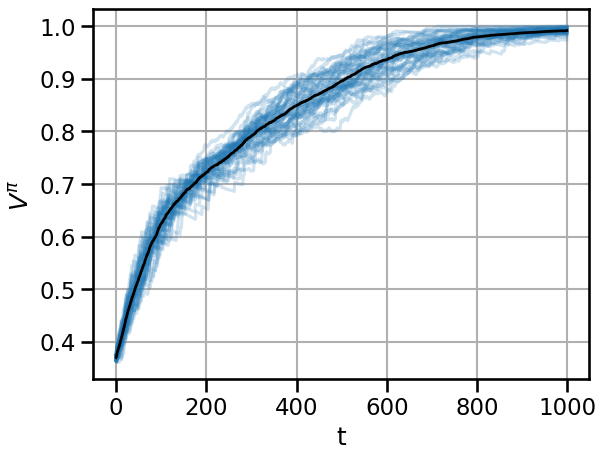}
    \caption{$b = b^\ast + \nicefrac{1}{2}$}
  \end{subfigure}
  \begin{subfigure}[b]{0.245\linewidth}
    \includegraphics[width=\textwidth]{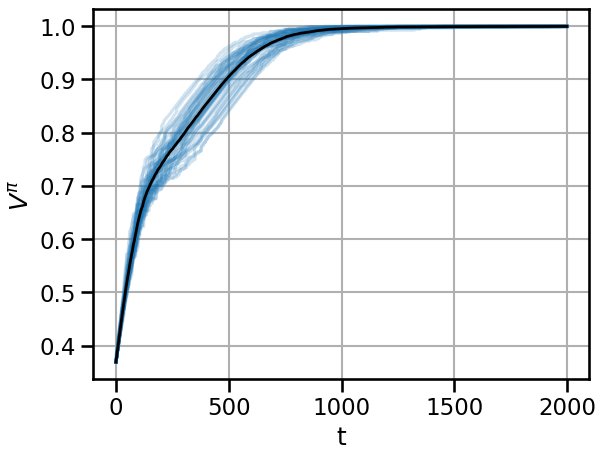}
    \caption{$b=V^\pi$}
  \end{subfigure}
  \caption{We plot 40 different learning curves (in blue and red) of natural policy gradient, when using various baselines, on a 3-arm bandit problem with rewards $(1, 0.7, 0)$, $\alpha = 0.025$ and $\theta_0 = (0, 3, 3)$. The black line is the average value over the 40 seeds for each setting. The red curves denote the seeds that did not reach a value of at least $0.9$ at the end of training.}~\label{appfig:learning_curves_npg_033}
\end{figure}
Figure~\ref{appfig:learning_curves_npg_033} shows a similar setting to Figure~\ref{appfig:learning_curves_npg_035} but where the initialization parameter is not as extreme. We observe the same type of behavior, but not as pronounced as before; fewer seeds converge to the wrong arm.

\begin{figure}[!ht]
\centering
  \begin{subfigure}[b]{0.245\linewidth}
    \includegraphics[width=\textwidth]{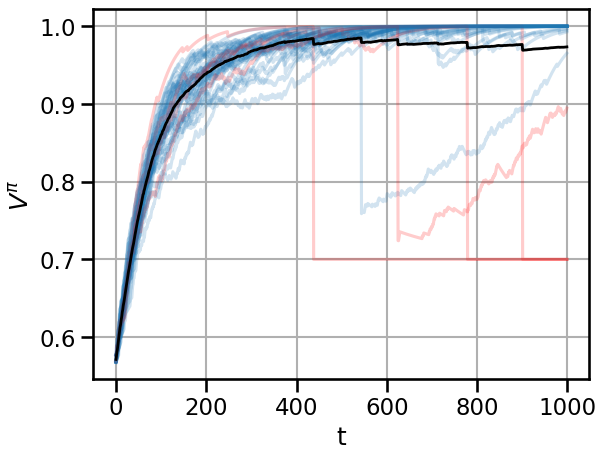}
    \caption{$b = b^\ast - \nicefrac{1}{2}$}
  \end{subfigure}
    \begin{subfigure}[b]{0.245\linewidth}
    \includegraphics[width=\textwidth]{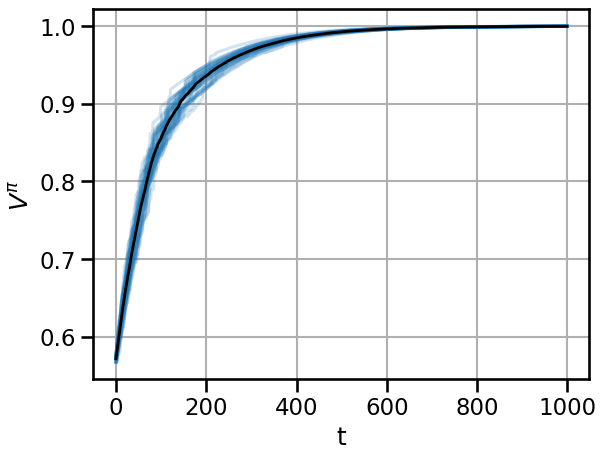}
    \caption{$b = b^\ast$}
  \end{subfigure}
  \begin{subfigure}[b]{0.245\linewidth}
    \includegraphics[width=\textwidth]{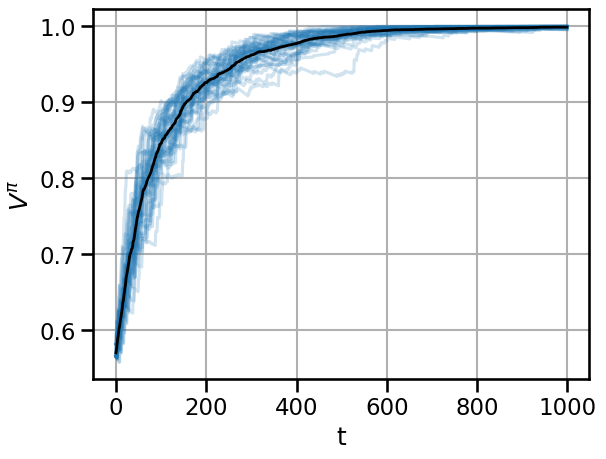}
    \caption{$b = b^\ast + \nicefrac{1}{2}$}
  \end{subfigure}
  \begin{subfigure}[b]{0.245\linewidth}
    \includegraphics[width=\textwidth]{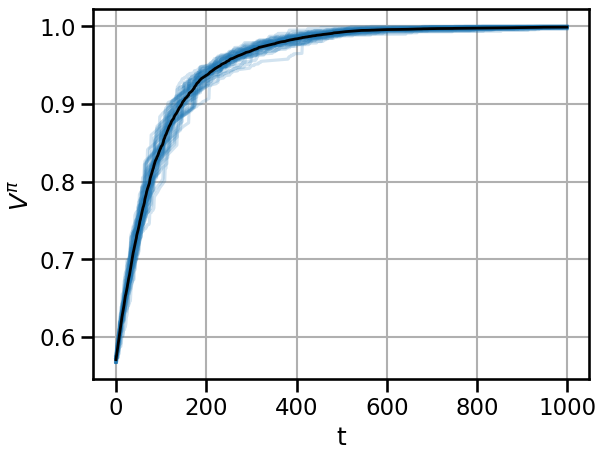}
    \caption{$b=V^\pi$}
  \end{subfigure}
  \caption{We plot 40 different learning curves (in blue and red) of natural policy gradient, when using various baselines, on a 3-arm bandit problem with rewards $(1, 0.7, 0)$, $\alpha = 0.025$ and $\theta_0 = (0, 0, 0)$ i.e the initial policy is uniform. The black line is the average value over the 40 seeds for each setting. The red curves denote the seeds that did not reach a value of at least $0.9$ at the end of training.}~\label{appfig:learning_curves_npg_000}
\end{figure}
In Figure~\ref{appfig:learning_curves_npg_000} whose initial policy is the uniform, we observe that the minimum variance baseline and the value function as baseline perform very well. On the other hand the committal baseline still has seeds that do not converge to the right arm. Interestingly, while all seeds for the non-committal baseline identify the optimal arm, the variance of the return is higher than for the optimal baseline, suggesting a case similar to the result presented in Proposition \ref{lem:prop_epsilon_1inf} where a positive baseline ensured we get close to the optimal arm but may not remain arbitrary close to it.

\subsubsection*{Vanilla policy gradient}
While we have no theory indicating that we may converge to a suboptimal arm with vanilla policy gradient, we can still observe some effect in terms of learning speed in practice (see Figures \ref{appfig:learning_curves_vpg_000_simplex} to \ref{appfig:learning_curves_vpg_033}). 

On Figures~\ref{appfig:learning_curves_vpg_000_simplex} and~\ref{appfig:learning_curves_vpg_000} we plot the simplex view and the learning curves for vanilla policy gradient initialized at the uniform policy. We do observe that some trajectories did not converge to the optimal arm in the imparted time for the committal baseline, while they converged in all other settings. The mininum variance baseline is slower to converge than the non-committal and the value function in this setting as can be seem both in the simplex plot and learning curves.

\begin{figure}[!ht]
\centering
  \begin{subfigure}[b]{0.245\linewidth}
    \includegraphics[width=\textwidth]{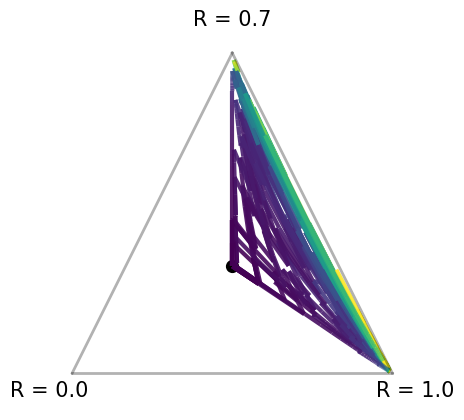}
    \caption{$b = b^\ast - \nicefrac{1}{2}$}
  \end{subfigure}
    \begin{subfigure}[b]{0.245\linewidth}
    \includegraphics[width=\textwidth]{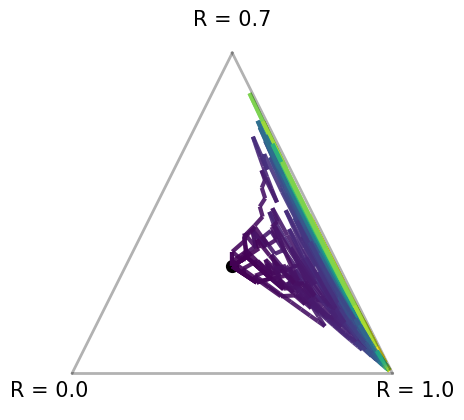}
    \caption{$b = b^\ast$}
  \end{subfigure}
  \begin{subfigure}[b]{0.245\linewidth}
    \includegraphics[width=\textwidth]{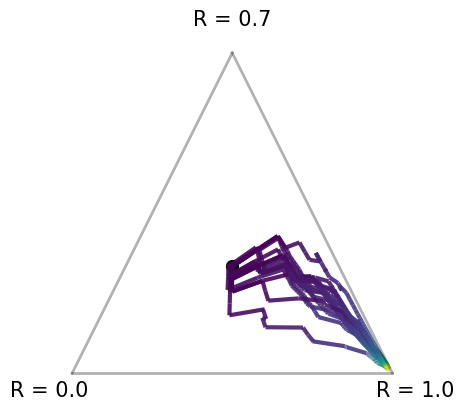}
    \caption{$b = b^\ast + \nicefrac{1}{2}$}
  \end{subfigure}
  \begin{subfigure}[b]{0.245\linewidth}
    \includegraphics[width=\textwidth]{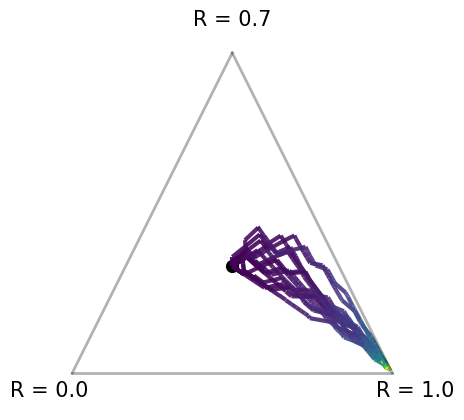}
    \caption{$b=V^\pi$}
  \end{subfigure}
  \caption{Simplex plot of 15 different learning curves for vanilla policy gradient, when using various baselines, on a 3-arm bandit problem with rewards $(1, 0.7, 0)$, $\alpha = 0.5$ and $\theta_0 = (0, 0, 0)$. Colors, from purple to yellow represent training steps.}~\label{appfig:learning_curves_vpg_000_simplex}
\end{figure}

\begin{figure}[!ht]
\centering
  \begin{subfigure}[b]{0.245\linewidth}
    \includegraphics[width=\textwidth]{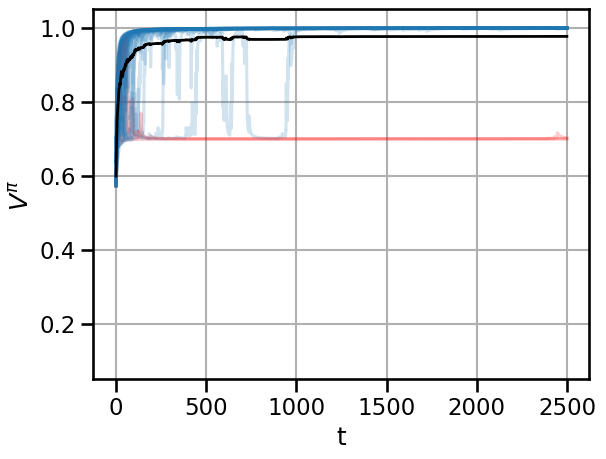}
    \caption{$b = b^\ast - \nicefrac{1}{2}$}
  \end{subfigure}
    \begin{subfigure}[b]{0.245\linewidth}
    \includegraphics[width=\textwidth]{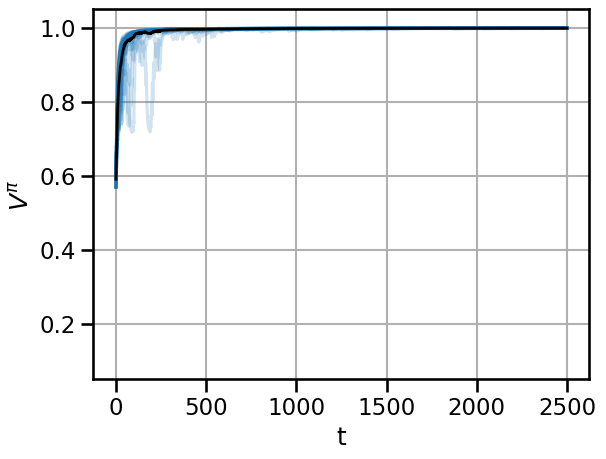}
    \caption{$b = b^\ast$}
  \end{subfigure}
  \begin{subfigure}[b]{0.245\linewidth}
    \includegraphics[width=\textwidth]{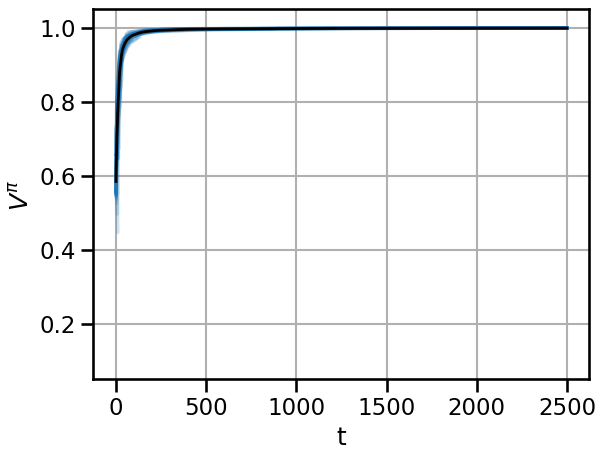}
    \caption{$b = b^\ast + \nicefrac{1}{2}$}
  \end{subfigure}
  \begin{subfigure}[b]{0.245\linewidth}
    \includegraphics[width=\textwidth]{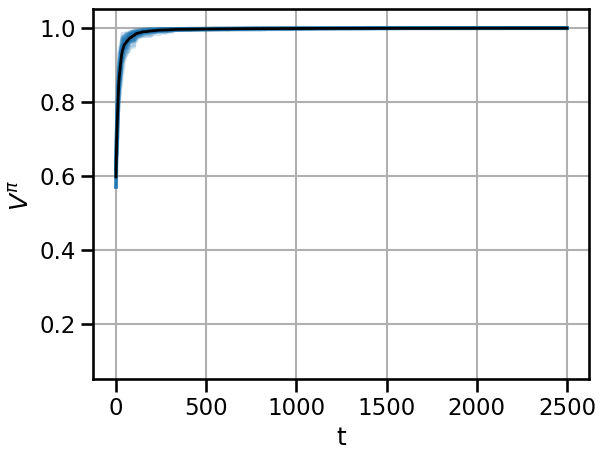}
    \caption{$b=V^\pi$}
  \end{subfigure}
  \caption{We plot 40 different learning curves (in blue and red) of vanilla policy gradient, when using various baselines, on a 3-arm bandit problem with rewards $(1, 0.7, 0)$, $\alpha = 0.5$ and $\theta_0 = (0, 0, 0)$. The black line is the average value over the 40 seeds for each setting. The red curves denote the seeds that did not reach a value of at least $0.9$ at the end of training.}~\label{appfig:learning_curves_vpg_000}
\end{figure}

On Figures~\ref{appfig:learning_curves_vpg_033_simplex} and~\ref{appfig:learning_curves_vpg_033} we plot the simplex view and the learning curves for vanilla policy gradient initialized at a policy yielding a very high probability of sampling the suboptimal actions, $48.7 \%$ for each. We do observe a similar behavior than for the previous plots with vanilla PG, but in this setting the minimum variance baseline is even slower to converge and a few seeds did not identify the optimal arm. As the gradient flow leads the solutions closer to the simplex edges, the simplex plot is not as helpful in this setting to understand the behavior of each baseline option.

\begin{figure}[!ht]
\centering
  \begin{subfigure}[b]{0.245\linewidth}
    \includegraphics[width=\textwidth]{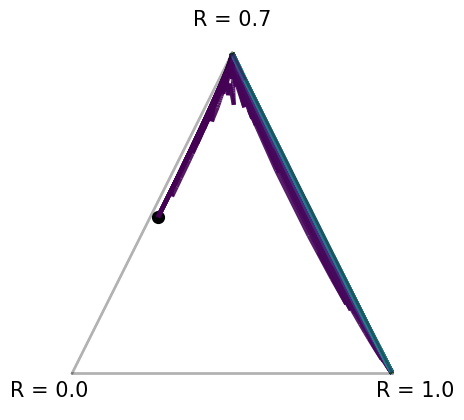}
    \caption{$b = b^\ast - \nicefrac{1}{2}$}
  \end{subfigure}
    \begin{subfigure}[b]{0.245\linewidth}
    \includegraphics[width=\textwidth]{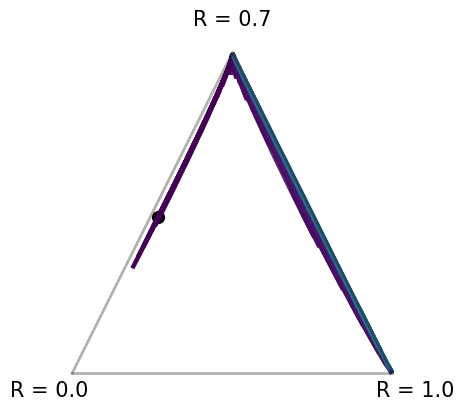}
    \caption{$b = b^\ast$}
  \end{subfigure}
  \begin{subfigure}[b]{0.245\linewidth}
    \includegraphics[width=\textwidth]{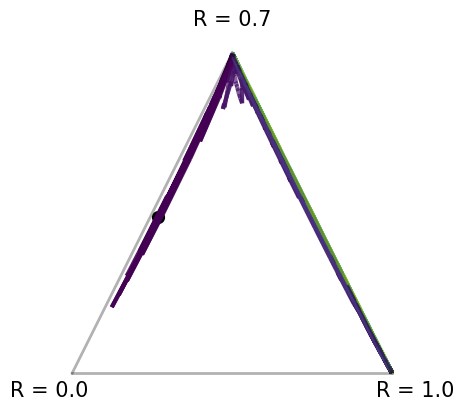}
    \caption{$b = b^\ast + \nicefrac{1}{2}$}
  \end{subfigure}
  \begin{subfigure}[b]{0.245\linewidth}
    \includegraphics[width=\textwidth]{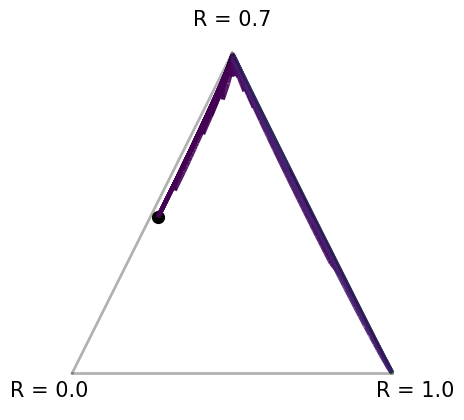}
    \caption{$b=V^\pi$}
  \end{subfigure}
  \caption{Simplex plot of 15 different learning curves for vanilla policy gradient, when using various baselines, on a 3-arm bandit problem with rewards $(1, 0.7, 0)$, $\alpha = 0.5$ and $\theta_0 = (0, 3, 3)$. Colors, from purple to yellow represent training steps.}~\label{appfig:learning_curves_vpg_033_simplex}
\end{figure}

\begin{figure}[!ht]
\centering
  \begin{subfigure}[b]{0.245\linewidth}
    \includegraphics[width=\textwidth]{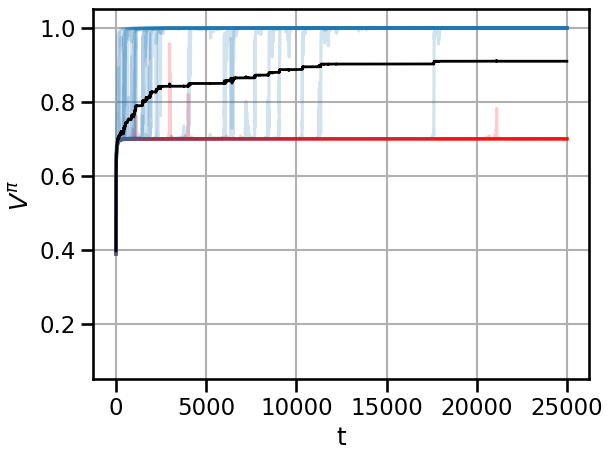}
    \caption{$b = b^\ast - \nicefrac{1}{2}$}
  \end{subfigure}
    \begin{subfigure}[b]{0.245\linewidth}
    \includegraphics[width=\textwidth]{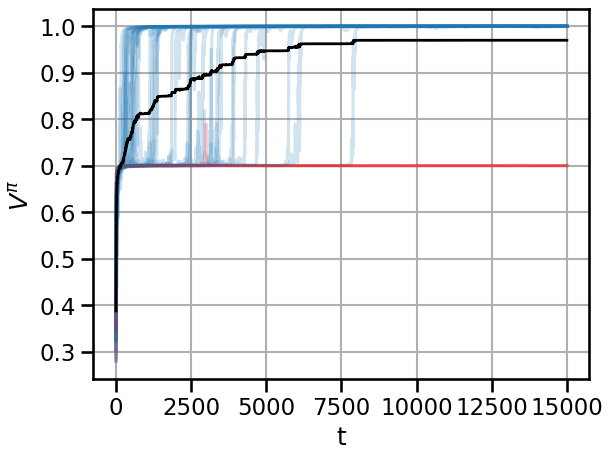}
    \caption{$b = b^\ast$}
  \end{subfigure}
  \begin{subfigure}[b]{0.245\linewidth}
    \includegraphics[width=\textwidth]{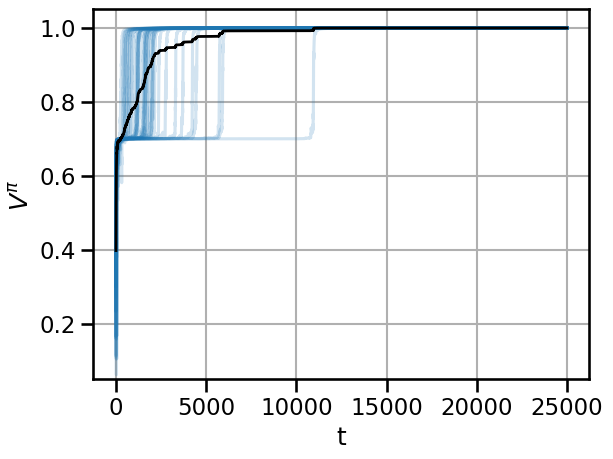}
    \caption{$b = b^\ast + \nicefrac{1}{2}$}
  \end{subfigure}
  \begin{subfigure}[b]{0.245\linewidth}
    \includegraphics[width=\textwidth]{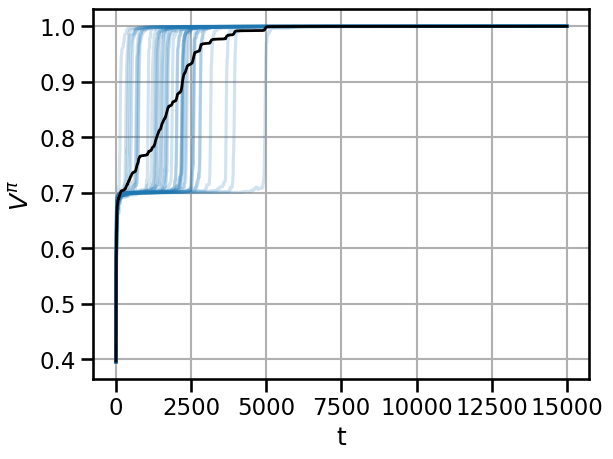}
    \caption{$b=V^\pi$}
  \end{subfigure}
  \caption{We plot 40 different learning curves (in blue and red) of vanilla policy gradient, when using various baselines, on a 3-arm bandit problem with rewards $(1, 0.7, 0)$, $\alpha = 0.5$ and $\theta_0 = (0, 3, 3)$. The black line is the average value over the 40 seeds for each setting. The red curves denote the seeds that did not reach a value of at least $0.9$ at the end of training.}~\label{appfig:learning_curves_vpg_033}
\end{figure}

\subsubsection*{Policy gradient with direct parameterization}

Here we present results with the direct parameterization, i.e where $\theta$ contains directly the probability of drawing each arm. In that case the gradient update is
\begin{align*}
    \theta_{t+1} &= \text{Proj}_{\Delta_3} \big[ \theta_t + \alpha \frac{r(a_i)-b}{\theta(a_i)} \mathbbm{1}_{a_i} \big]
\end{align*}
where $\Delta_3$ is the three dimensional simplex $\Delta_3 = \{u, v, w \ge 0, u+v+w = 1\}$. In this case, however, because the projection step is non trivial and doesn't have an easy explicit closed form solution (but we can express it as the output of an algorithm), we cannot explicitly write down the optimal baseline. Again, because of the projection step, baselines of this form are not guaranteed to preserve unbiasedness of the gradient estimate. For this reason, we only show experiments with fixed baselines, but keep in mind that these results are not as meaningful as the ones presented above. We present the results in Figures~\ref{appfig:learning_curves_dpg_035_simplex} and~\ref{appfig:learning_curves_dpg_035}.

\begin{figure}[!ht]
\centering
  \begin{subfigure}[b]{0.3\linewidth}
    \includegraphics[width=\textwidth]{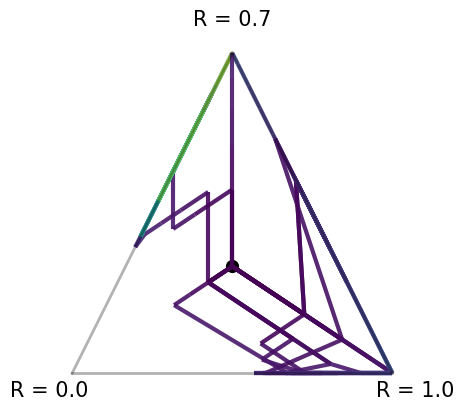}
    \caption{$b = - \nicefrac{1}{2}$}
  \end{subfigure}
    \begin{subfigure}[b]{0.3\linewidth}
    \includegraphics[width=\textwidth]{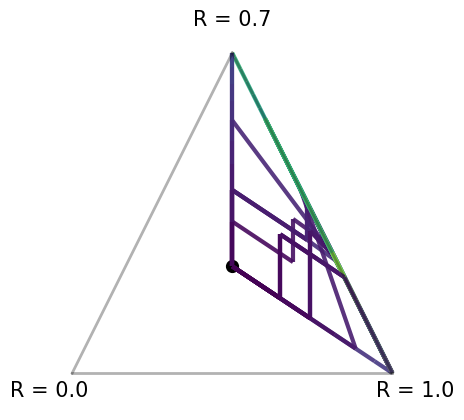}
    \caption{$b = 0$}
  \end{subfigure}
  \begin{subfigure}[b]{0.3\linewidth}
    \includegraphics[width=\textwidth]{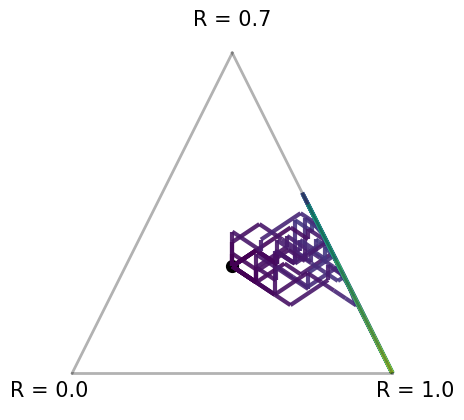}
    \caption{$b = \nicefrac{1}{2}$}
  \end{subfigure}
  \caption{We plot 15 different learning curves of vanilla policy gradient with direct parameterization, when using various baselines, on a 3-arm bandit problem with rewards $(1, 0.7, 0)$, $\alpha = 0.1$ and $\theta_0 = (\nicefrac{1}{3}, \nicefrac{1}{3}, \nicefrac{1}{3})$, the uniform policy on the simplex.}~\label{appfig:learning_curves_dpg_035_simplex}
\end{figure}

\begin{figure}[!ht]
\centering
  \begin{subfigure}[b]{0.326\linewidth}
    \includegraphics[width=\textwidth]{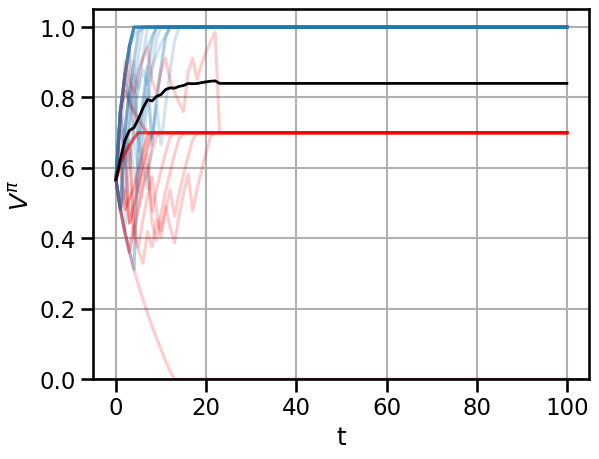}
    \caption{$b = - \nicefrac{1}{2}$}
  \end{subfigure}
    \begin{subfigure}[b]{0.326\linewidth}
    \includegraphics[width=\textwidth]{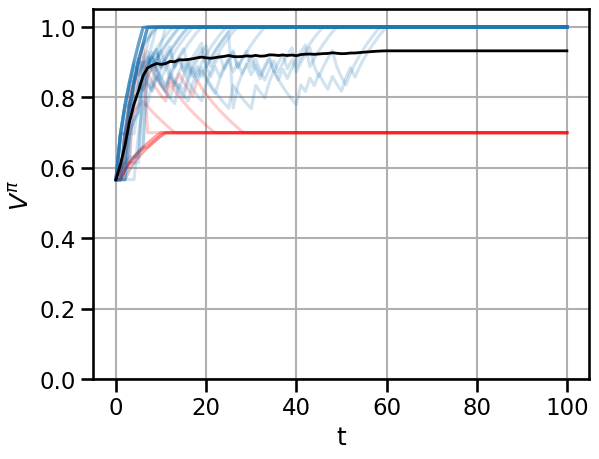}
    \caption{$b = 0$}
  \end{subfigure}
  \begin{subfigure}[b]{0.326\linewidth}
    \includegraphics[width=\textwidth]{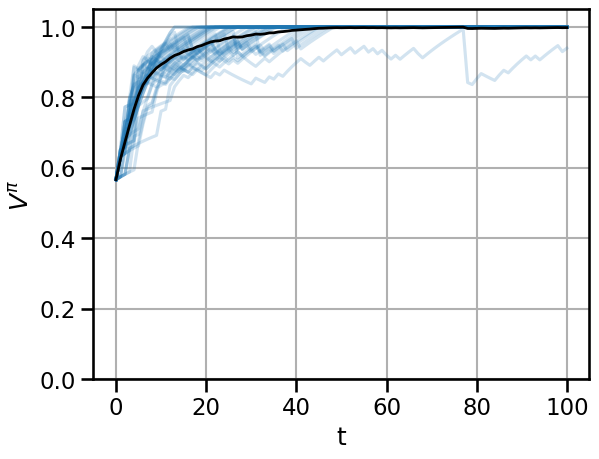}
    \caption{$b = \nicefrac{1}{2}$}
  \end{subfigure}
  \caption{We plot 40 different learning curves (in blue and red) of vanilla policy gradient with direct parameterization, when using various baselines, on a 3-arm bandit problem with rewards $(1, 0.7, 0)$, $\alpha = 0.1$ and $\theta_0 = (\nicefrac{1}{3}, \nicefrac{1}{3}, \nicefrac{1}{3})$, the uniform policy. The black line is the average value over the 40 seeds for each setting. The red curves denote the seeds that did not reach a value of at least $0.9$ at the end of training.}~\label{appfig:learning_curves_dpg_035}
\end{figure}

Once again in this setting we can see that negative baselines tend to encourage convergence to a suboptimal arm while positive baselines help converge to the optimal arm.

\newpage %
\subsection{Simple gridworld}
\label{app:simple_mdp}
As a simple MDP with more than one state, we experiment using a 5x5 gridworld with two goal states, the closer one giving a reward of 0.8 and the further one a reward of 1. We ran the vanilla  policy gradient with a fixed stepsize and discount factor of $0.99$ multiple times for several baselines. 
Fig.~\ref{fig:mdp5x5} displays individual learning curves with the index of the episode on the x-axis, and the fraction of episodes where the agent reached the reward of 1 up to that point on the  y-axis.
To match the experiments for the four rooms domain in the main text, Fig.~\ref{fig:5x5_stats} shows returns and the entropy of the actions and state visitation distributions for multiple settings of the baseline. 
Once again, we see a difference between the smaller and larger baselines. In fact, the difference is more striking in this example since some learning curves get stuck at suboptimal policies. Overall, we see two main trends in this experiment: a) The larger the baseline, the more likely the agent converges to the optimal policy, and b) Agents with negative baselines converge faster, albeit sometimes to a suboptimal behaviour.  We emphasize that a) is not universally true and large enough baselines will lead to an increase in variance and a decrease in performance.

\begin{figure}[ht]
\centering 
  \begin{subfigure}[b]{0.2\linewidth}
    \includegraphics[trim={0cm -0.75cm -0.1cm 0},clip,width=\textwidth]{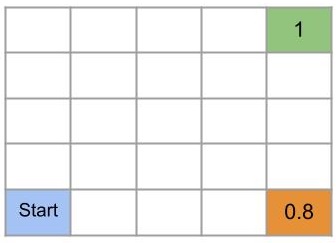}
    \caption{MDP used}
    \label{fig:gridworld}
  \end{subfigure}
  \begin{subfigure}[b]{0.26\linewidth}
    \includegraphics[trim={0cm 0cm 0cm 0},clip,width=\textwidth]{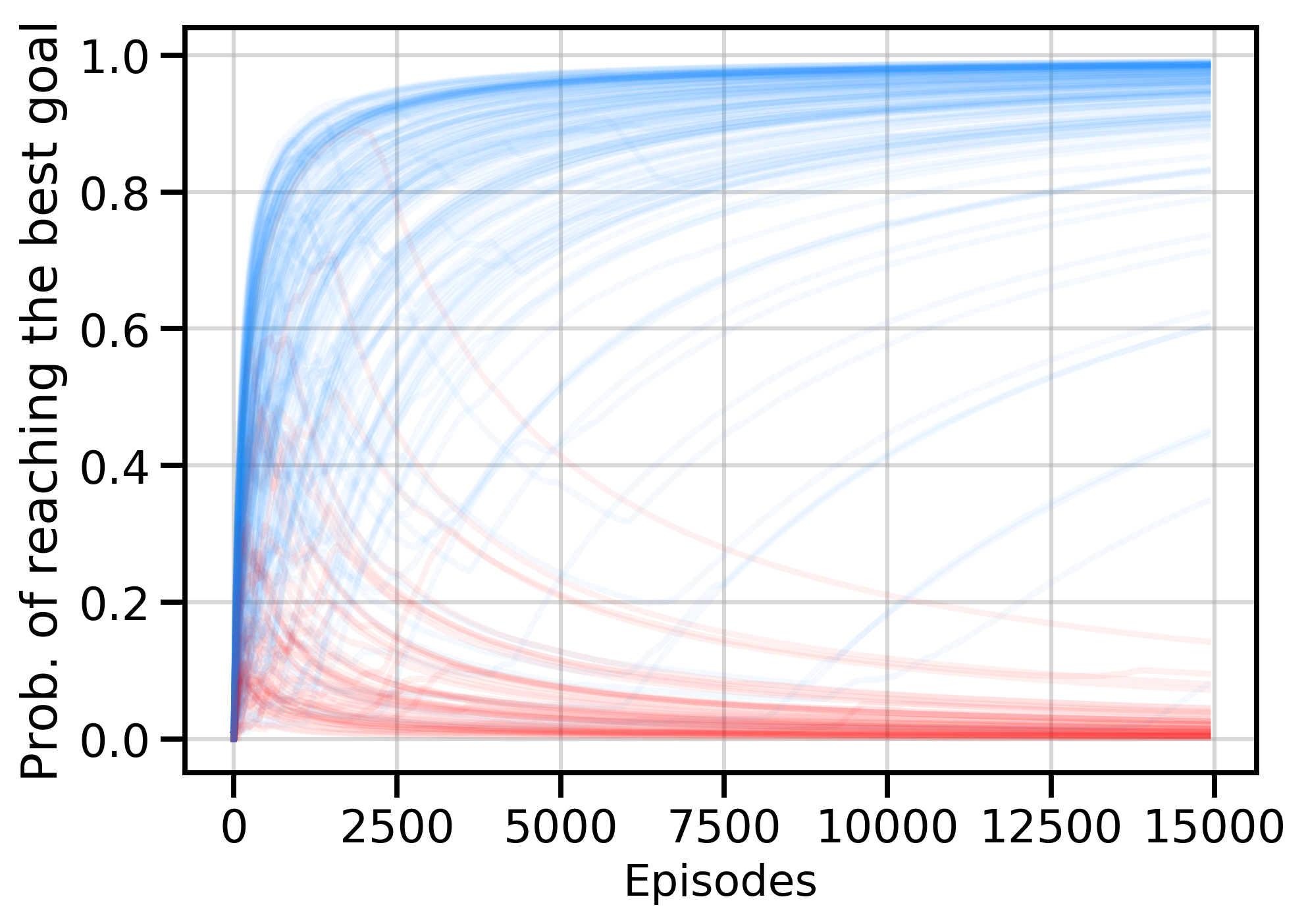}
    \caption{$b=-1$}
    \label{fig:mdp-1}
  \end{subfigure}
  \begin{subfigure}[b]{0.26\linewidth}
    \includegraphics[trim={0cm 0cm 0cm 0},clip,width=\textwidth]{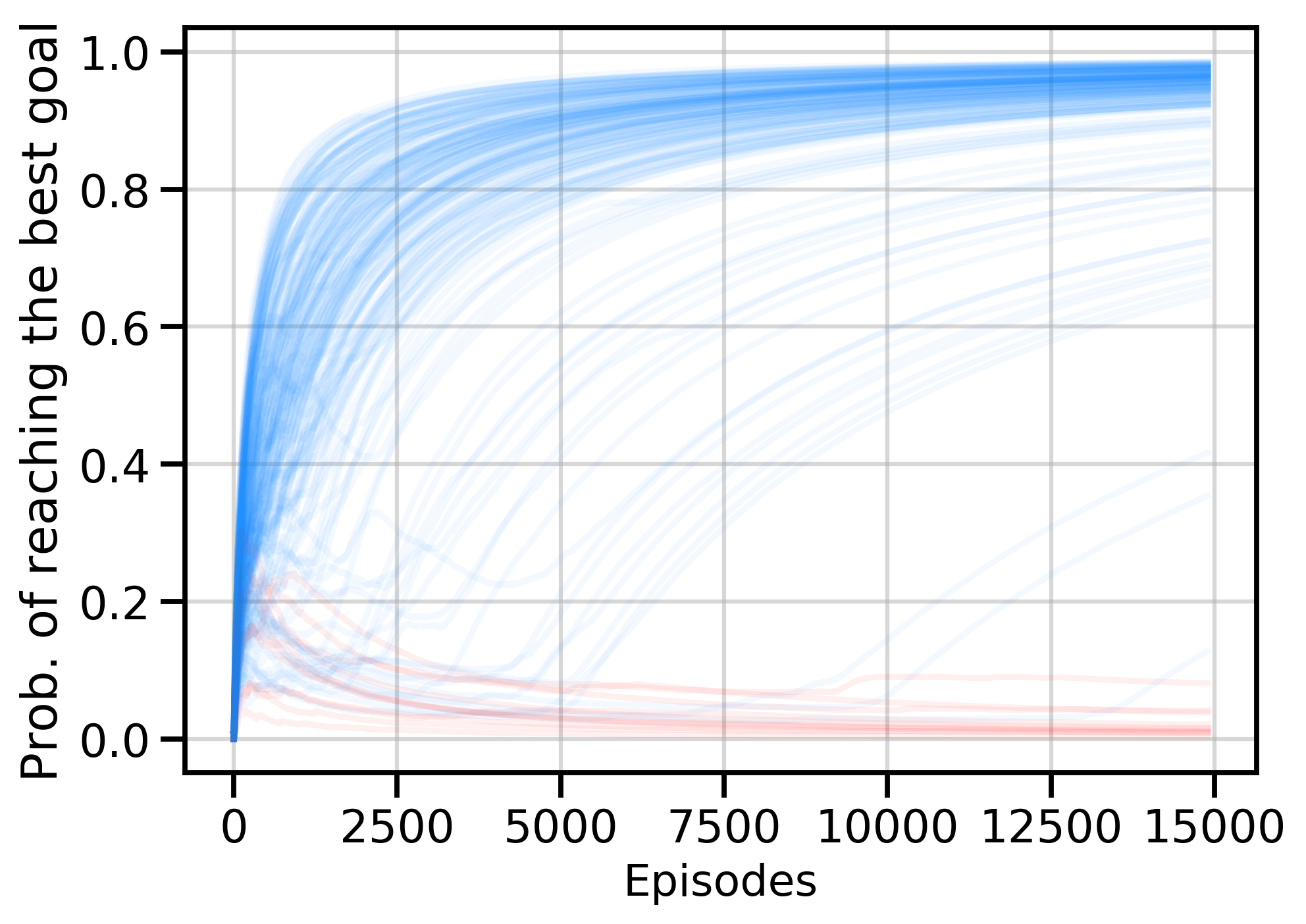}
    \caption{$b=0$}
    \label{fig:mdp0}
  \end{subfigure}
    \begin{subfigure}[b]{0.26\linewidth}
    \includegraphics[trim={0cm 0cm 0cm 0},clip,width=\textwidth]{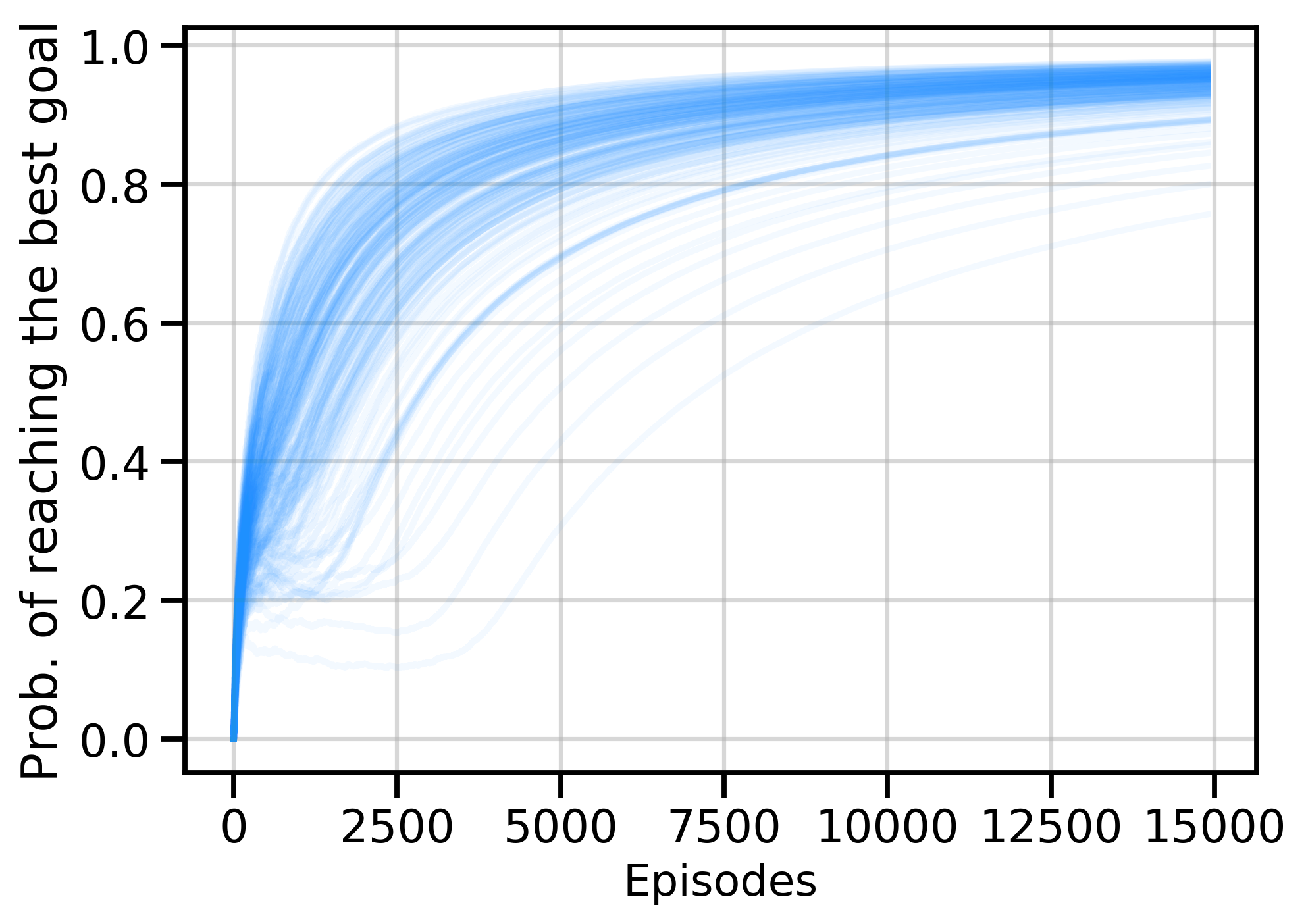}
    \caption{$b=1$}
    \label{fig:mdp1}
  \end{subfigure}

  \caption{Learning curves for a 5x5 gridworld with two goal states where the further goal is optimal.
  Trajectories in red do not converge to an optimal policy.
  }
  \label{fig:mdp5x5}
\end{figure}

\begin{figure}[!ht]
\centering
  \begin{subfigure}[b]{0.32\linewidth}
    \includegraphics[width=\textwidth]{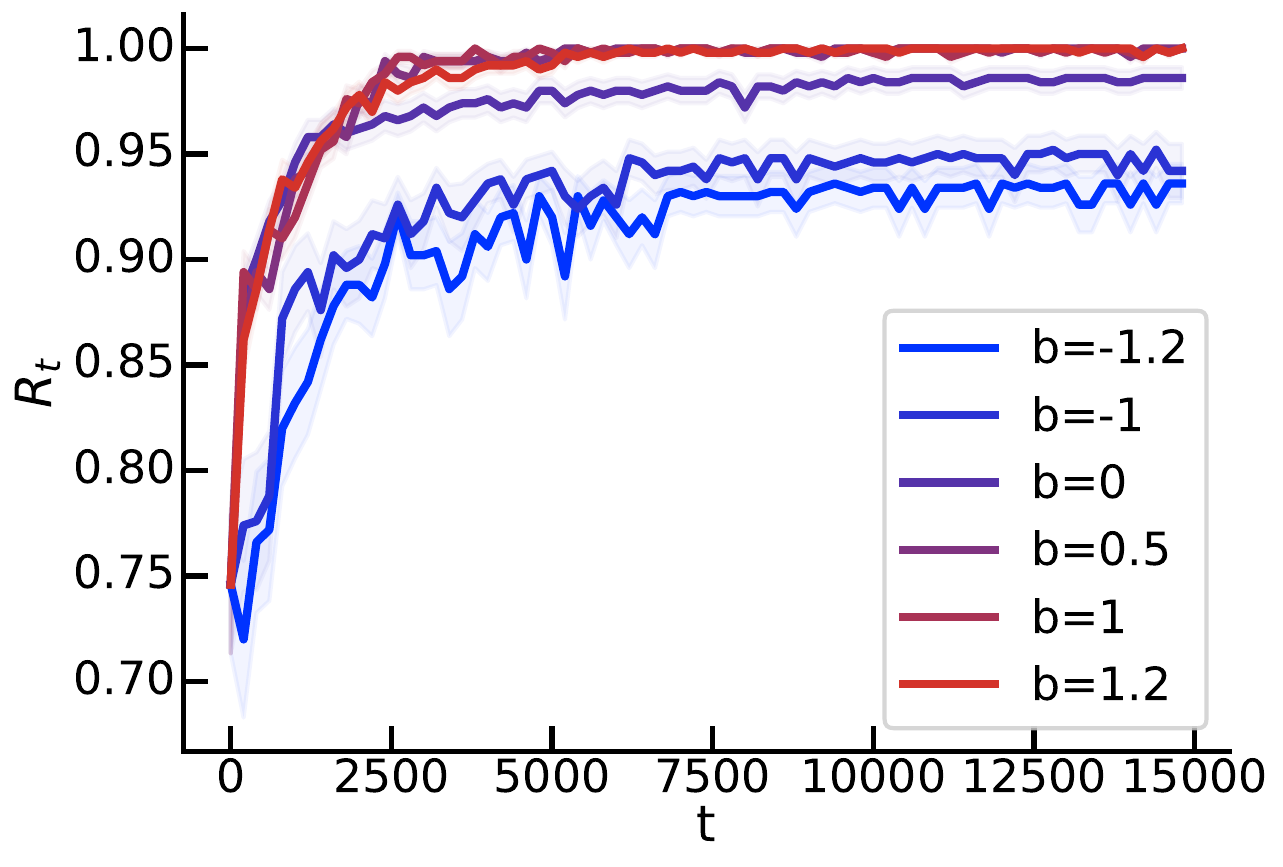}
    \caption{Returns}
    \label{fig:5x5_return}
  \end{subfigure}
    \begin{subfigure}[b]{0.32\linewidth}
    \includegraphics[width=\textwidth]{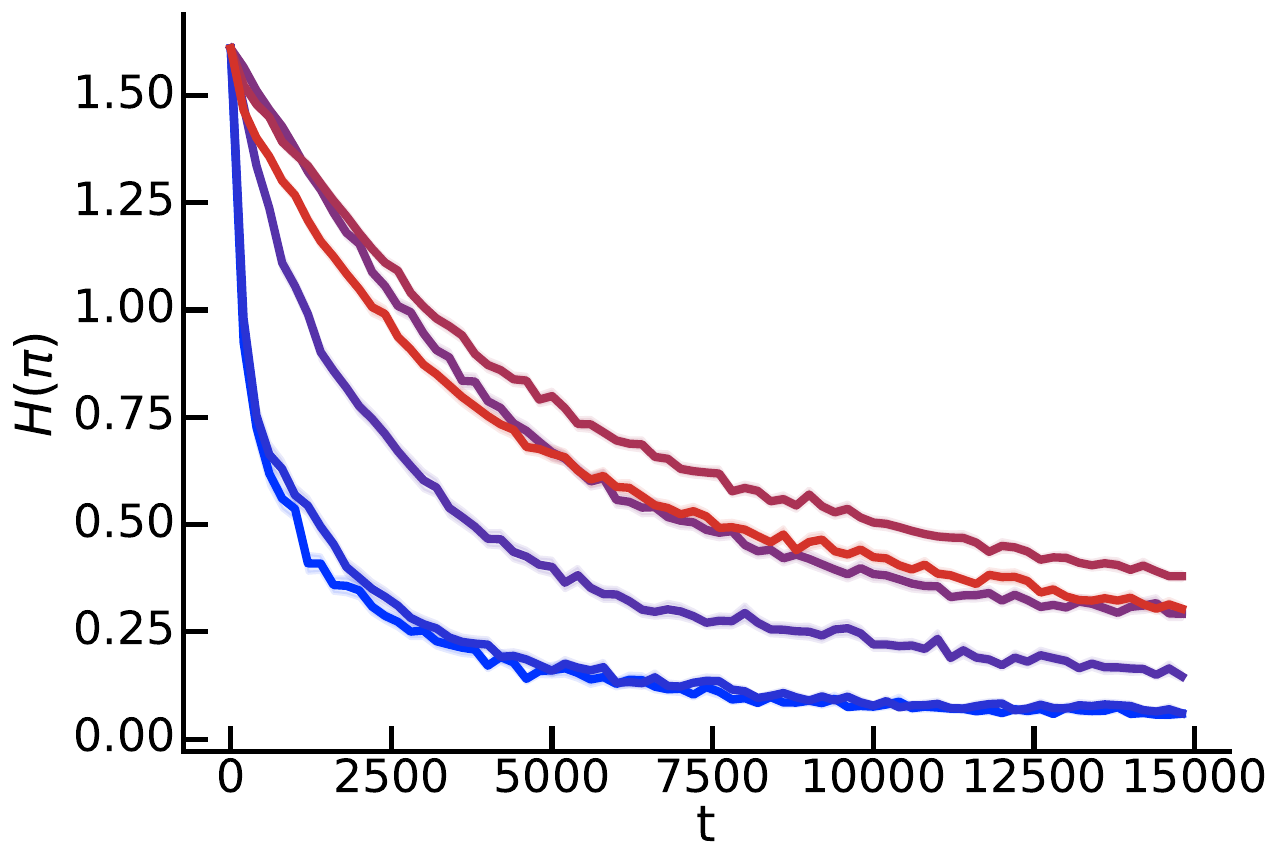}
    \caption{Action entropy}
    \label{fig:5x5_action}
  \end{subfigure}
  \begin{subfigure}[b]{0.32\linewidth}
    \includegraphics[width=\textwidth]{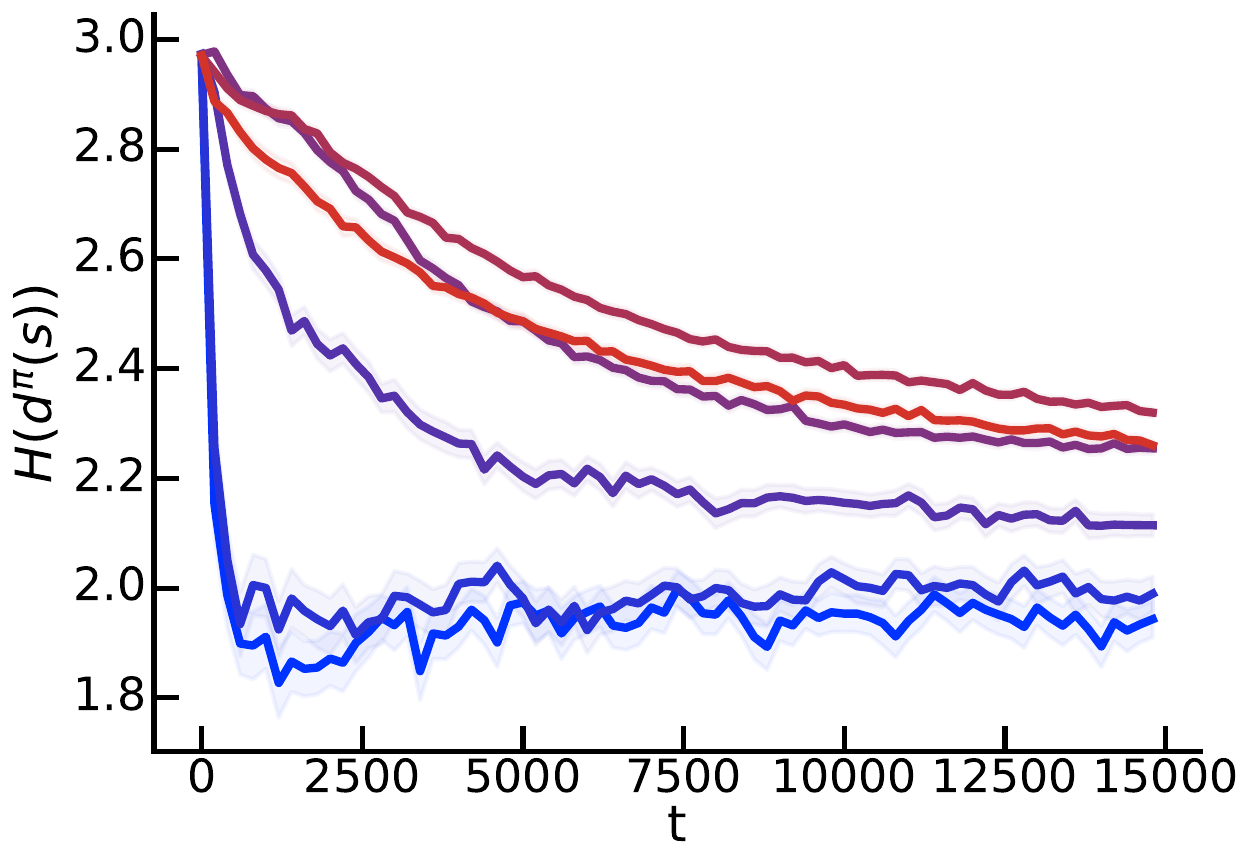}
    \caption{State visitation entropy}
    \label{fig:5x5_state}
  \end{subfigure}
 
  \caption{We plot the returns, the entropy of the policy over the states visited in each trajectory, and the entropy of the state visitation distribution averaged over 100 runs for multiple baselines for the 5x5 gridworld. The shaded regions denote one standard error and are close to the mean curve. Similar to the four rooms, the policy entropy of lower baselines tends to decay faster than for larger baselines, and smaller baselines tend to get stuck on suboptimal policies, as indicated by the returns~plot.~\label{fig:5x5_stats}}
\end{figure}

\subsection{Additional results on the 4 rooms environment}
\label{app:4rooms_mdp}
For the four-rooms gridworld discussed in the main text, we extend the experiments and provide additional details.
The environment is a 10x10 gridworld consisting of 4 rooms as depicted on Fig.~\ref{fig:4rooms_mdp} with a discount factor $\gamma=0.99$. The agent starts in the upper left room and two adjacent rooms contain a goal state of value $0.6$ (discounted, $\approx0.54$) or $0.3$ (discounted, $\approx0.27$). However, the best goal, with a value of $1$ (discounted, $\approx0.87$), lies in the furthest room, so that the agent must learn to cross the sub-optimal rooms and reach the furthest one. 

For the NPG algorithm used in the main text, we required solving for $Q_\pi(s,a)$ for the current policy $\pi$. This was done using dynamic programming on the true MDP, stopping when the change between successive approximations of the value function didn't differ more than $0.001$. 
Additionally, a more thorough derivation of the NPG estimate we use can be found in Appendix \ref{app:npg_mdp_estimate}.

We also experiment with using the vanilla policy gradient with the tabular softmax parameterization in the four-rooms environment. We use a similar estimator of the policy gradient which makes updates of the form: 
\[ \theta \xleftarrow{} \theta + \alpha (Q_{\pi_\theta} (s_i, a_i) - b) \nabla \log \pi_\theta (a_i|s_i) \]
for all observed $s_i, a_i$ in the sampled trajectory. 
As with the NPG estimator, we can find the minimum-variance baseline $b^*_\theta$ in closed-form and thus can choose baselines of the form $b^+ = b^*_\theta + \epsilon$ and $b^-_\theta = b^*_\theta - \epsilon$ to ensure equal variance as before. 
Fig. \ref{appfig:4rooms_vpg_absolute_perturb} plots the results. In this case, we find that there is not a large difference between the results for $+\epsilon$ and $-\epsilon$, unlike the results for NPG and those for vanilla PG in the bandit setting. 

The reason for this discrepancy may be due to the magnitudes of the perturbations $\epsilon$ relative to the size of the unperturbed update $Q_\pi(s_i, a_i) - b^*_\theta$. The magnitude of $Q_\pi(s_i, a_i) - b^*$ varies largely from the order of $0.001$ to $0.1$, even within an episode.  To investigate this further, we try another experiment using perturbations $\epsilon = c (\max_a Q_\pi(s_i,a) - b^*_\theta$ for various choices of $c > 0$. This would ensure that the magnitude of the perturbation is similar to the magnitude of $Q_\pi(s_i, a_i) - b^*$, while still controlling for the variance of the gradient estimates. 
In Fig. \ref{appfig:4rooms_vpg_relative_perturb}, we see that there is a difference between the $+\epsilon$ and $-\epsilon$ settings. As expected, the $+\epsilon$ baseline leads to larger action and state entropy although, in this case, this results in a reduction of performance. 
Overall, the differences between vanilla PG and natural PG are not fully understood and there may be many factors playing a role, possibly including the size of the updates, step sizes and the properties of the MDP. 

\begin{figure}[!ht]
\centering
  \begin{subfigure}[b]{0.32\linewidth}
    \includegraphics[width=\textwidth]{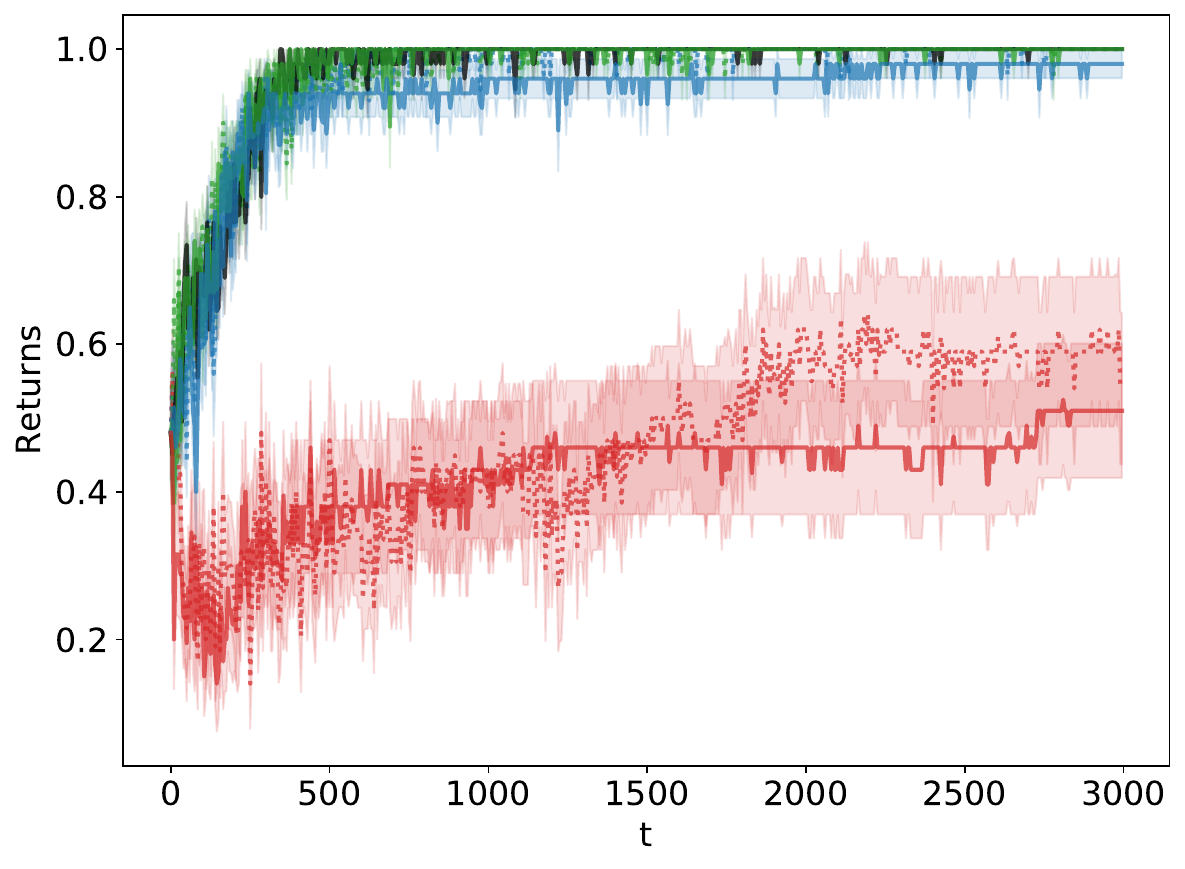}
    \caption{Returns}
  \end{subfigure}
    \begin{subfigure}[b]{0.32\linewidth}
    \includegraphics[width=\textwidth]{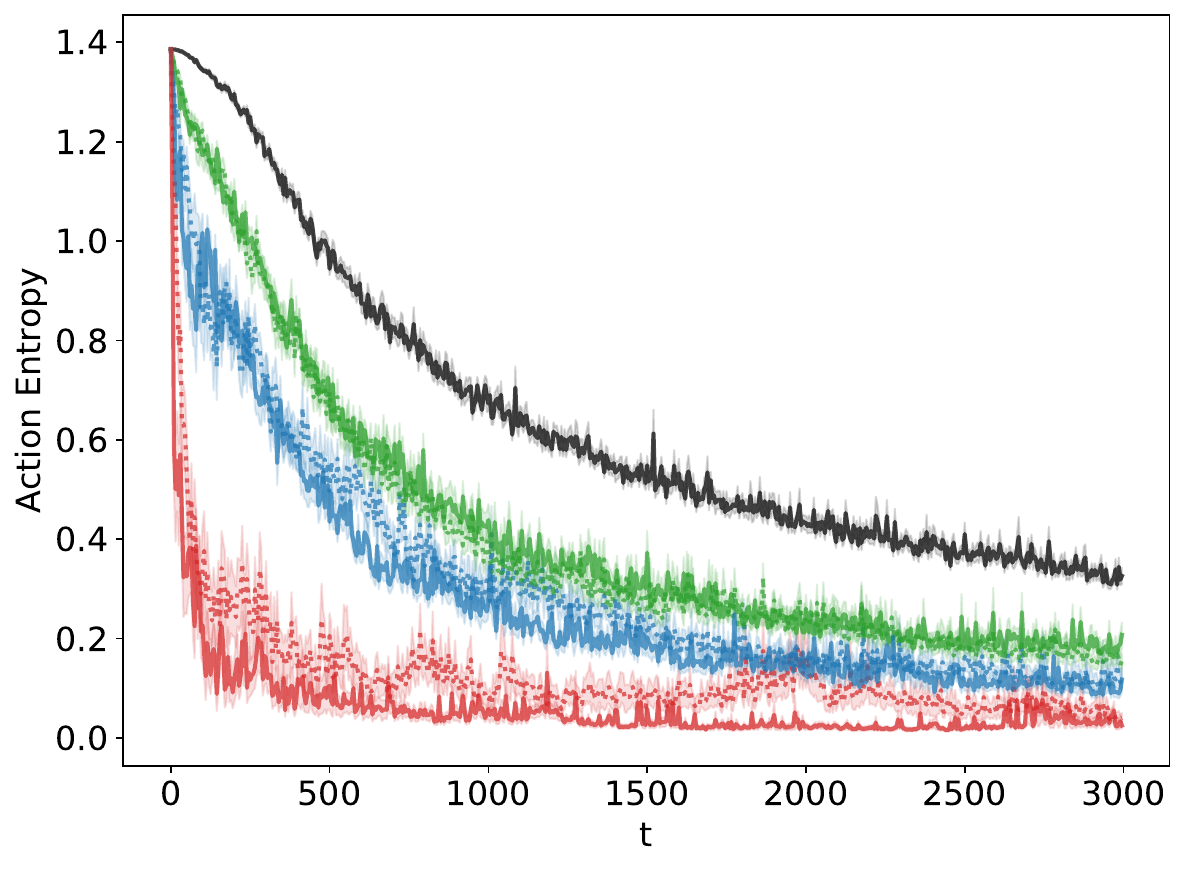}
    \caption{Action entropy}
  \end{subfigure}
  \begin{subfigure}[b]{0.32\linewidth}
    \includegraphics[width=\textwidth]{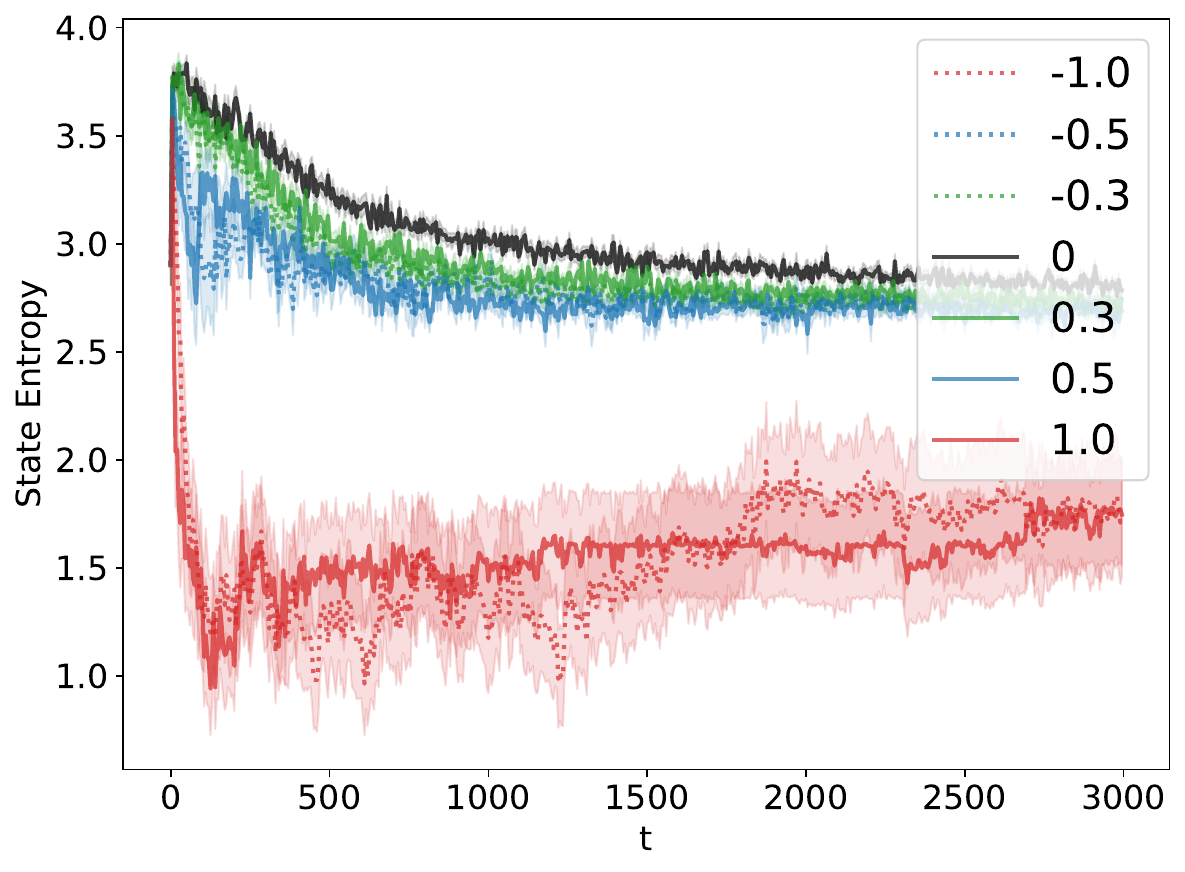}
    \caption{State visitation entropy}
  \end{subfigure}
  \caption{We plot results for vanilla policy gradient with perturbed minimum-variance baselines of the form $b^\ast_\theta + \epsilon$, with $\epsilon$ denoted in the legend. The step size is 0.5 and 20 runs are done. We see smaller differences between positive and negative $\epsilon$ values. ~\label{appfig:4rooms_vpg_relative_perturb}}
\end{figure}

\begin{figure}[!ht]
\centering
  \begin{subfigure}[b]{0.32\linewidth}
    \includegraphics[width=\textwidth]{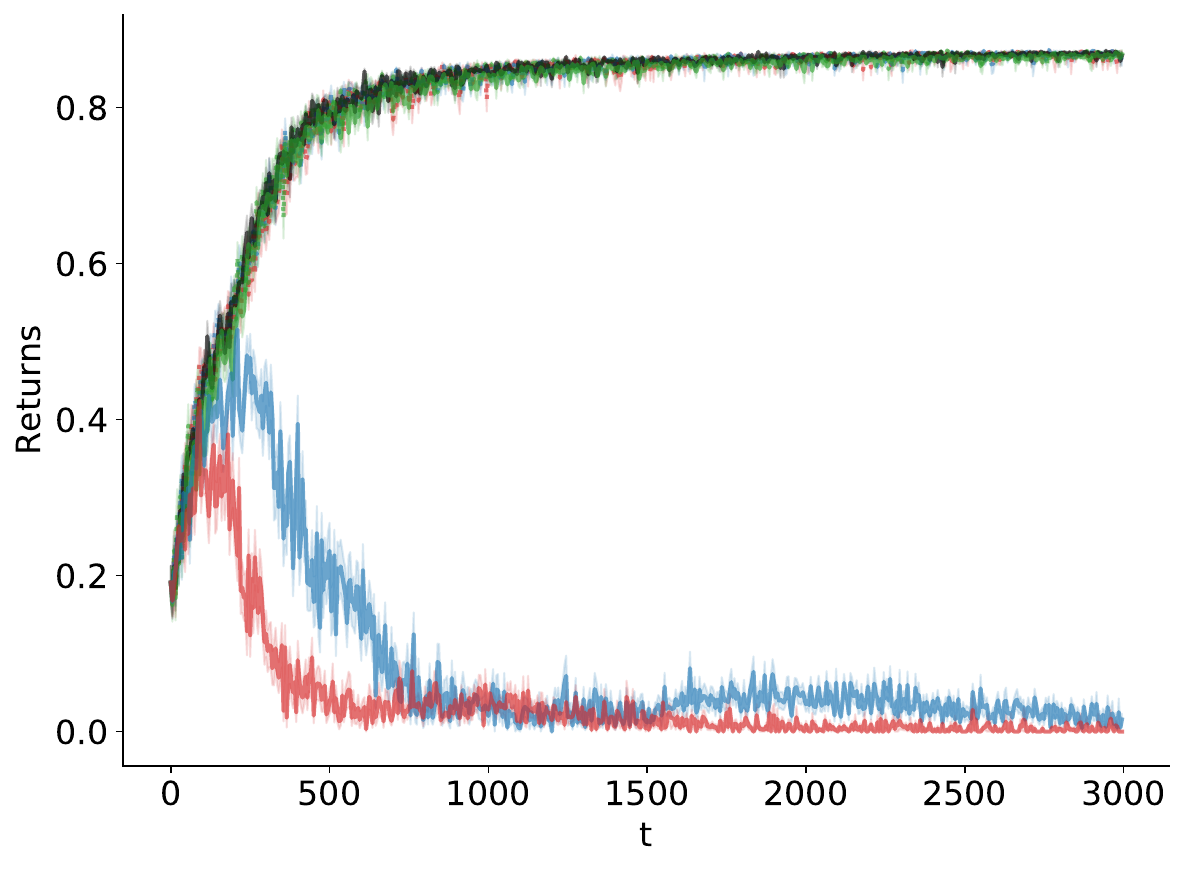}
    \caption{Returns}
  \end{subfigure}
    \begin{subfigure}[b]{0.32\linewidth}
    \includegraphics[width=\textwidth]{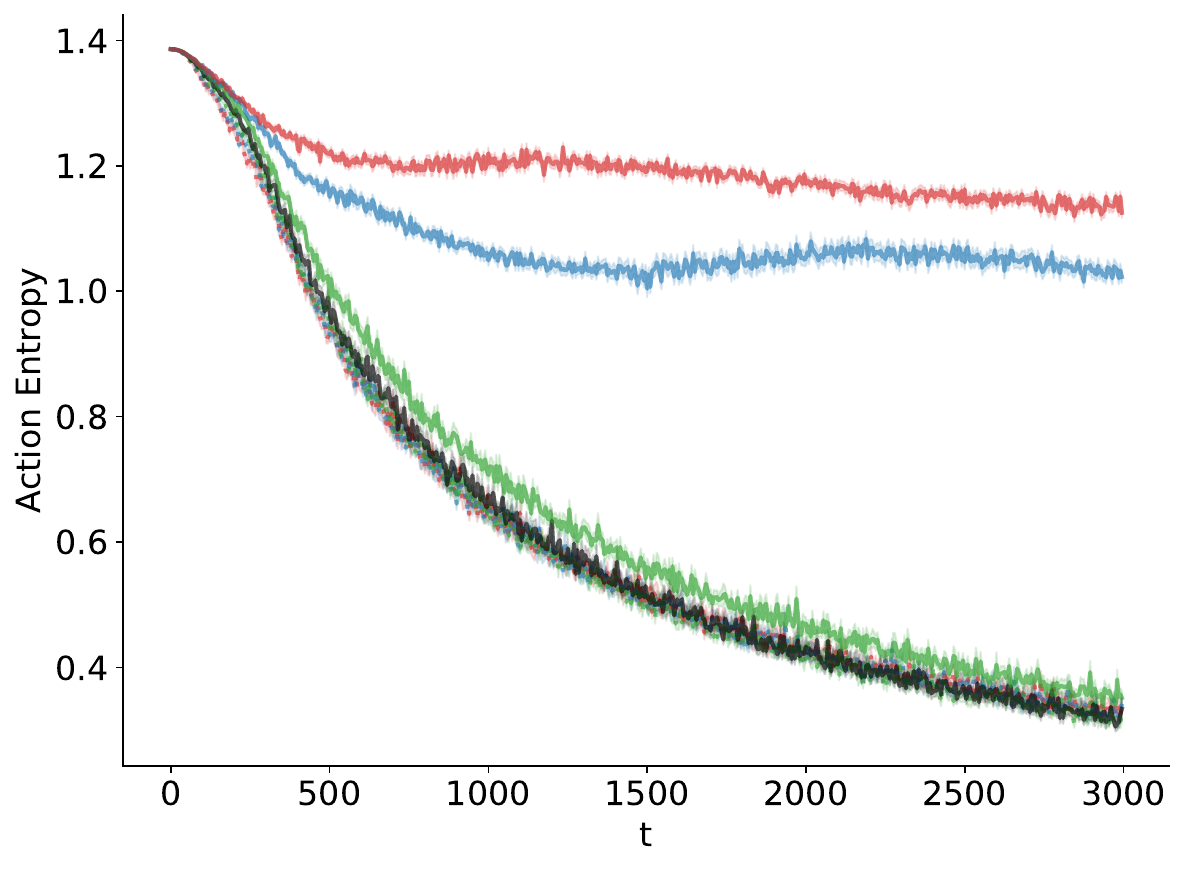}
    \caption{Action entropy}
  \end{subfigure}
  \begin{subfigure}[b]{0.32\linewidth}
    \includegraphics[width=\textwidth]{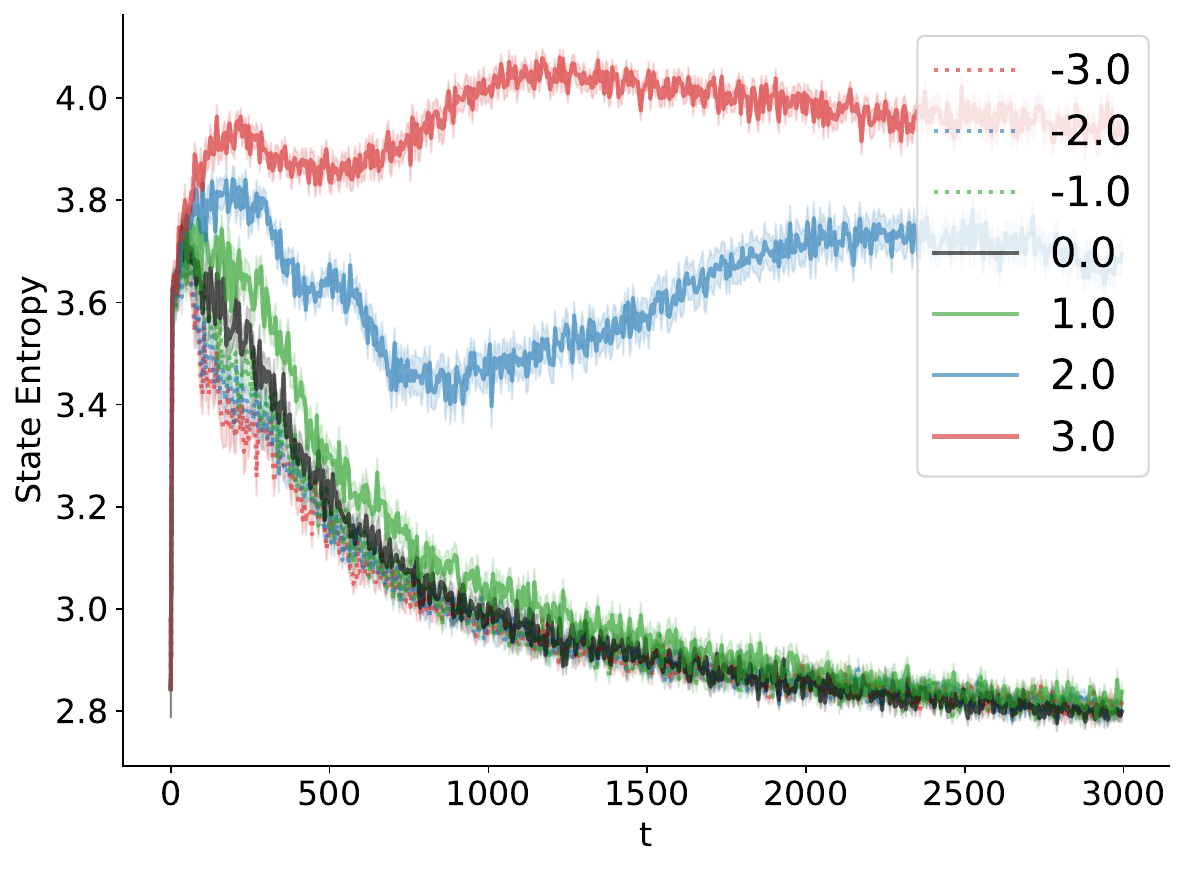}
    \caption{State visitation entropy}
  \end{subfigure}
  \caption{We plot results for vanilla policy gradient with perturbed minimum-variance baselines of the form $b^\ast_\theta + \epsilon$, where $\epsilon = c (\max_a Q_\pi(s_i,a) - b^*_\theta$ and $c$ is denoted in the legend. For a fixed $c$, we can observe a difference between the learning curves for the $+c$ and $-c$ settings. The step size is 0.5 and 50 runs are done. As expected, the action and state entropy for the positive settings of $c$ are larger than for the negative settings. In this case, this increased entropy does not translate to larger returns though and is a detriment to performance,  ~\label{appfig:4rooms_vpg_absolute_perturb}}
\end{figure}

Finally, we also experiment with the vanilla REINFORCE estimator with softmax parameterization where the estimated gradient for a trajectory is $(R(\tau_i) -b ) \nabla \log \pi(\tau_i)$ for $\tau_i$ being a trajectory of state, actions and rewards for an episode. For the REINFORCE estimator, it is difficult to compute the minimum-variance baseline so, instead, we utilize constant baselines.
Although we cannot ensure that the variance of the various baselines are the same, we could still expect to observe committal and non-committal behaviour depending on the sign of $R(\tau_i) -b$.  We use a step size of $0.1$.

\begin{figure}[!ht]
\centering
  \begin{subfigure}[b]{0.32\linewidth}
    \includegraphics[width=\textwidth]{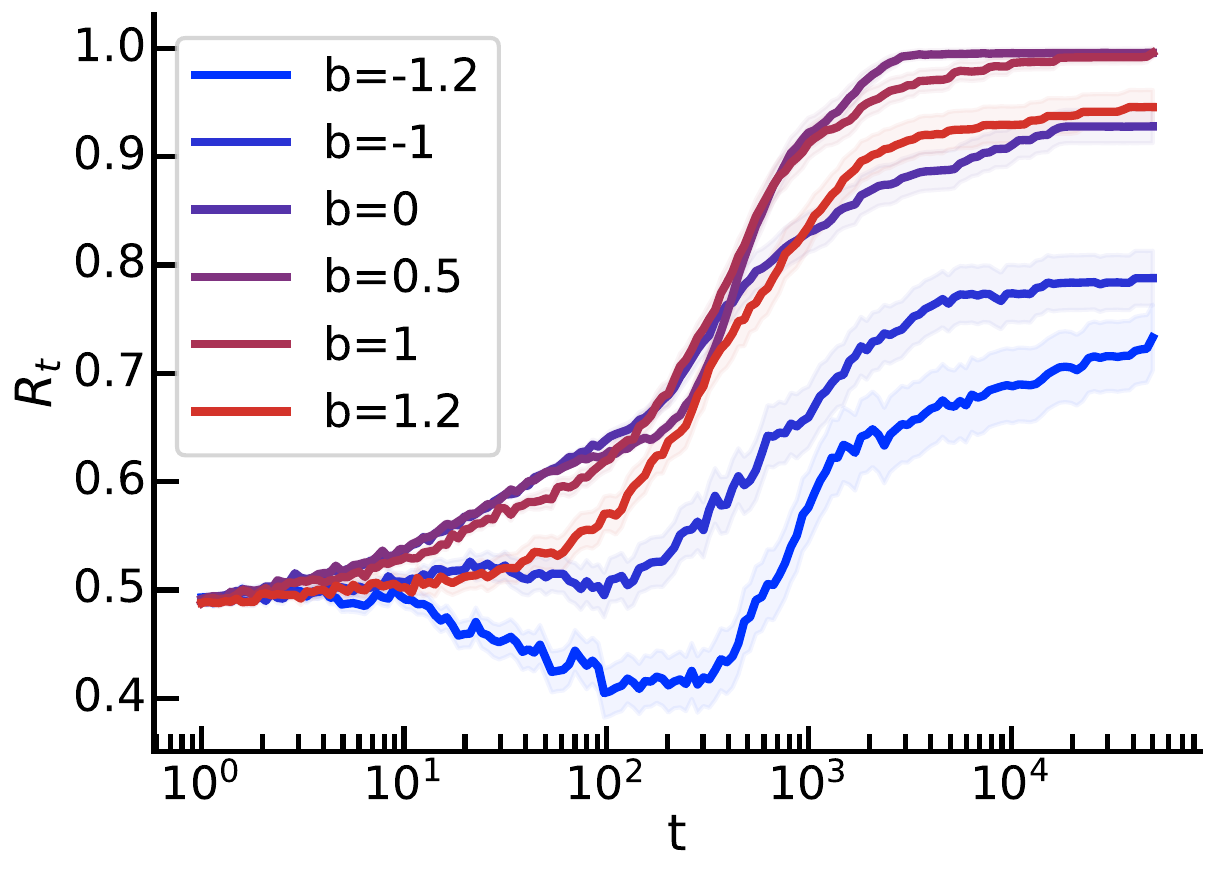}
    \caption{Returns}
  \end{subfigure}
    \begin{subfigure}[b]{0.32\linewidth}
    \includegraphics[width=\textwidth]{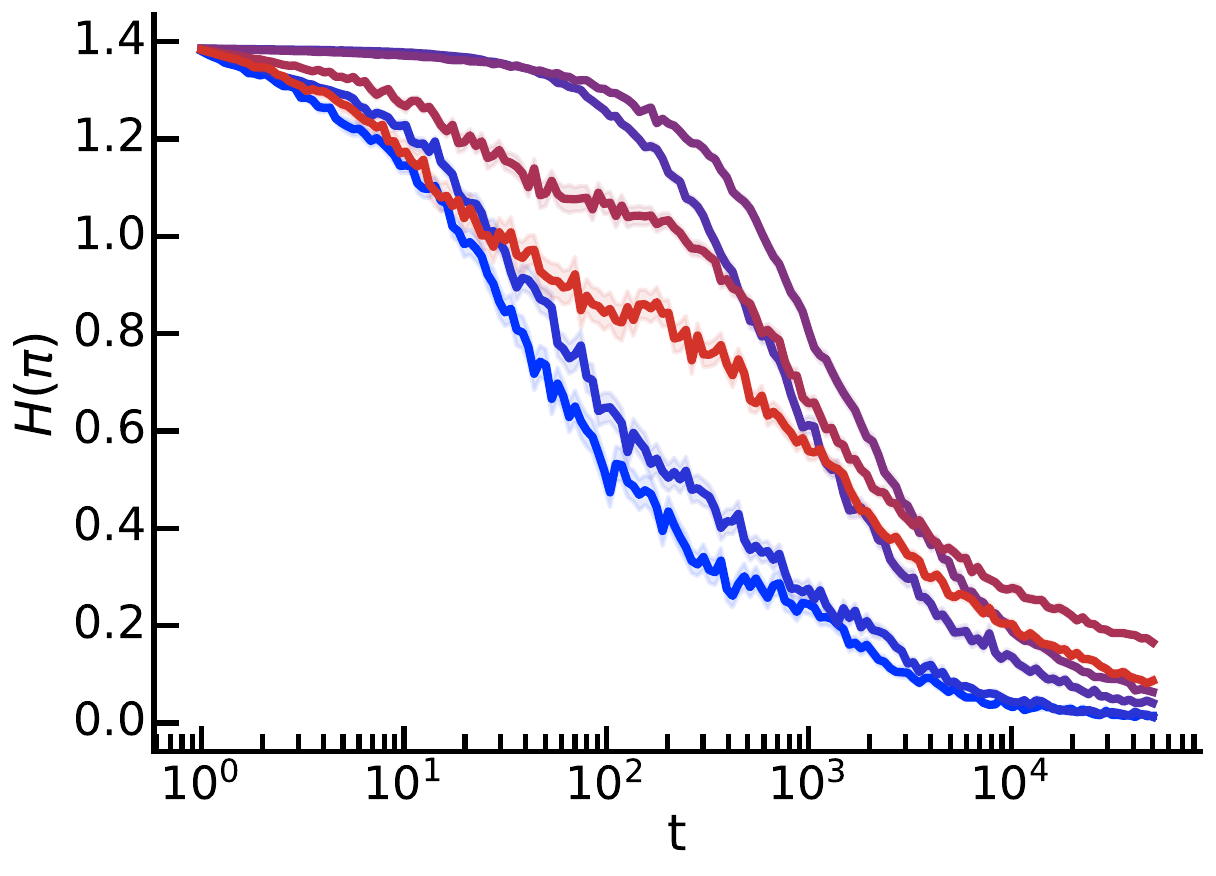}
    \caption{Action entropy}
  \end{subfigure}
  \begin{subfigure}[b]{0.32\linewidth}
    \includegraphics[width=\textwidth]{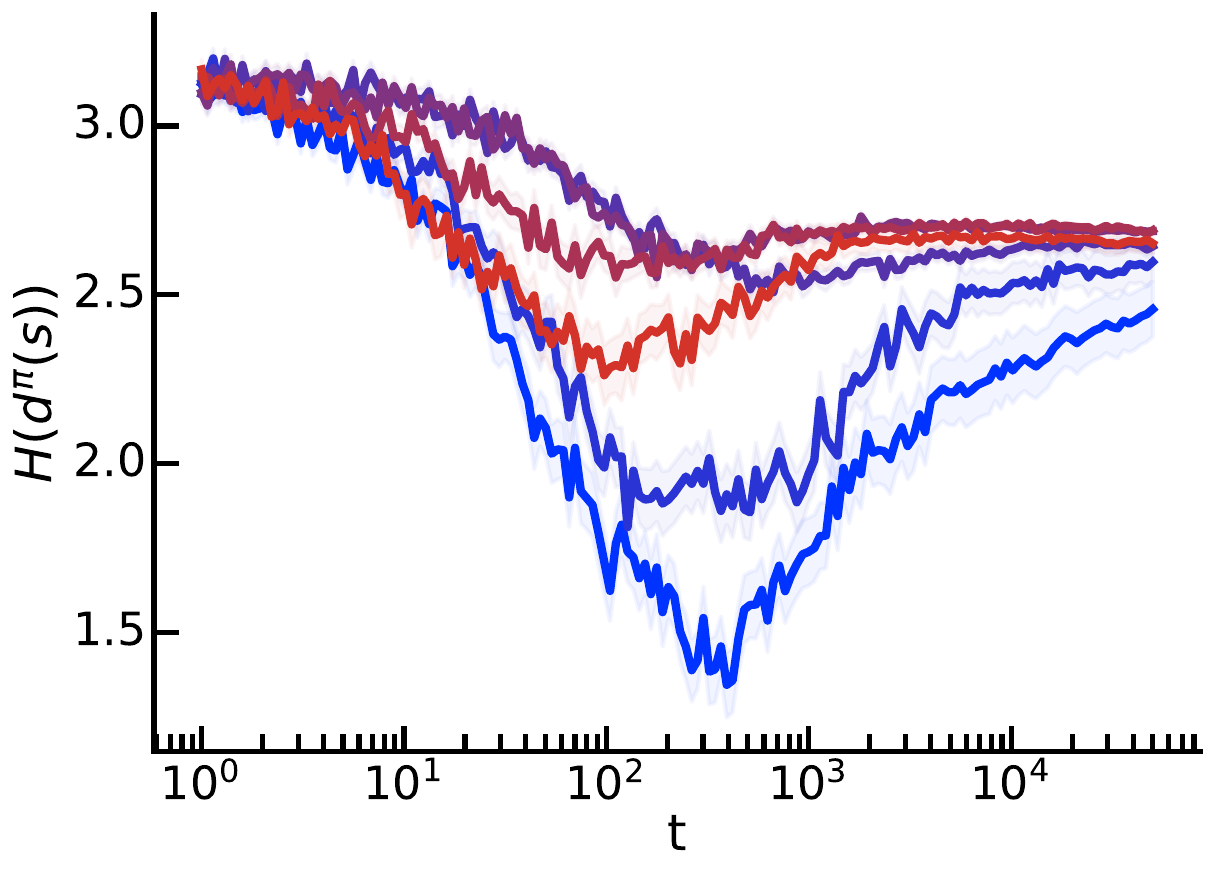}
    \caption{State visitation entropy}
  \end{subfigure}
  \caption{We plot the results for using REINFORCE with constant baselines. Once again, the policy entropy of lower baselines tends to decay faster than for larger baselines, and smaller baselines tend to get stuck on suboptimal policies, as indicated by the returns~plot.~\label{appfig:4rooms_vpg_constant_baselines}}
\end{figure}

We consider an alternative visualization for the experiment of vanilla policy gradient with constant baselines: Figures~\ref{fig:4rooms_m1},~\ref{fig:4rooms_0} and~\ref{fig:4rooms_1}. Each point in the simplex is a policy, and the position is an estimate, computed with $1,000$ Monte-Carlo samples, of the probability of the agent reaching each of the 3 goals. We observe that the starting point of the curve is equidistant to the 2 sub-optimal goals but further from the best goal, which is coherent with the geometry of the MDP. Because we have a discount factor of $\gamma = 0.99$, the agent first learns to reach the best goal in an adjacent room to the starting one, and only then it learns to reach the globally optimal goal fast enough for its reward to be the best one. 

In these plots, we can see differences between $b=-1$ and $b=1$. For the lower baseline, we see that trajectories are much more noisy, with some curves going closer to the bottom-right corner, corresponding to the worst goal. This may suggest that the policies exhibit committal behaviour by moving further towards bad policies. On the other hand, for $b=1$, every trajectory seems to reliably move towards the top corner before converging to the bottom-left, an optimal policy.

\begin{figure}[!ht]
\centering
  \begin{subfigure}[b]{0.32\linewidth}
    \includegraphics[width=\textwidth]{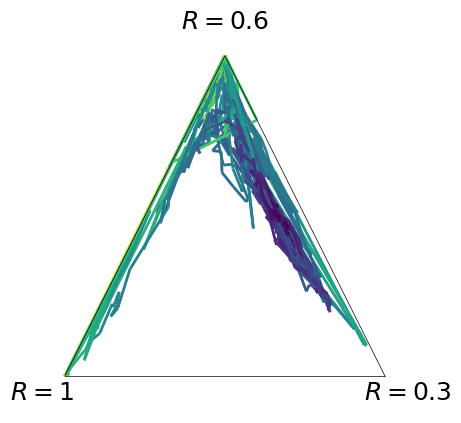}
    \caption{$b=-1$}
    \label{fig:4rooms_m1}
  \end{subfigure}
    \begin{subfigure}[b]{0.32\linewidth}
    \includegraphics[width=\textwidth]{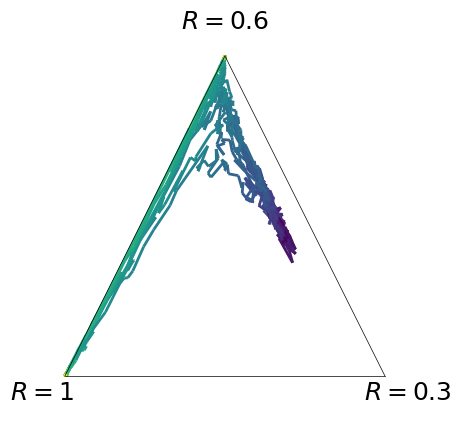}
    \caption{$b=0$}
    \label{fig:4rooms_0}
  \end{subfigure}
  \begin{subfigure}[b]{0.32\linewidth}
    \includegraphics[width=\textwidth]{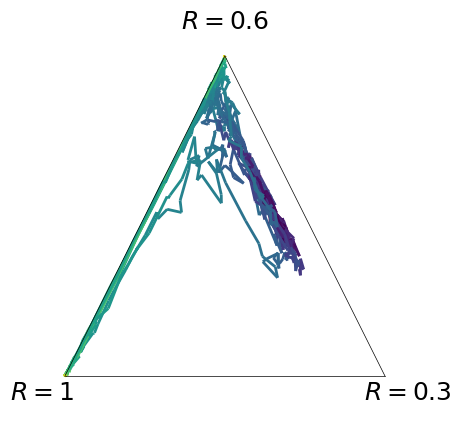}
    \caption{$b=1$}
    \label{fig:4rooms_1}
  \end{subfigure}

  \caption{We plot 10 different trajectories of vanilla policy gradient (REINFORCE) using different constant on a 4 rooms MDP with goal rewards $(1, 0.6, 0.3)$. The color of each trajectory represents time and each point of the simplex represents the probability that a policy reaches one of the 3 goals.~\label{fig:trajectories_4rooms}}
\end{figure}

\section{Two-armed bandit theory}
\label{app:theory_2arm}
In this section, we expand on the results for the two-armed bandit. First, we show that there is some probability of converging to the wrong policy when using natural policy gradient with a constant baseline.
Next, we consider all cases of the perturbed minimum-variance baseline ($b = b^* + \epsilon)$ and show that some cases lead to convergence to the optimal policy with probability 1 while others do not. In particular there is a difference between $\epsilon < -1 $ and $\epsilon > 1$, even though these settings can result in the same variance of the gradient estimates. 
Finally, we prove that the vanilla policy gradient results in convergence in probability to the optimal policy regardless of the baseline, in contrast to the natural policy gradient. 

\paragraph{Notations:}
\begin{itemize}
    \item Our objective is $J(\theta) = \E_{\pi_\theta} [R(\tau)]$, the expected reward for current parameter $\theta$.
    \item $p_t = \sigma(\theta_t)$ is the probability of sampling the optimal arm (arm 1).
    \item $P_1$ is the distribution over rewards than can be obtained from pulling arm 1. Its expected value is $\mu_1 = \E_{r_1 \sim P_1}[r_1]$. Respectively $P_0, \mu_0$ for the suboptimal arm.
    \item $g_t$ is a stochastic unbiased estimate of $\nabla_\theta J(\theta_t)$. It will take different forms depending on whether we use vanilla or natural policy gradient and whether we use importance sampling or not.
    \item For $\{ \alpha_t\}_t$ the sequence of stepsizes, the current parameter $\theta_t$ is a random variable equal to $\theta_t = \sum_{i=1}^t \alpha_i g_i + \theta_0$ where $\theta_0$ is the initial parameter value.
\end{itemize}

For many convergence proofs, we will use the fact that the sequence $\theta_t - \E[\theta_t]$ forms a martingale. In other words, the noise around the expected value is a martingale, which we define below.
\begin{definition}[Martingale] A discrete-time martingale is a stochastic process $\{X_t\}_{t\in \sN}$ such that
\begin{itemize}
    \item $\E[|X_t|]  <  +\infty$
    \item $\E[X_{t+1} | X_t, \dots X_0] = X_t$
\end{itemize}

\end{definition}
\begin{example} For $g_t$ a stochastic estimate of $\nabla J(\theta_t)$ we have
$X_t = \E[\theta_t] - \theta_t$  is a martingale. As $\theta_t = \theta_0 + \sum_i \alpha_i g_i$, $X_t$ can also be rewritten as
$X_t = \E[\theta_t - \theta_0] - (\theta_t-\theta_0) = \sum_{i=0}^{t} \alpha_i \big(\E[ g_i | \theta_0] -  g_i \big)$.%
\end{example}

We will also be making use of Azuma-Hoeffding's inequality to show that the iterates stay within a certain region with high-probability, leading to convergence to the optimal policy.
\begin{lemma}[Azuma-Hoeffding's inequality]
For $\{ X_t \}$ a martingale, if
$|X_t - X_{t-1}| \le c_t$ almost surely, then we have $\forall t, \epsilon \ge 0$
$$\sP(X_t - X_0 \ge \epsilon) \le \exp\Bigg( -\frac{\epsilon^2}{2\sum_{i=1}^t c_i^2}\Bigg)$$
\end{lemma}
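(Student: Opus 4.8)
The plan is to prove this concentration bound via the standard exponential-moment (Chernoff) method, combined with Hoeffding's lemma applied conditionally to each martingale increment. Write $D_i = X_i - X_{i-1}$ for the martingale differences and let $\mathcal{F}_{i-1}$ denote the $\sigma$-algebra generated by $X_0, \dots, X_{i-1}$. The martingale property gives $\E[D_i \mid \mathcal{F}_{i-1}] = 0$, and by hypothesis $|D_i| \le c_i$ almost surely, so conditionally on $\mathcal{F}_{i-1}$ each $D_i$ is a mean-zero random variable supported on the interval $[-c_i, c_i]$. Since $X_t - X_0 = \sum_{i=1}^t D_i$, the goal reduces to controlling the upper tail of this sum of bounded martingale differences.

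First I would apply Markov's inequality to the exponentiated deviation: for any $\lambda > 0$,
$$\sP(X_t - X_0 \ge \epsilon) \le e^{-\lambda \epsilon}\, \E\!\left[e^{\lambda(X_t - X_0)}\right].$$
Next I would bound the moment generating function by peeling off increments one at a time using the tower property. Conditioning on $\mathcal{F}_{t-1}$, the factor $e^{\lambda \sum_{i<t} D_i}$ is measurable and factors out, leaving $\E[e^{\lambda D_t} \mid \mathcal{F}_{t-1}]$ inside. The crucial sub-step, Hoeffding's lemma, asserts that a mean-zero variable bounded in an interval of length $2c_t$ satisfies $\E[e^{\lambda D_t} \mid \mathcal{F}_{t-1}] \le e^{\lambda^2 c_t^2 / 2}$. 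Iterating this peeling over $i = t, t-1, \dots, 1$ yields
$$\E\!\left[e^{\lambda(X_t - X_0)}\right] \le \exp\!\left(\tfrac{\lambda^2}{2} \sum_{i=1}^t c_i^2\right).$$

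Finally I would substitute this bound back and optimize over the free parameter $\lambda$: the exponent $-\lambda \epsilon + \tfrac{\lambda^2}{2}\sum_i c_i^2$ is a convex quadratic in $\lambda$, minimized at $\lambda^\ast = \epsilon / \sum_i c_i^2$, which produces exactly the claimed bound $\exp\!\big(-\epsilon^2 / (2\sum_{i=1}^t c_i^2)\big)$.

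I expect the main obstacle to be establishing Hoeffding's lemma, which is the only genuinely non-routine ingredient; everything else is bookkeeping with the tower property and a one-line optimization. The standard argument uses convexity of $y \mapsto e^{\lambda y}$ to dominate it by its secant line on $[-c_t, c_t]$, takes conditional expectations (where the mean-zero assumption eliminates the linear term), and then reduces the resulting expression to a single-variable function $\phi(u)$ whose second derivative is bounded by $1/4$; a second-order Taylor expansion around $u=0$ then gives $\phi(u) \le u^2/8$, which after accounting for the interval length $2c_t$ yields the constant $c_t^2/2$ needed above.
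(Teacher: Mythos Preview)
Your proof proposal is correct and is in fact the classical proof of the Azuma--Hoeffding inequality. However, note that the paper does not actually prove this lemma: it is stated as a known result without proof, and is then invoked as a black box in the subsequent propositions. So there is no ``paper's own proof'' to compare against; you have supplied the standard textbook argument where the paper simply cites the inequality.
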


\subsection{Convergence to a suboptimal policy with a constant baseline}
\label{app:2arm_constant_baseline_div}
For the proofs in this subsection, we assume that the step size is constant i.e. $\alpha_t = \alpha$ for all $t$ and that the rewards are deterministic. 

\divtwoarms*

\begin{proof}
First, we deal with the case where $\theta_0 < 0$.
\begin{align*}
    1 - \sigma(\theta_0 - \alpha b t) &\ge 1 - \exp (\theta_0 - \alpha b t) \\
\end{align*}

Next, we use the bound $1-x \ge \exp (\frac{-x}{1-x} )$.
This bound can be derived as follows:
\begin{align*}
1 - u &\le e^{-u} \\ 
1 - e^{-u} &\le u \\
1 - \frac{1}{y} &\le \log y, \quad \text{substitute } u = \log y \text { for } y > 0 \\
\frac{-x}{1-x} &\le \log (1-x), \quad \text{substitute } y = 1-x \text{ for } x \in [0, 1) \\
\exp \left( \frac{-x}{1-x} \right) & \le 1 - x.
\end{align*}

Continuing with $x = \exp(\theta_0 - \alpha b t)$, the bound holds when $x \in [0,1)$, which is satisfied assuming $\theta_0 \le 0$.

\begin{align*}
    1 - \sigma(\theta_0 - \alpha b t) &\ge \exp \left(\frac{-1}{e^{-\theta_0 + \alpha b t} - 1} \right)\\
\end{align*}

For now we ignore $t=0$ and we will just multiply it back in at the end.
\begin{align*}
    \prod_{t=1}^\infty [1 - \sigma(\theta_0 - \alpha b t)] &\ge \prod_{t=1}^\infty \exp \left(\frac{-1}{e^{-\theta_0 + \alpha b t} - 1} \right) \\
    &=  \exp \sum_{t=1}^\infty \left(\frac{-1}{e^{-\theta_0 + \alpha b t} - 1} \right) \\
    &\ge \exp \left( - \int_{t=1}^\infty \frac{1}{e^{-\theta_0 + \alpha b t} - 1} dt \right)
\end{align*}
The last line follows by considering the integrand as the right endpoints of rectangles approximating the area above the curve.

Solving this integral by substituting $y = -\theta_0 + \alpha b t$, multiplying the numerator and denominator by $e^y$ and substituting $u = e^y$, we get:
\begin{align*}
    &= \exp \left( \frac{1}{\alpha b} \log (1 - e^{\theta_0-\alpha b}) \right) \\ 
    &= \left(1- e^{\theta_0 -\alpha b} \right)^{\frac{1}{\alpha b}} 
\end{align*}

Finally we have:

\[ P(\text{left forever}) \ge (1-e^{\theta_0}) (1-e^{\theta_0 - \alpha b})^\frac{1}{\alpha b}  \]

If $\theta_0 > 0$, then there is a positive probability of reaching $\theta < 0$ in a finite number of steps since choosing action 2 makes a step of size $\alpha b$ in the left direction and we will reach $\theta_t < 0$ after $m = \frac{\theta_0 - 0}{\alpha b}$ steps leftwards. The probability of making $m$ left steps in a row is positive. 
So, we can simply lower bound the probability of picking left forever by the product of that probability and the derived bound for $\theta_0 \le 0$.
\end{proof}

\begin{corollary} \label{cor:twoarm_linear_regret}
The regret for the previously described two-armed bandit is linear.
\end{corollary}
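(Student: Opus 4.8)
The plan is to obtain this corollary as an immediate consequence of Proposition~\ref{proposition_divergence}. First I would pin down the notion of regret: for the deterministic two-armed bandit of Section~\ref{sec:divergence_example}, with $r_1 = 1$ and $r_0 = 0$, the cumulative regret after $T$ rounds is $\mathrm{Reg}(T) = \sum_{t=1}^{T}(r_1 - r_{a_t}) = \#\{\, t \le T : a_t = 0 \,\}$, i.e.\ the number of pulls of the suboptimal arm. This is a nonnegative random variable, so establishing ``linear regret'' amounts to exhibiting a constant $c > 0$ independent of $T$ with $\E[\mathrm{Reg}(T)] \ge c\,T$ for all $T$ --- equivalently, the regret cannot be sublinear, in sharp contrast with the $O(\log T)$ or $O(\sqrt{T})$ guarantees of standard bandit algorithms.

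Next I would invoke Proposition~\ref{proposition_divergence} directly: the event $E = \{\, a_t = 0 \text{ for all } t \,\}$ of ``picking the suboptimal action forever'' has probability $\rho > 0$ (the explicit lower bound being the one stated in Proposition~\ref{proposition_divergence}, which is strictly positive for any $\theta_0$ since with positive probability $\theta$ is first driven below $0$ and the bound for $\theta_0 \le 0$ then applies). On the event $E$, every single round pulls the suboptimal arm, hence $\mathrm{Reg}(T) = T$ for every $T$. Since $\mathrm{Reg}(T) \ge 0$ always, truncating the expectation to $E$ gives
\[
\E[\mathrm{Reg}(T)] \;\ge\; \E[\mathrm{Reg}(T)\,\mathbbm{1}_E] \;=\; T\cdot\sP(E) \;\ge\; \rho\,T,
\]
which is exactly linear regret, with constant $c = \rho$.

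There is essentially no real obstacle here; the only points worth stating carefully are that $r_1 - r_0 = 1$ (so each suboptimal pull contributes exactly $1$ to the regret) and that $\rho$ does not depend on $T$ (so the bound genuinely grows linearly in $T$). If one instead prefers the pseudo-regret $\sum_{t=1}^{T}(1 - p_t)$ with $p_t = \sigma(\theta_t)$, the same argument works: on $E$ the parameter $\theta_t$ is nonincreasing and tends to $-\infty$, so $1 - p_t \ge 1 - \sigma(\theta_0) =: c_0 > 0$ for all $t$, giving $\sum_{t=1}^{T}(1-p_t) \ge c_0 T$ on $E$ and hence $\E[\mathrm{Reg}(T)] \ge \rho\, c_0\, T = \Omega(T)$ again.
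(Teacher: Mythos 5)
Your proof is correct and follows essentially the same route as the paper: restrict the expectation of the regret to the event of pulling the suboptimal arm forever (which has positive probability by Proposition~\ref{proposition_divergence}), note that each such pull contributes regret $1$, and conclude $\E[\mathrm{Regret}(T)] \ge T \cdot \sP(\text{suboptimal forever}) = \Omega(T)$. The only cosmetic difference is that the paper phrases the intermediate bound via the event ``left $T$ times'' before passing to ``left forever,'' whereas you truncate directly to the latter; the argument is the same.
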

\begin{proof}
Letting $R_t$ be the reward collected at time $t$,
\begin{align*}
    Regret(T) &= \E \left[ \sum_{t=1}^T (1-b - R_t) \right] \\
    &\ge \sum_{t=1}^T 1 \times Pr(\text{left $T$ times}) \\
    &\ge \sum_{t=1}^T P(\text{left forever}) \\
    &= T \times  P(\text{left forever}).
\end{align*}
The second line follows since choosing the left action at each step incurs a regret of $1$ and this is one term in the entire expectation.
The third line follows since choosing left $T$ times is a subset of the event of choosing left forever.
The last line implies linear regret since we know $Pr(\text{left forever}) > 0$ by the previous theorem.
\end{proof}

\subsection{Analysis of perturbed minimum-variance baseline} \label{sec:appendix_perturbed_minvar}

In this section, we look at perturbations of the minimum-variance baseline in the two-armed bandit, i.e. baselines of the form $b = 1-p_t + \epsilon$.
In summary:
\begin{itemize}
    \item For $\epsilon < -1$, convergence to a suboptimal policy is possible with positive probability.
    \item For $\epsilon \in (-1, 1)$, we have convergence almost surely to the optimal policy.
    \item For $\epsilon \ge 1$, the supremum of the iterates goes to $\infty$ (but we do not have convergence to an optimal policy)
\end{itemize}

It is interesting to note that there is a subtle difference between the case of $\epsilon \in (-1,0)$ and $\epsilon \in (0,1)$, even though both lead to convergence. The main difference is that when $\theta_t$ is large, positive $\epsilon$ leads to both updates being positive and hence improvement is guaranteed at every step. But, when $\epsilon$ is negative, then only one of the actions leads to improvement, the other gives a large negative update. So, in some sense, for $\epsilon \in (-1,0)$, convergence is less stable because a single bad update could be catastrophic.

Also, the case of $\epsilon=-1$ proved to be difficult. Empirically, we found that the agent would incur linear regret and it seemed like some learning curves also got stuck near $p=0$, but we were unable to theoretically show convergence to a suboptimal policy.
\\

\begin{lemma}
\label{lem:prop_epsilon_inf1}
For the two-armed bandit with sigmoid parameterization, natural policy gradient and a perturbed minimum-variance baseline $b = 1-p_t +\epsilon$, with $\epsilon < -1$, there is a positive probability of choosing the suboptimal arm forever and diverging.
\end{lemma}
\begin{proof}

We can reuse the result for the two-armed bandit with constant baseline $b < 0$.
Recall that for the proof to work, we only need $\theta$ to move by at least a constant step $\delta > 0$ in the negative direction at every iteration.

In detail, the update after picking the worst arm is 
$\theta_{t+1} = \theta_t + \alpha (1 + \frac{\epsilon}{1-p_t})$. So, if we choose $\epsilon < -1 - \delta $ for some $\delta > 0$, we get the update step magnitude is $\frac{\delta + p}{1 - p} > \delta$ and hence the previous result applies (replace $\alpha b$ by $\delta$).
\end{proof}

\begin{lemma}
\label{lem:prop_epsilon_10}
For the two-armed bandit with sigmoid parameterization, natural policy gradient and a perturbed minimum-variance baseline $b = 1-p_t +\epsilon$, with $\epsilon \in (-1,0)$, the policy converges to the optimal policy in probability.
\end{lemma}

\begin{proof}

Recall that the possible updates when the parameter is $\theta_t$ are:
\begin{itemize}
    \item $\theta_{t+1} = \theta_t + \alpha(1-\frac{\epsilon}{\sigma(\theta_t)})$ if we choose action 1, with probability $\sigma(\theta_t)$ 
    \item $\theta_{t+1} = \theta_t + \alpha(1+\frac{\epsilon}{1- \sigma(\theta_t)})$ if we choose action 2, with probability $1-\sigma(\theta_t)$.
\end{itemize}

First, we will partition the real line into three regions ($A$, $B$, and $C$ with $a < b < c$ for $a\in A, b \in B, c \in C$), depending on the values of the updates. Then, each region will be analyzed separately.

We give an overview of the argument first.
For region $A$ ($\theta$ very negative), both updates are positive so $\theta_t$ is guaranteed to increase until it reaches region $B$.

For region $C$ ($\theta$ very positive), sampling action 2 leads to the update $\alpha(1+\frac{\epsilon}{1- \sigma(\theta_t)})$, which has large magnitude and results in $\theta_{t+1}$ being back in region $A$. So, once $\theta_t$ is in $C$, the agent needs to sample action 1 forever to stay there and converge to the optimal policy. This will have positive probability (using the same argument as the divergence proof for the two-armed bandit with constant baseline). 

For region $B$, the middle region, updates to $\theta_t$ can make it either increase or decrease and stay in $B$. For this region, we will show that $\theta_t$ will eventually leave $B$ with probability 1 in a finite number of steps, with some lower-bounded probability of reaching $A$. 

Once we've established the behaviours in the three regions, we can argue that for any initial $\theta_0$ there is a positive probability that $\theta_t$ will eventually reach region $C$ and take action 1 forever to converge. In the event that does not occur, then $\theta_t$ will be sent back to $A$ and the agent gets another try at converging. Since we are looking at the behaviour when $t \xrightarrow{} \infty$, the agent effectively gets infinite tries at converging. Since each attempt has some positive probability of succeeding, convergence will eventually happen.

We now give additional details for each region.

To define region $A$, we check when both updates will be positive. The update from action 1 is always positive so we are only concerned with the second update. 
\begin{align*}
    1 + \frac{\epsilon}{1-p} &> 0 \\
    1 - p + \epsilon &> 0 \\
    1 + \epsilon &> p \\
    \sigma^{-1} (1+\epsilon) &> \theta
\end{align*}
Hence, we set $A = (-\infty, \sigma^{-1} (1+\epsilon))$.
Since every update in this region increases $\theta_t$ by at least a constant at every iteration, $\theta_t$ will leave $A$ in a finite number of steps.

For region $C$, we want to define it so that an update in the negative direction from any $\theta \in C$ will land back in $A$. So $C = [c, \infty)$ for some $c \ge \sigma^{-1} (1+\epsilon)$. 
By looking at the update from action 2, $\alpha(1 + \frac{\epsilon}{1-\sigma(\theta)})= \alpha(1 + \epsilon(1+e^\theta)) $, we see that it is equal to 0 at $\theta = \sigma^{-1} (1+\epsilon)$ but it is a decreasing function of $\theta$ and it decreases at an exponential rate. So, eventually for $\theta_t$ sufficiently large, adding this update will make $\theta_{t+1} \in A$.

So let $c = \inf \{\theta: \theta + \alpha \left( 1 - \frac{\epsilon}{1 - \sigma(\theta)} \right), \theta \ge \sigma^{-1} (1+\epsilon) \} $. 
Note that it is possible that $c = \sigma^{-1} (1+\epsilon)$. If this is the case, then region $B$ does not exist.

When $\theta_t \in C$, we know that there is a positive probability of choosing action 1 forever and thus converging (using the same proof as the two-armed bandit with constant baseline).

Finally, for the middle region $B = [a, c)$ ($a = \sigma^{-1} (1 + \epsilon)$), we know that the updates for any $\theta \in B$ are uniformly bounded in magnitude by a constant $u$. 

We define a stopping time $\tau = \inf \{t ; \theta_t \le a \text{ or } \theta_t \ge c \}$. This gives the first time $\theta_t$ exits the region $B$. Let ``$\land$'' denote the min operator.

Since the updates are bounded, we can apply Azuma's inequality to the stopped martingale $\theta_{t \land \tau} - \alpha(t \land \tau) $, for $\lambda \in \mathbb{R}$.
\begin{align*}
    P( \theta_{t \land \tau} - \alpha (t \land \tau) &< \lambda) \le \exp \left( \frac{-\lambda^2}{2 t u} \right) \\
    P(\theta_{t \land \tau} - \alpha (t - (t \land \tau)) \le c) &< \exp \left( -\frac{(c+ \alpha t)^2}{2 t u} \right) 
\end{align*}
The second line follows from substituting $\lambda = -\alpha t + c$. Note that the RHS goes to 0 as $t$ goes to $\infty$.

Next, we continue from the LHS. Let $\theta^*_t = \sup_{0 \le n \le t} \theta_n$
\begin{align*}
 &P(\theta_{t \land \tau} - \alpha (t - (t \land \tau)) < c) \\ 
 &\ge P(\theta_{t \land \tau} - \alpha (t - (t \land \tau)) < c, t \le \tau ) \\ 
 &\quad + P(\theta_{t \land \tau} - \alpha (t - (t \land \tau)) < c, t > \tau), \quad \text{splitting over events} \\ 
 &\ge P(\theta_{t \land \tau} < c, t< \tau), \quad \text{dropping the second term} \\
 &\ge P(\theta_{t} < c, \sup \theta_t < c, \inf \theta_t < a), \quad \text{definition of $\tau$} \\
 &= P(\sup \theta_t < c, \inf \theta_t < a), \quad \text{this event is a subset of the other} \\
 &=  P(\tau > t)
\end{align*}
Hence the probability the stopping time exceeds $t$ goes to $0$ and it is guaranteed to be finite almost surely.

Now, if $\theta_t$ exits $B$, there is some positive probability that it reached $C$. We see this by considering that taking action 1 increases $\theta$ by at least a constant, so the sequence of only taking action $1$ until $\theta_t$ reaches $C$ has positive probability. This is a lower bound on the probability of eventually reaching $C$ given that $\theta_t$ is in $B$.

Finally, we combine the results for all three regions to show that convergence happens with probability 1.
Without loss of generality, suppose $\theta_0 \in A$. If that is not the case, then keep running the process until either $\theta_t$ is in $A$ or convergence occurs. 

Let $E_i$ be the event that $\theta_t$ returns to $A$ after leaving it for the $i$-th time. Then $E_i^\complement$ is the event that $\theta_t \xrightarrow{} \infty$ (convergence occurs).
This is the case because, when $\theta_t \in C$, those are the only two options and, when $\theta_t \in B$ we had shown that the process must exit $B$ with probability 1, either landing in $A$ or $C$.

Next, we note that $P(E_i^\complement) > 0$ since, when $\theta_t$ is in $B$, the process has positive probability of reaching $C$. Finally, when $\theta_t \in C$, the process has positive probability of converging. Hence, $P(E_i^\complement) > 0$. 

To complete the argument, whenever $E_i$ occurs, then $\theta_t$ is back in $A$ and will eventually leave it almost surely. Since the process is Markov and memoryless, $E_{i+1}$ is independent of $E_i$. 
Thus, by considering a geometric distribution with a success being $E^C_i$ occurring, $E_i^C$ will eventually occur with probability 1. In other words, $\theta_t$ goes to $+\infty$.

\end{proof}

\begin{lemma}
\label{lem:prop_epsilon_0}
For the two-armed bandit with sigmoid parameterization, natural policy gradient and a perturbed minimum-variance baseline $b = 1-p_t +\epsilon$, with $\epsilon=0$, the policy converges to the optimal policy with probability 1.
\end{lemma}
\begin{proof}
    By directly writing the updates, we find that both updates are always equal to the expected natural policy gradient, so that $\theta_{t+1} = \theta_t + \alpha$ for any $\theta_t$. 
    Hence $\theta_t \xrightarrow{} \infty$ as $t \xrightarrow{} \infty$ with probability 1. 
\end{proof}

\begin{lemma}
\label{lem:prop_epsilon_01}
For the two-armed bandit with sigmoid parameterization, natural policy gradient and a perturbed minimum-variance baseline $b = 1-p_t +\epsilon$, with $\epsilon \in (0,1)$, the policy converges to the optimal policy in probability.
\end{lemma}

\begin{proof}

The overall idea is to ensure that the updates are always positive for some region $ A = \{\theta: \theta > \theta_A\}$ then show that we reach this region with probability 1. 

Recall that the possible updates when the parameter is $\theta_t$ are:
\begin{itemize}
    \item $\theta_{t+1} = \theta_t + \alpha(1-\frac{\epsilon}{\sigma(\theta_t)})$ if we choose action 1, with probability $\sigma(\theta_t)$ 
    \item $\theta_{t+1} = \theta_t + \alpha(1+\frac{\epsilon}{1- \sigma(\theta_t)})$ if we choose action 2, with probability $1-\sigma(\theta_t)$.
\end{itemize}

First, we observe that the update for action 2 is always positive. As for action 1, it is positive whenever $p \ge \epsilon$, equivalently $\theta \ge \theta_A$, where $\theta_A = \sigma^{-1}(\epsilon)$. Call this region $A = \{ \theta: \theta > \theta_A (= \sigma^{-1}(\epsilon)) \}$. \\
If $\theta_t \in A$, then we can find a $\delta > 0$ such that the update is always greater than $\delta$ in the positive direction, no matter which action is sampled. 
So, using the same argument as for the $\epsilon = 0$ case with steps of $+\delta$, we get convergence to the optimal policy (with only constant regret).

In the next part, we show that the iterates will enter the good region $A$ with probability 1 to complete the proof. We may assume that $\theta_0 < \theta_A$ since if that is not the case, we are already done. 
The overall idea is to create a transformed process which stops once it reaches $A$ and then show that the stopping time is finite with probability 1. This is done using the fact that the expected step is positive ($+\alpha$) along with Markov's inequality to bound the probability of going too far in the negative direction.

We start by considering a process equal to $\theta_t$ except it stops when it lands in $A$. Defining the stopping time $\tau = \inf \{t : \theta_t > \theta_A \}$ and ``$\land$'' by $a \land b = \min(a,b)$ for $a, b \in \mathbb{R}$, the process $\theta_{t \land \tau}$ has the desired property.

Due to the stopping condition, $\theta_{t \land \tau}$ will be bounded above and hence we can shift it in the negative direction to ensure that the values are all nonpositive. So we define $\tilde{\theta}_t = \theta_{t \land \tau} - C$ for all $t$, for some $C$ to be determined. 

Since we only stop the process $\{ \theta_{t \land \tau} \}$ \textit{after} reaching $A$, then we need to compute the largest value $\theta_{t \land \tau}$ can take after making an update which brings us inside the good region. 
In other words, we need to compute $ \sup_\theta \{\theta + \alpha(1+\frac{\epsilon}{1- \sigma(\theta)}) : \theta \in A^\complement\}$.
Fortunately, since the function to maximize is an increasing function of $\theta$, the supremum is easily obtained by choosing the largest possible $\theta$, that is $\theta = \sigma^{-1}(\epsilon)$. 
This gives us that $C = \theta_A + U_A$, where $U_A = \alpha(1 + \frac{\epsilon}{1-\epsilon})$.

All together, we have $\tilde{\theta}_t = \theta_{t \land \tau} - \theta_A - U_A$.
By construction, $\tilde{\theta}_t \le 0$ for all $t$ (note that by assumption, $\theta_0 < \theta_A$ which is equivalent to $\tilde{\theta}_0 < -U_A$ so the process starts at a negative value).

Next, we separate the expected update from the process. 
We form the nonpositive process $Y_t = \tilde{\theta}_t - \alpha (t \land \tau) = \theta_{t \land \tau} - U_A - \theta_A - \alpha (t \land \tau)$.
This is a martingale as it is a stopped version of the martingale $\{\theta_{t} - U_A - \theta_A - \alpha t \}$. 

Applying Markov's inequality, for $\lambda > 0$ we have:
\begin{align*}
P(Y_t \le -\lambda) &\le -\frac{\E[Y_t]}{\lambda} \\
P(Y_t \le -\lambda) &\le -\frac{Y_0}{\lambda}, \quad \text{since $\{Y_t\}$ is a martingale} \\
P(\theta_{\tau \land t} - \alpha (\tau \land t) - \theta_A - U_A \le -\lambda) &\le \frac{\theta_A + U_A -\theta_0}{\lambda} \\
P(\theta_{\tau \land t}  \le \alpha (\tau \land t - t) + \theta_A)  &\le \frac{\theta_A + U_A - \theta_0}{\alpha t + U_A}, \quad \text{choosing $\lambda = \alpha t + U_A$}
\end{align*}

Note that the RHS goes to 0 as $t \xrightarrow{} \infty$.
We then manipulate the LHS to eventually get an upper bound on $P(t \le \tau)$.
\begin{align*}
    & P(\theta_{\tau \land t}  \le \alpha (\tau \land t - t) + \theta_A) \\
    &=  P(\theta_{\tau \land t}  \le \alpha (\tau \land t - t) + \theta_A, t \le \tau) +  P(\theta_{\tau \land t}  \le \alpha (\tau \land t - t) + \theta_A, t > \tau), \quad  \text{splitting over disjoint events} \\
    &\ge  P(\theta_{\tau \land t}  \le \alpha (\tau \land t - t), t \le \tau),  \quad \text{second term is nonnegative} \\
    &=  P(\theta_t  \le  \theta_A, t \le \tau), \quad \text{since $t \le \tau$ in this event} \\
    &= P(\theta_t  \le  \theta_A, \sup_{0\le n \le t} \theta_n \le \theta_A), \quad \text{by definition of $\tau$} \\
    &\ge P(\sup_{0\le n \le t} \theta_n \le \theta_A), \quad \text{this event is a subset of the other }  \\
    &= P(t \le \tau) 
\end{align*}
Since the first line goes to $0$, the last line goes to $0$ and hence we have that $\theta_t$ will enter the good region with probability 1.

\end{proof}

Note that there is no contradiction with the nonconvergence result for $\eps < -1$ as we cannot use Markov's inequality to show that the probability that $\theta_t < c$ ($c > 0$) goes to 0. The argument for the $\epsilon \in (0,1) $ case relies on being able to shift the iterates $\theta_t$ sufficiently left to construct a nonpositive process $\tilde{\theta}_t $. In the case of $\epsilon < 0$, for $\theta < c$ ($c \in \mathbb{R}$), the right update ($1 - \frac{\epsilon}{\sigma(\theta)}$) is unbounded hence we cannot guarantee the process will be nonpositive.
As a sidenote, if we were to additionally clip the right update so that it is $\max(B, 1-\frac{\epsilon}{\sigma(\theta)})$ for some $B>0$ to avoid this problem, this would still not allow this approach to be used because then we would no longer have a submartingale. The expected update would be negative for $\theta$ sufficiently negative.

\begin{lemma}
\label{lem:prop_epsilon_1inf}
For the two-armed bandit with sigmoid parameterization, natural policy gradient and a perturbed minimum-variance baseline $b = 1-p_t +\epsilon$, with $\epsilon \ge 1$, we have that $P(\sup_{0 \le n \le t} \theta_n > C) \xrightarrow{} 1$ as $t \xrightarrow{} \infty$ for any $C \in \mathbb{R}$.
\end{lemma}
\begin{proof}
We follow the same argument as in the $\epsilon \in (0,1)$ case with a stopping time defined as $\tau = \inf \{t : \theta_t > c \} $ and using $\theta_A = c$, to show that
\[ P\left( \sup_{0 \le n \le t} \theta_t \le c \right)  \xrightarrow{} 0\]
\end{proof}

\subsection{Convergence with vanilla policy gradient}
\label{app:2arm_vanilla_pg}

In this section, we show that using vanilla PG on the two-armed bandit converges to the optimal policy in probability. This is shown for on-policy and off-policy sampling with importance sampling corrections.
The idea to show optimality of policy gradient will be to use Azuma's inequality to prove that $\theta_t$ will concentrate around their mean $\E[\theta_t]$, which itself converges to the right arm.

We now proceed to prove the necessary requirements.

\begin{lemma}[Bounded increments for vanilla PG] Assuming bounded rewards and a bounded baseline, the martingale $\{X_t\}$ associated with vanilla policy gradient has bounded increments
$$|X_t - X_{t-1}| \le C \alpha_t$$
\label{proposition:pg_bounded}
\end{lemma}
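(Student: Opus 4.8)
The plan is to write the martingale increment $X_t - X_{t-1}$ explicitly as $\alpha_t$ times the deviation of the stochastic gradient from its conditional mean, and then bound that deviation by an absolute constant, using boundedness of the rewards and the baseline together with the one fact that really matters here: for the sigmoid parameterization the score function is bounded by $1$.

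First I would recall from the Example above that, writing $\mathcal{F}_{t-1}$ for the history up to step $t-1$ and $g_t$ for the vanilla policy-gradient estimate evaluated at $\theta_{t-1}$, the increment has the form $X_t - X_{t-1} = \alpha_t\big(\E[g_t \mid \mathcal{F}_{t-1}] - g_t\big)$ (equivalently $\alpha_t(\E[g_t\mid\theta_0]-g_t)$ for a deterministic start). Hence it suffices to bound $|g_t|$ uniformly, since then Jensen's inequality controls $|\E[g_t\mid\mathcal{F}_{t-1}]|$ by the same constant and the triangle inequality finishes the job.

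Next I would compute $g_t$ for the two-armed bandit with sigmoid parameterization: if arm $a$ is sampled and reward $r$ observed, then $g_t = (r-b)\,\nabla_\theta \log \pi_{\theta_{t-1}}(a)$, and a one-line computation gives $\nabla_\theta \log \pi_\theta(\text{arm }1) = 1-p$ and $\nabla_\theta \log \pi_\theta(\text{arm }0) = -p$ with $p=\sigma(\theta)\in(0,1)$, so $|\nabla_\theta \log \pi_\theta(a)|\le 1$ for either arm. Therefore $|g_t| \le |r-b| \le |r| + |b| \le R_{\max} + B_{\max}$, where $R_{\max}$ bounds the (bounded-support) rewards and $B_{\max}$ bounds the baseline, and likewise $|\E[g_t\mid\mathcal{F}_{t-1}]| \le R_{\max}+B_{\max}$. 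Combining, $|X_t - X_{t-1}| \le \alpha_t\big(|g_t| + |\E[g_t\mid\mathcal{F}_{t-1}]|\big) \le C\,\alpha_t$ with $C := 2(R_{\max}+B_{\max})$.

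There is no genuine obstacle in this lemma; the only point worth stating carefully is the explicit form of the score function and the observation that it stays in $[-1,1]$. This is precisely what distinguishes vanilla from natural policy gradient: the natural-gradient estimate carries an extra $1/p$ (resp.\ $1/(1-p)$) factor that blows up as the policy approaches determinism, so the analogous bounded-increment property fails there, whereas for vanilla PG the bounded score keeps the increments of order $\alpha_t$, enabling the subsequent Azuma-Hoeffding argument. The rest is bookkeeping of constants.
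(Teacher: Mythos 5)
Your proposal is correct and follows essentially the same route as the paper: write the increment as $\alpha_t(g_t - \E[g_t])$, use the explicit vanilla-PG estimate $(r-b)(1-p_t)$ or $-(r-b)p_t$ so the score factor is at most $1$, and bound everything by bounded rewards and baseline. The only cosmetic difference is that you bound the expectation term by Jensen with the same constant, whereas the paper uses $|\E[\Delta p_t(1-p_t)]|\le \Delta/4$; both yield $|X_t-X_{t-1}|\le C\alpha_t$.
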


\begin{proof}
Then, the stochastic gradient estimate 
is \[ g_t = \left\{
                \begin{array}{l}
                  (r_1 -b) (1-p_t), \text{with probability}\ p_t, r_1 \sim P_1 \\
                  - (r_0 - b) p_t, \text{with probability}\ (1-p_t), r_0 \sim P_0
                \end{array}
              \right.
              \]
        Furthermore, $\E[g_t|\theta_0] = \E[\E[g_t | \theta_t] | \theta_0] = \E[\Delta p_t (1-p_t) | \theta_0]$.
        As the rewards are bounded, for $i=0,1$, $\exists R_i >0$ so that $|r_i| \le R_i$
        \begin{eqnarray*}
        |X_t - X_{t-1}| &=&|\sum_{i=1}^t \alpha_i (g_i-\E[g_i])- \sum_{i=1}^{t-1} \alpha_i (g_i-\E[g_i])|\\
        &=& \alpha_t |g_t - \E[\Delta p_t(1-p_t)]|\\
        &\le& \alpha_t\big(|g_t| + |\E[\Delta p_t(1-p_t)]|\big)\\
         &\le& \alpha_t\big( \max(|r_1 - b|, |r_0-b|) + |\E[\Delta p_t(1-p_t)]|\big), \quad r_1 \sim P_1, r_0 \sim P_0\\
        &\le& \alpha_t \big(\max(|R_1| +|b|, |R_0|+|b|) + \frac{\Delta}{4}\big)
        \end{eqnarray*}
        Thus   $|X_t - X_{t-1}| \le C \alpha_t$

\end{proof}
\begin{lemma}[Bounded increments with IS] Assuming bounded rewards and a bounded baseline, the martingale $\{X_t\}$ associated with policy gradient with importance sampling distribution $q$ such that $\min \{q, 1-q\} \ge \epsilon >0$ has bounded increments
$$|X_t - X_{t-1}| \le C \alpha_t$$
\end{lemma}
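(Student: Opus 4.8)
The plan is to reuse the proof of Lemma~\ref{proposition:pg_bounded} almost verbatim, changing only the form of the estimator $g_t$: instead of the on-policy update we draw the arm from the behaviour policy $q$ and reweight by the importance ratio $\pi(a)/q(a)$. Writing $p_t=\sigma(\theta_t)$, and using $|r_i|\le R_i$ (bounded rewards) and $|b|\le B$ (bounded baseline), the resulting estimator for natural policy gradient with importance sampling is
\[
g_t = \left\{
\begin{array}{ll}
\dfrac{r_1 - b}{q_t}, & \text{with probability } q_t,\ r_1 \sim P_1,\\[2mm]
-\,\dfrac{r_0 - b}{1-q_t}, & \text{with probability } 1-q_t,\ r_0 \sim P_0 ,
\end{array}
\right.
\]
since the Fisher factor $p_t(1-p_t)$ cancels against the $p_t$ in the ratio. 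The vanilla-PG-with-IS estimator is the same up to an additional factor $p_t(1-p_t)\le 1/4$, so the bound below applies there too, only smaller; I will carry out the argument for the natural version and remark on this at the end.

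First I would check that the martingale structure used throughout this appendix is preserved. Taking conditional expectations over the sampled arm and reward, $\E[g_t\mid\theta_t] = (\mu_1 - b) - (\mu_0 - b) = \mu_1-\mu_0$, so the importance-corrected update is still unbiased for the natural gradient and its conditional mean does not depend on $\theta_t$; hence $\E[g_t\mid\theta_0]=\mu_1-\mu_0$ and $X_t=\E[\theta_t]-\theta_t=\sum_{i=1}^t\alpha_i\big(\E[g_i\mid\theta_0]-g_i\big)$ is a martingale, exactly as in the Example following the definition of a martingale and as in Lemma~\ref{proposition:pg_bounded}.

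Second, I would bound a single increment. Because $q_t\ge\epsilon$ and $1-q_t\ge\epsilon$ by hypothesis, the importance weights $1/q_t$ and $1/(1-q_t)$ are at most $1/\epsilon$, so
\[
|g_t| \;\le\; \frac{\max\big(|r_1-b|,\,|r_0-b|\big)}{\min\{q_t,\,1-q_t\}} \;\le\; \frac{\max(R_1+B,\,R_0+B)}{\epsilon},
\]
and $\big|\E[g_t\mid\theta_0]\big|=|\mu_1-\mu_0|\le R_1+R_0$. The triangle inequality then gives
\[
|X_t - X_{t-1}| \;=\; \alpha_t\,\big|g_t-\E[g_t\mid\theta_0]\big| \;\le\; \alpha_t\Big(\frac{\max(R_1+B,\,R_0+B)}{\epsilon} + R_1+R_0\Big) \;=:\; C\,\alpha_t ,
\]
which is the claim; for vanilla PG one simply multiplies the $|g_t|$ bound by $1/4$.

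I do not expect a real obstacle: the statement is a one-line modification of Lemma~\ref{proposition:pg_bounded}, and the only point worth stressing is the role of the hypothesis $\min\{q,1-q\}\ge\epsilon$, which is precisely what keeps the importance weights from exploding as the behaviour policy approaches the boundary of the simplex. This is the formal counterpart of the earlier observation that the on-policy estimator degenerates (unbounded variance) exactly in the near-deterministic regime, whereas a non-singular sampling distribution keeps the increments uniformly controlled; this uniform control is the ingredient then fed into Azuma--Hoeffding to obtain the off-policy convergence result of Proposition~\ref{lem:main_off-policy_IS}.
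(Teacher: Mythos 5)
Your proposal is correct and follows essentially the same argument as the paper: bound the importance weights by $1/\epsilon$ using $\min\{q_t,1-q_t\}\ge\epsilon$, use bounded rewards and baseline, and apply the triangle inequality to $\alpha_t|g_t-\E[g_t\mid\theta_0]|$. The only cosmetic difference is that the paper bounds the vanilla-PG-with-IS estimator directly, i.e. with the factor $p_t(1-p_t)$ present (yielding the constant $\tfrac{\max(|R_1|+|b|,|R_0|+|b|)+\Delta}{4\epsilon}$), which your concluding remark about the extra factor $p_t(1-p_t)\le 1/4$ already covers.
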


\begin{proof}
Let us also call $\epsilon>0$ the lowest probability of sampling an arm under $q$.

Then, the stochastic gradient estimate is 
    \[ g_t = \left\{
                \begin{array}{l}
                  \frac{(r_1 -b) p_t (1-p_t)}{q_t}, \text{with probability}\ q_t, r_1 \sim P_1 \\
                  - \frac{(r_0 - b) p_t (1-p_t)}{1-q_t}, \text{with probability}\ (1-q_t), r_0 \sim P_0
                \end{array}
              \right.
    \]
        
        As the rewards are bounded, $\exists R_i >0$ such that $|r_i| \le R_i$ for all $i$
        \begin{eqnarray*}
        |X_t - X_{t-1}| &=&|\sum_{i=1}^t \alpha_i (g_i-\E[g_i])- \sum_{i=1}^{t-1} \alpha_i (g_i-\E[g_i])| \\
        &=& \alpha_t |g_t - \E[\Delta p_t(1-p_t)]|\\
        &\le& \frac{\alpha_t  \big(\max(|R_1|+|b|, |R_0|+|b|) + \Delta \big)}{4\epsilon} \quad \text{as $q_t, 1-q_t$ $\ge \epsilon$}
        \end{eqnarray*}
        Thus   $|X_t - X_{t-1}| \le C \alpha_t$

\end{proof}

We call non-singular importance sampling any importance sampling distribution so that the probability of each action is bounded below by a strictly positive constant.
\begin{lemma} For vanilla policy gradient and policy gradient with nonsingular importance sampling, the expected parameter $\theta_t$ has infinite limit.
i.e. if $\mu_1 \neq \mu_0$,
$$\lim_{t \to +\infty} \E[ \theta_t - \theta_0] = +\infty$$
In other words, the expected parameter value converges to the optimal arm.
\end{lemma}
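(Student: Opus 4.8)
The plan is to prove the quantitative statement that $\E[\theta_t-\theta_0]\to+\infty$; the phrase ``converges to the optimal arm'' then follows because $\theta_t\to+\infty$ forces $p_t=\sigma(\theta_t)\to1$ (and if instead $\mu_0>\mu_1$ the mirror-image argument gives $\theta_t\to-\infty$). Write $\Delta=\mu_1-\mu_0>0$ and $p=\sigma(\theta)$, and recall that rewards are bounded and that for non-singular importance sampling the behaviour policy puts mass at least $\epsilon>0$ on each arm.

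First I would average a single update over its two possible outcomes: for both vanilla PG and PG with non-singular IS this gives $\E[g\mid\theta]=\Delta\,p(1-p)\ge0$. Hence $\E[\theta_t-\theta_0]=\Delta\sum_{i=1}^t\alpha_i\,\E[p_{i-1}(1-p_{i-1})]$ is finite for each $t$ (bounded increments, from the preceding lemmas) and nondecreasing in $t$, so its limit exists in $[0,+\infty]$ and all that remains is to exclude a finite limit.

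I would then argue by contradiction. Assume $\E[\theta_t-\theta_0]\uparrow L<\infty$; the key point is that this also forces the variance of $\theta_t$ to stay bounded, because the estimator's noise vanishes exactly where the policy degenerates. Indeed, each value of $g$ carries a factor $p$ or $1-p$ (resp.\ $p(1-p)/q$ and $p(1-p)/(1-q)$ under IS), so with bounded rewards $\E[(g-\E[g\mid\theta])^2\mid\theta]\le\E[g^2\mid\theta]\le C\,p(1-p)$ for a constant $C$ depending only on the reward bound and on $\epsilon$. Summing the orthogonal martingale increments of $\theta_t-\E[\theta_t]$ gives $\V[\theta_t-\theta_0]\le C\sum_i\alpha_i^2\,\E[p_{i-1}(1-p_{i-1})]\le C\alpha_{\max}\sum_i\alpha_i\,\E[p_{i-1}(1-p_{i-1})]=\tfrac{C\alpha_{\max}}{\Delta}\E[\theta_t-\theta_0]\le\tfrac{C\alpha_{\max}}{\Delta}L$, uniformly in $t$ (if one prefers to assume $\sum_i\alpha_i^2<\infty$ directly, the bound follows at once from $p(1-p)\le\tfrac14$). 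With uniformly bounded mean and variance the sequence $(\theta_t)_t$ is tight, so there is an $M$ with $\inf_t\sP(|\theta_t|\le M)\ge\tfrac12$; on that event $p_t(1-p_t)=\tfrac{1}{2(1+\cosh\theta_t)}\ge c_M:=\tfrac{1}{2(1+\cosh M)}>0$, hence $\E[p_t(1-p_t)]\ge c_M/2$ for every $t$. Plugging back, $\E[\theta_t-\theta_0]\ge\tfrac{\Delta c_M}{2}\sum_{i=1}^t\alpha_i\to\infty$ since $\sum_i\alpha_i=\infty$ (in particular for constant step sizes), contradicting $\E[\theta_t-\theta_0]\le L$.

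The main obstacle is the middle step --- turning boundedness of the mean into boundedness of the variance --- and it is exactly where the argument would break for the natural gradient: it relies on the conditional variance of the update collapsing together with $p(1-p)$ rather than exploding. The bookkeeping I would be careful with is lining up the orthogonal-increment sum with the step-size factors, and checking that the non-singularity constant $\epsilon$ enters the importance-sampling variance bound so that $C$ is finite. A minor subtlety is simply stating the conclusion: I would phrase it as $\E[\theta_t-\theta_0]\to+\infty$, with the symmetric statement if $\mu_0>\mu_1$.
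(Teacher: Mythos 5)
Your overall strategy is sound and genuinely different from the paper's in its second half. The paper also argues by contradiction and first shows that, if $\E[\theta_t-\theta_0]$ had a finite limit, then $\sP(|\theta_t|\le M)$ would stay above $\tfrac12$ for large $M$ --- but it gets there via Azuma--Hoeffding applied to the bounded-increment ``martingale'' (hence needing $\sum_i\alpha_i^2<\infty$), and then derives the contradiction by importing the stochastic-approximation result of \cite{bottou2018optimization} that $\E[\|\nabla J(\theta_t)\|^2]\to0$ under Robbins--Monro stepsizes, together with the fact that $\|\nabla J(\theta)\|=\Delta\,\sigma(\theta)(1-\sigma(\theta))$ is bounded below on $\{|\theta|\le M\}$. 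Your closing step is more elementary and self-contained: tightness gives $\E[p_t(1-p_t)]\ge c_M/2$ uniformly in $t$, hence $\E[\theta_t-\theta_0]=\Delta\sum_i\alpha_i\E[p_{i-1}(1-p_{i-1})]\gtrsim\sum_i\alpha_i\to\infty$, contradicting finiteness --- no external SGD theorem, and (once the tightness step is repaired, see below) only $\sum_i\alpha_i=\infty$ and bounded steps are needed, so it even covers constant stepsizes, which the paper's route does not.

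There is, however, a real gap exactly at the step you flagged as ``bookkeeping'': the bound $\V[\theta_t-\theta_0]\le C\sum_i\alpha_i^2\,\E[p_{i-1}(1-p_{i-1})]$ is not justified as written. The process $\theta_t-\E[\theta_t]$ is \emph{not} a martingale and its increments are not orthogonal, because the conditional drift $\Delta\,p_{i-1}(1-p_{i-1})$ is itself random (it is only the conditionally centered part of $g_i$ that forms martingale differences). Writing $\theta_t-\theta_0=A_t+M_t$ with $A_t=\Delta\sum_{i\le t}\alpha_i\,p_{i-1}(1-p_{i-1})$ and $M_t=\sum_{i\le t}\alpha_i\bigl(g_i-\E[g_i\mid\mathcal{F}_{i-1}]\bigr)$, your conditional-variance computation controls only $\E[M_t^2]\le C\sum_i\alpha_i^2\E[p_{i-1}(1-p_{i-1})]\le\tfrac{C\alpha_{\max}}{\Delta}L$; it says nothing about $\V[A_t]$, and $\E[A_t]\le L$ alone does not bound $\E[A_t^2]$, so ``bounded mean $\Rightarrow$ bounded variance of $\theta_t$'' does not follow. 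The fix is short and keeps your architecture intact: since $A_t\ge0$ and $\E[A_t]\le L$, Markov's inequality gives $\sP(A_t>4L)\le\tfrac14$, and Chebyshev with your bound on $\E[M_t^2]$ gives $\sP(|M_t|>K)\le\tfrac14$ for a suitable fixed $K$; together $\inf_t\sP\bigl(\theta_t-\theta_0\in[-K,\,4L+K]\bigr)\ge\tfrac12$, which is the tightness you need, and the remainder of your contradiction argument (including the correct computations $\E[g\mid\theta]=\Delta p(1-p)$ and $\E[g^2\mid\theta]\le Cp(1-p)$ with $C$ depending on the reward bound and on $\epsilon$ for importance sampling) then goes through.
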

\begin{proof}

We reason by contradiction. The contradiction stems from the fact that on one hand we know $\theta_t$ will become arbitrarily large with $t$ with high probability as this setting satisfies the convergence conditions of stochastic optimization. On the other hand, because of Azuma's inequality, if the average $\theta_t$ were finite, we can show that $\theta_t$ cannot deviate arbitrarily far from its mean with probability 1. The contradiction will stem from the fact that the expected $\theta_t$ cannot have a finite limit.

We have $\theta_t - \theta_0 = \sum_{i=0}^{t} \alpha_i g_i$. Thus
\begin{eqnarray*}
\E[\theta_t - \theta_0 ] &=& \E[\sum_{i=0}^{t} \alpha_i g_i | \theta_0]\\
&=& \sum_{i=0}^{t} \alpha_i \E[ g_i | \theta_0] \\
&=& \sum_{i=0}^{t} \alpha_i \E[ \E[ g_i |\theta_i]| \theta_0]\quad \text{using the law of total expectations}\\
&=& \sum_{i=0}^{t} \alpha_i \E[ \Delta p_i (1-p_i) | \theta_0] 
\end{eqnarray*}
where $\Delta = \mu_1 - \mu_0 > 0$ the optimality gap between the value of the arms. As it is a sum of positive terms, its limit is either positive and finite or $+ \infty$.
\begin{enumerate} 
\item \textbf{Let us assume that $\lim_{t \to +\infty} \E[\sum_{i=0}^{t} \alpha_i g_i] = \beta > 0$}. 

As $\sum_{i=0}^\infty \alpha_i^2 = \gamma$, using 
Azuma-Hoeffing's inequality

\begin{eqnarray*}
\sP(\theta_t  \ge M) &=& \sP(\theta_t - \theta_0 - \E[\sum_{i=0}^{t} \alpha_i g_i] \ge M - \E[\sum_{i=0}^{t} \alpha_i g_i] - \theta_0)\\
&\le& \exp\big( -\frac{( M - \E[\sum_{i=0}^{t} \alpha_i g_i] - \theta_0)^2}{2\sum_{i=1}^t c_i^2}\big)\\
\end{eqnarray*}
where $c_i = \alpha_i C$ like in the proposition above.
And for $M > |\theta_0| + \beta + 2 C \sqrt{\gamma \log 2}$ we have 
\begin{eqnarray*}
\lim_{t \to +\infty} M - \E[\sum_{i=0}^{t} \alpha_i g_i] - \theta_0 &\ge&  |\theta_0| + \beta + 2 C \sqrt{\gamma \log 2} - \beta - \theta_0\\
&\ge& 2 C \sqrt{\gamma \log 2}\\
\end{eqnarray*}
As $\sum_{i=0}^\infty c_i = \gamma C^2$ , we have
$$\lim_{t \to +\infty} \frac{( M - \E[\sum_{i=0}^{t} \alpha_i g_i] - \theta_0)^2}{2\sum_{i=1}^t c_i^2} = \frac{4 C^2 \gamma \log 2}{2 \gamma C^2 } \ge 2\log 2 = \log 4 $$

Therefore
$$\lim_{t \to +\infty} \sP(\theta_t \ge M) \le \frac{1}{4}$$
By a similar reasoning, we can show that 
$$\lim_{t \to +\infty} \sP(\theta_t \le -M) \le \frac{1}{4}$$
Thus $$\lim_{t \to +\infty} \sP(|\theta_t| \le M) \ge \frac{1}{2}$$ i.e for any $M$ large enough, the probability  that $\{\theta_t\}$ is bounded by $M$ is bigger than a strictly positive constant.

\item
Because policy gradient with diminishing stepsizes satisfies the convergence conditions defined by \cite{bottou2018optimization}, we have that

$$\forall \epsilon > 0, \sP(\|\nabla J(\theta_t)\| \ge \epsilon) \le  \frac{\E[\|\nabla J(\theta_t)\|^2]}{\epsilon^2} \xrightarrow[t \to \infty]{} 0 $$
(see proof of Corollary 4.11 by \cite{bottou2018optimization}).
We also have $\|\nabla J(\theta_t)\|=\| \Delta  \sigma(\theta_t) (1-\sigma(\theta_t)) \| = \Delta  \sigma(\theta_t) (1-\sigma(\theta_t))$ for $\Delta = \mu_1 - \mu_0 > 0$ for $\mu_1$ (resp. $\mu_0$) the expected value of the optimal (res. suboptimal arm). Furthermore, $f: \theta_t \mapsto \Delta  \sigma(\theta_t) (1-\sigma(\theta_t))$ is symmetric, monotonically decreasing on $\R^+$ and takes values in $[0, \Delta  /4]$. Let's call $f^{-1}$ its inverse on $\R^+$.

We have that
$$\forall \epsilon \in [0, \Delta  /4], \ \Delta  \sigma(\theta) (1-\sigma(\theta)) \ge \epsilon \iff |\theta| \le f^{-1}(\epsilon)$$

Thus 
$\forall M > 0,$
\begin{eqnarray*}
\sP(|\theta_t| \le M) &=& \sP(\|\nabla J(\theta_t)\| \ge f(M))\\ 
&\le& \frac{\E[\|\nabla J(\theta_t)\|^2]}{( \Delta  \sigma(M) (1-\sigma(M)))^2}\\
&\xrightarrow[t \to \infty]{}& 0
\end{eqnarray*}

Here we show that $\theta_t$ cannot be bounded by any constant with non-zero probability at $t \to \infty$. This contradicts the previous conclusion.

\end{enumerate}
Therefore $\lim_{t \to +\infty} \E[\theta_t - \theta_0] = +\infty$

\end{proof}

\begin{proposition}[Optimality of stochastic policy gradient on the 2-arm bandit]
\label{prop_vpg_cv}
Policy gradient with stepsizes satisfying the Robbins-Monro conditions ($\sum_t \alpha_t = \infty, \sum_t \alpha_t^2 < \infty$) converges to the optimal arm.
\end{proposition}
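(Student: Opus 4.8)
The plan is to combine the two ingredients already assembled for vanilla policy gradient on the two-armed bandit. From the bounded-increments lemma (Lemma~\ref{proposition:pg_bounded}), the centered process $X_t = \E[\theta_t] - \theta_t$ is a martingale with $X_0 = \E[\theta_0]-\theta_0 = 0$ and increments bounded by $|X_t - X_{t-1}| \le C\alpha_t$; and from the preceding lemma, $\E[\theta_t - \theta_0] \to +\infty$ (here using $\mu_1 \neq \mu_0$). I want to deduce that $\theta_t \to +\infty$ in probability, which, by continuity and boundedness of $\sigma$, is equivalent to $p_t = \sigma(\theta_t) \to 1$ in probability, i.e.\ convergence to the optimal arm.

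First I would fix an arbitrary level $M \in \R$ and, using $\theta_t = \E[\theta_t] - X_t$, rewrite $\sP(\theta_t \le M) = \sP(X_t \ge \E[\theta_t] - M)$. Set $a_t := \E[\theta_t] - M = \theta_0 + \E[\theta_t-\theta_0] - M$; then $a_t \to +\infty$, so there is $T_M$ with $a_t > 0$ for all $t \ge T_M$. Since the stepsizes satisfy the Robbins--Monro condition $\sum_t \alpha_t^2 < \infty$, the constant $S := C^2 \sum_{i=1}^{\infty} \alpha_i^2$ is finite and, with $c_i := C\alpha_i$, dominates $\sum_{i=1}^t c_i^2$ uniformly in $t$. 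Applying Azuma--Hoeffding's inequality to $\{X_t\}$ then gives, for $t \ge T_M$,
\[
\sP(\theta_t \le M) = \sP(X_t - X_0 \ge a_t) \le \exp\!\left(-\frac{a_t^2}{2\sum_{i=1}^t c_i^2}\right) \le \exp\!\left(-\frac{a_t^2}{2S}\right) \xrightarrow[t\to\infty]{} 0,
\]
because $a_t \to +\infty$. As $M$ is arbitrary, $\theta_t \to +\infty$ in probability, hence $p_t \to 1$ in probability, which is the claim.

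I do not expect this last assembly to be the hard part: the substantive work is the preceding lemma, where the infinite limit of $\E[\theta_t]$ is obtained by contradiction against the stochastic-optimization convergence conditions of \citet{bottou2018optimization}. The only points needing care here are that $X_0 = 0$ (so the one-sided Azuma bound is correctly anchored at $X_0$), that $a_t$ is eventually positive (so the inequality, stated for $\epsilon \ge 0$, applies), and that $\sum_{i=1}^t c_i^2$ is bounded uniformly in $t$ by the finite constant $S$ — which is exactly where $\sum_t \alpha_t^2 < \infty$ enters. As a remark, the same martingale in fact yields almost-sure convergence: $\{X_t\}$ is $L^2$-bounded, since by orthogonality of martingale increments $\E[X_t^2] = \sum_{i=1}^t \E[(X_i - X_{i-1})^2] \le S$, so it converges almost surely by the martingale convergence theorem, and then $\theta_t = \E[\theta_t] - X_t \to +\infty$ almost surely.
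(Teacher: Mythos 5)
Your assembly is essentially the paper's own proof: both apply Azuma--Hoeffding to the same centered process with increments bounded by $C\alpha_t$, use $\sum_t \alpha_t^2 < \infty$ to bound $\sum_{i\le t} c_i^2$ uniformly, and invoke the preceding lemma that $\E[\theta_t - \theta_0] \to +\infty$; the paper phrases the conclusion as $\sP\big(\theta_t > \theta_0 + \beta \E[\sum_{i\le t}\alpha_i g_i]\big) \to 1$ rather than $\sP(\theta_t \le M) \to 0$ for fixed $M$, which is the same argument with different bookkeeping. Your closing remark that $L^2$-boundedness of the martingale upgrades the conclusion to almost-sure divergence is a worthwhile small addition, since the displayed argument (yours and the paper's) directly yields only convergence in probability.
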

Note that this convergence result addresses the stochastic version of policy gradient, which is not covered by standard results for stochastic gradient algorithms due to the nonconvexity of the objective. 
\begin{proof}
We prove the statement using Azuma's inequality again.
We can choose $\epsilon = (1-\beta) \E[\sum_{i=0}^{t}\alpha_i g_i] \ge 0$ for $\beta \in ]0,1[$.

\begin{eqnarray*}
\sP\bigg( \theta_t >  \theta_0 + \beta \E[\sum_{i=0}^{t}\alpha_i g_i]  \bigg) &=&\sP\bigg( \theta_t - \E[\sum_{i=0}^{t}\alpha_i g_i] -\theta_0 >   \beta \E[\sum_{i=0}^{t}\alpha_i g_i] -\E[\sum_{i=0}^{t}\alpha_i g_i] \bigg)  \\ 
&=& 1-\sP \bigg( \theta_t - \theta_0  -\E[\sum_{i=0}^{t}\alpha_i g_i] \le -\epsilon\bigg)\\
&=& 1-\sP \bigg( \underbrace{\theta_0  + \E[\sum_{i=0}^{t}\alpha_i g_i] - \theta_t  }_{\text{Martingale} \ X_t} \ge \epsilon\bigg)\\
&\ge& 1- \exp\bigg( -\frac{(1-\beta)^2\ \E[\sum_{i=0}^{t}\alpha_i g_i]^2}{2\sum_{i=1}^t \alpha_i^2 C^2}\bigg)\\
\end{eqnarray*}
Thus $\lim_{t \to \infty} \sP\bigg( \theta_t >  \theta_0 + \beta \E[\sum_{i=0}^{t}\alpha_i g_i]   \bigg) = 1$, as $\lim_{t\to \infty} \E[\sum_{i=0}^{t}\alpha_i g_i]  = +\infty$ and $\sum_{t=0}^\infty \alpha_t^2 < +\infty$. Therefore $\lim_{t\to\infty} \theta_t = +\infty$ almost surely.
\end{proof}

\section{Multi-armed bandit theory}
\label{app:theory_multiarm}
\threearmedbandit*

\begin{proof}
The example of convergence to a suboptimal policy for the minimum-variance baseline and convergence to the optimal policy for a gap baseline are outlined in the next two subsections.
\end{proof}

\subsection{Convergence issues with the minimum-variance baseline}
\label{app:3arm_minvar_baseline_div}

\begin{proposition}
Consider a three-armed bandit with rewards of 1, 0.7 and 0. Let the policy be parameterized by a softmax ($\pi_i \propto e^{\theta_i}$) and optimized using natural policy gradient paired with the mininum-variance baseline. 
If the policy is initialized to be uniform random, there is a nonzero probability of choosing a suboptimal action forever and converging to a suboptimal policy.
\end{proposition}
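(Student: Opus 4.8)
The plan is to isolate a single ``bad'' event --- the agent samples the middle arm (reward $0.7$) at every iteration --- and to show (i) it has positive probability and (ii) on it the policy converges to the deterministic policy on arm $2$, which is suboptimal since arm $1$ has reward $1$.

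First I would record the stochastic natural policy gradient update for a softmax bandit. With $\pi_t$ the current policy and Fisher matrix $F_t=\mathrm{diag}(\pi_t)-\pi_t\pi_t^\top$, a short calculation shows $F_t^\dagger\big[(r_j-b_t)(e_j-\pi_t)\big]=\tfrac{r_j-b_t}{\pi_t(j)}\,e_j$ modulo a multiple of $\mathbf 1$ (irrelevant to the softmax); hence, when arm $j$ is drawn, $\theta_i$ is unchanged for $i\ne j$ and $\theta_j\mapsto\theta_j+\alpha(r_j-b_t)/\pi_t(j)$, consistent with the two-armed computation in Section~\ref{sec:divergence_example}. Let $\mathcal E$ be the event that arm $2$ is sampled at every step. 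On $\mathcal E$ the parameters $\theta_1,\theta_3$ stay at their common initial value, so along $\mathcal E$ the policy is the one-parameter family governed by $\theta_2$, with $\pi_t(1)=\pi_t(3)=p:=1/(2+e^{\theta_2})\in(0,\tfrac13]$ and $\pi_t(2)=1-2p$.

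Next I would control the minimum-variance baseline along $\mathcal E$. Taking $b_t^\ast$ to be the scalar minimizing the total variance $\E\|g_t-\E g_t\|^2$ of the natural-gradient estimate $g_t$ (all standard ``minimum-variance'' conventions coincide in this example), one gets $b_t^\ast=\frac{\sum_j r_j(1-\pi_t(j))/\pi_t(j)}{\sum_j (1-\pi_t(j))/\pi_t(j)}$, which on $\mathcal E$ simplifies to $b_t^\ast=\tfrac12+0.4\,p^2/(2-6p+6p^2)$. Since $p\mapsto p^2/(2-6p+6p^2)$ is increasing on $(0,\tfrac13]$, we get $b_t^\ast\le b_0^\ast=\tfrac{r_1+r_2+r_3}{3}=\tfrac{17}{30}<0.7=r_2$ for all $\theta_2\ge0$; thus arm $2$ is ``committal,'' and more precisely $r_2-b_t^\ast\ge\tfrac{2}{15}$ along $\mathcal E$. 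By induction this keeps $\theta_2\ge0$ and forces an increase of at least $\alpha(r_2-b_t^\ast)/\pi_t(2)\ge 2\alpha/15$ per step, so $\theta_2^{(t)}\ge 2\alpha t/15$ and $1-\pi_t(2)=2/(2+e^{\theta_2^{(t)}})\le 2e^{-2\alpha t/15}$. Hence, choosing $T_0$ large enough that $2e^{-2\alpha T_0/15}<1$,
\[
\sP(\mathcal E)=\prod_{t\ge0}\pi_t(2)\ \ge\ \Big(\prod_{t=0}^{T_0-1}\pi_t(2)\Big)\prod_{t\ge T_0}\big(1-2e^{-2\alpha t/15}\big),
\]
where the finite prefactor is strictly positive and, since $\sum_t e^{-2\alpha t/15}<\infty$, the infinite product is strictly positive; thus $\sP(\mathcal E)>0$. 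On $\mathcal E$ we have $\theta_2\to+\infty$ with $\theta_1,\theta_3$ fixed, so $\pi_t\to(0,1,0)$, a deterministic suboptimal policy, which is the claim.

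The step I expect to be the main obstacle is the third one: identifying the minimum-variance baseline along the trajectory and establishing the \emph{strict, uniform} gap $b_t^\ast\le\tfrac{17}{30}<r_2$, rather than merely $r_2-b_t^\ast>0$. This strict gap is exactly what yields the linear growth of $\theta_2$ and hence the summability $\sum_t(1-\pi_t(2))<\infty$ that makes the infinite product positive; with only $r_2-b_t^\ast>0$ one could conclude $\theta_2\to\infty$ but without a rate, which does not suffice. It is also worth verifying that the conclusion is robust to the precise ``minimum variance'' convention, which it is, since all of them equal $\tfrac{17}{30}$ at the uniform initialization and tend to $(r_1+r_3)/2=\tfrac12$ as $\pi_t(2)\to1$.
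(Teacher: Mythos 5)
Your proposal is correct and follows the same overall route as the paper's proof: condition on the event that arm 2 is sampled forever, show the minimum-variance baseline stays a uniform gap below $r_2=0.7$ along that trajectory so $\theta_2$ grows by at least a constant per step while $\theta_1,\theta_3$ are frozen, and then conclude $\prod_t \pi_t(2)>0$ exactly as in the two-armed bandit argument. Where you differ is in how the uniform gap is established: you exploit the one-parameter reduction ($\pi_1=\pi_3=p$, $\pi_2=1-2p$) to get a closed-form baseline, $b_t^\ast=\tfrac12+0.4\,p^2/(2-6p+6p^2)$, and a monotonicity-in-$p$ bound $b_t^\ast\le 17/30$, whereas the paper argues by induction that the convex-combination weight on $r_2$ (the ratio $w_2/w_1$) decreases after every update, so the baseline drifts from $\approx 0.57$ toward $(r_1+r_3)/2=\tfrac12$; your closed-form argument is arguably more transparent, but it is tied to a specific convention. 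Indeed, your weights $(1-\pi_j)/\pi_j$ (centered variance of the natural-gradient estimate) correspond to neither of the two conventions the paper actually analyzes, namely weights $\bigl((1-\pi_i)^2+\pi_j^2+\pi_k^2\bigr)\pi_i$ for the vanilla-gradient estimator and $1/\pi_i$ for the minimum-norm natural-gradient update, and your remark that ``all conventions coincide'' is literally true only at the uniform policy. This is not a fatal gap—the same one-parameter computation gives, e.g., $b^\ast=(1-1.3p)/(2-3p)$ for the weights $1/\pi_i$ and $b^\ast=(2-1.8p-2.4p^2)/(4-6p)$ for the gradient-variance weights, both increasing on $(0,\tfrac13]$ with maximum $17/30$, so the gap $r_2-b_t^\ast\ge 2/15$ and the rest of your argument carry over—but to match the statement as the paper proves it you should carry out that check explicitly rather than assert robustness.
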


\begin{proof}
The policy probabilities are given by $\pi_i = \frac{e^\theta_i}{\sum_j e^\theta_j}$ for $i =1,2,3$. 
Note that this parameterization is invariant to shifting all $\theta_i$ by a constant. 

The natural policy gradient estimate for 

The gradient for sampling arm $i$ is given by $g_i = e_i - \pi$, where $e_i$ is the vector of zeros except for a $1$ in entry $i$. 
The Fisher information matrix can be computed to be $F = diag(\pi) - \pi \pi^T$. \\
Since $F$ is not invertible, then we can instead find the solutions to $Fx = g_i$ to obtain our updates.
Solving this system gives us $x = \lambda e + \frac{1}{\pi_i} e_i$, where $e$ is a vector of ones and $\lambda \in \R$ is a free parameter.

Next, we compute the minimum-variance baseline. Here, we have two main options. We can find the baseline that minimizes the variance of the sampled gradients $g_i$, the ``standard'' choice, or we can instead minimize the variance of the sampled \textit{natural} gradients, $F^{-1}g_i$. 
We analyze both cases separately. 

The minimum-variance baseline for gradients is given by $b^* = \frac{\E [R(\tau) || \nabla \log \pi(\tau)||^2]}{\E [|| \nabla \log \pi(\tau)||^2] } $. 
In this case, $\nabla \log \pi_i = e_i - \pi$, where $e_i$ is the $i$-th standard basis vector and $\pi$ is a vector of policy probabilities. 
Then, $|| \nabla \log \pi_i|| = (1-\pi_i)^2 + \pi_j^2 + \pi_k^2$, where $\pi_j$ and $\pi_k$ are the probabilities for the other two arms.
This gives us
\[ b^* = \frac{\sum_{i=1}^3 r_i w_i }{\sum_{i=1}^3 w_i}  \]
where $w_i = ((1-\pi_i)^2 + \pi_j^2 + \pi_k^2) \pi_i $.

The proof idea is similar to that of the two-armed bandit. Recall that the rewards for the three actions are 1, 0.7 and 0. We will show that this it is possible to choose action 2 (which is suboptimal) forever. 

To do so, it is enough to show that we make updates that increase $\theta_2$ by at least $\delta$ at every step (and leave $\theta_1$ and $\theta_3$ the same). In this way, the probability of choosing action 2 increases sufficiently fast, that we can use the proof for the two-armed bandit to show that the probability of choosing action 2 forever is nonzero.

In more detail, suppose that we have established that, at each step, $\theta_2$ increases by at least $\delta$. 
The policy starts as the uniform distribution so we can choose any initial $\theta$ as long as three components are the same ($\theta_1 = \theta_2 = \theta_3$).
Choosing the initialization $\theta_i = - \log(\nicefrac{1}{2})$ for all $i$, we see that $\pi_2 = \frac{e^{\theta_2}}{\sum_{i=1}^3 \theta_i} = \frac{e^{\theta_2}}{1 + e^{\theta_2}} = \sigma(\theta_2)$ where $\sigma(.)$ is the sigmoid function. 
Since at the $n$-th step, $\theta_2 > \theta_0 + n\delta$, we can reuse the proof for the two-armed bandit to show $Pr(\text{action 2 forever}) > 0$.

To complete the proof, we need to show that the updates are indeed lower bounded by a constant.
Every time we sample action 2, the update is $\theta \xleftarrow{} \theta + \alpha (r_2 - b^*) (\lambda e + \frac{1}{\pi_2} e_2)$. We can choose any value of $\lambda$ since they produce the same policy after an update due to the policy's invariance to a constant shift of all the parameters. We thus choose $\lambda = 0$ for simplicity. 
In summary, an update does $\theta_2 \xleftarrow{} \theta_2 + \alpha (r_2 - b^*) \frac{1}{\pi_2}$ and leaves the other parameters unchanged.

In the next part, we use induction to show the updates are lower bounded at every step. 
For the base case, we need $r_2 - b^* > \delta$ for some $\delta > 0$. Since we initialize the policy to be uniform, we can directly compute the value of $b^* \approx 0.57$, so the condition is satisfied for, say, $\delta = 0.1$.

For the inductive case, we assume that $r_2 - b^* > \delta$ for $\delta > 0$ and we will show that $r_2 - b^*_+ > \delta$ also, where $b^*_+$ is the baseline after an update. 
It suffices to show that $b^*_+ \le b^*$.

To do so, we examine the ratio $\frac{w_2}{w_1}$ in $b^*$ and show that this decreases. Let $\left( \frac{w_2}{w_1}\right)_+$ be the ratio after an update and let $c = r_2 - b^*$.
\begin{align*}
    \left(\frac{w_2}{w_1}\right) &= \frac{2 (\pi_1^2 + \pi_3^2 + \pi_1 \pi_3) \pi_2}{2 (\pi_2^2 + \pi_3^2 + \pi_2 \pi_3) \pi_1 } \\
    &= \frac{( e^{2\theta_1} + e^{2\theta_3} + e^{\theta_1 + \theta_3} ) e^{\theta_2}} 
    {( e^{2\theta_2} + e^{2\theta_3} + e^{\theta_2 + \theta_3} ) e^{\theta_1}} \\
    \left(\frac{w_2}{w_1}\right)_+ &= 
    \frac{( e^{2\theta_1} + e^{2\theta_3} + e^{\theta_1 + \theta_3} ) e^{\theta_2 + \frac{c}{\pi_2}}} 
    {( e^{2\theta_2 + 2\frac{c}{\pi_2}} + e^{2\theta_3} + e^{\theta_2 + \theta_3 + \frac{c}{\pi_2} )} e^{\theta_1}}
\end{align*}
We compare the ratio of these:
\begin{align*}
    \frac{\left(\frac{w_2}{w_1}\right)_+}{\left(\frac{w_2}{w_1}\right)} &= 
    \frac{e^{\theta_2 + \frac{c}{\pi_2}}}{e^{\theta_2}} 
    \frac{e^{2\theta_2} + e^{2\theta_3} + e^{\theta_2 + \theta_3} }
    {e^{2\theta_2 + 2\frac{c}{\pi_2}} + e^{2\theta_3} + e^{\theta_2 + \theta_3 + \frac{c}{\pi_2} }} \\
    &= \frac{e^{2\theta_2} + e^{2\theta_3} + e^{\theta_2 + \theta_3}}
    {e^{2\theta_2 + \frac{c}{\pi_2}} + e^{2\theta_3 - \frac{c}{\pi_2}} + e^{\theta_2 + \theta_3 }} \\
    &< \frac{e^{2\theta_2} + e^{2\theta_3} + e^{\theta_2 + \theta_3}}
    {e^{2\theta_2 + \delta} + e^{2\theta_3 - \delta} + e^{\theta_2 + \theta_3 }} 
\end{align*}

The last line follows by considering the function 
$f(z) = e^{x-z} + e^{y-z}$ for a fixed $x \le y$.
$f'(z) = -e^{x-z} + e^{y+z} > 0$ for all $z$, so $f(z)$ is an increasing function.
By taking $x = 2\theta_2 $ and $y = 2\theta_3$ ($\theta_2 \ge \theta_3$), along with the fact that $\frac{c}{\pi_2} > \delta$ (considering these as $z$ values), then we we see that the denominator has increased in the last line and the inequality holds.

By the same argument, recalling that $\delta > 0$, we have that the last ratio is less than $1$. Hence, $\left(\frac{w_2}{w_1}\right)_+ < \left(\frac{w_2}{w_1}\right)$. 

Returning to the baseline, $b^* = \frac{w_1 r_1 + w_2 r_2 + w3 r_3}{w_1 + w_2 + w_3} $. We see that this is a convex combination of the rewards. 
Focusing on the (normalized) weight of $r_2$:
\begin{align*}
    \frac{w_2}{w_1 + w_2 + w_3} &= \frac{w_2}{2w_1 + w_2} \\
    &= \frac{\nicefrac{w_2}{w_1}}{2 + \nicefrac{w_2}{w_1}}
\end{align*}
The first line follows since $w_1 = w_3$ and the second by dividing the numerator and denominator by $w_1$.
This is an increasing function of $\nicefrac{w_2}{w_1}$ so decreasing the ratio will decrease the normalized weight given to $r_2$.
This, in turn, increases the weight on the other two rewards equally. 
As such, since the value of the baseline is under $r_2 = 0.7$ (recall it started at $b^* \approx 0.57$) and the average of $r_1$ and $r_3$ is $0.5$, the baseline must decrease towards $0.5$.

Thus, we have shown that the gap between $r_2$ and $b^*$ remains at least $\delta$ and this completes the proof for the minimum-variance baseline of the gradients.

Next, we tackle the minimum-variance baseline for the updates.
Recall that the natural gradient updates are of the form 
$x_i = \lambda e + \frac{1}{\pi_i} e_i$ for action $i$ where $e$ is a vector of ones and $e_i$ is the $i$-th standard basis vector.

The minimum-variance baseline for updates is given by
\[b^* = \frac{\E [R_i ||x_i||^2 ] }{\E [||x_i||^2]  }\]

We have that $||x_i||^2 = 2\lambda^2 = (\lambda + \frac{1}{\pi_i})^2$.  
At this point, we have to choose which value of $\lambda$ to use since it will affect the baseline.
The minimum-norm solution is a common choice (corresponding to use of the Moore-Penrose pseudoinverse of the Fisher information instead of the inverse). We also take a look at fixed values of $\lambda$, but we find that this requires an additional assumption $3\lambda^2 < \nicefrac{1}{\pi_1^2}$.

First, we consider the minimum-norm solution. We find that the minimum-norm solution gives $\frac{2}{3\pi_i^2}$ for $\lambda = \frac{-1}{3\pi_i^2}$. 

We will reuse exactly the same argument as for the minimum-variance baseline for the gradients. The only difference is the formula for the baseline, so all we need to check is the that the ratio of the weights of the rewards decreases after one update, which implies that the baseline decreases after an update.

The baseline can be written as:
\begin{align*}
    b^* &= \frac{\sum_{i=1}^3 r_i \frac{2}{3 \pi_i^2} \pi_i}{\sum_{i=1}^3  \frac{2}{3 \pi_i^2}} \\
    &= \frac{\sum_{i=1}^3 r_i \frac{1}{\pi_i} }{\sum_{i=1}^3 \frac{1}{\pi_i} } 
\end{align*}

So we have the weights $w_i = \frac{1}{\pi_i}$ and the ratio is 

\begin{align*}
    \left( \frac{w_2}{w_1} \right) &= \frac{\pi_1}{\pi_2} \\
    &= \frac{e^{\theta_1} }{e^{\theta_2}} \\
    & = e^{\theta_1 - \theta_2}
\end{align*}

So, after an update, we get
\[ \left( \frac{w_2}{w_1} \right)_+ = e^{\theta_1 - \theta_2 - \frac{c}{\pi_2}} \]
for $c = \alpha(r_2 - b^*)$, 
which is less than the initial ratio. 
This completes the case where we use the minimum-norm update.

Finally, we deal with the case where $\lambda \in \mathbb{R}$ is a fixed constant. We don't expect this case to be very important as the minimum-norm solution is almost always chosen (the previous case).
Again, we only need to check the ratio of the weights.

The weights are given by $w_i = (2\lambda^2 + (\lambda + \frac{1}{\pi_i})^2) \pi_i$

\begin{align*}
    \left( \frac{w_2}{w_1} \right) &= \frac{(2\lambda^2 + (\lambda + \frac{1}{\pi_2})^2) \pi_2}{(2\lambda^2 + (\lambda + \frac{1}{\pi_1})^2) \pi_1} \\
    &= \frac{2\lambda^2 \pi_2 + (\lambda + \frac{1}{\pi_2})^2 \pi_2 }{
    2\lambda^2 \pi_1 + (\lambda + \frac{1}{\pi_1})^2 \pi_1 }
\end{align*}
We know that after an update $\pi_2$ will increase and $\pi_1$ will decrease. 
So, we check the partial derivative of the ratio to assess its behaviour after an update.

\begin{align*}
    \frac{d}{d\pi_1} \left( \frac{w_2}{w_1} \right) &= 
    -\frac{2\lambda^2 \pi_2 + (\lambda + \frac{1}{\pi_2})^2 \pi_2 }{
    (2\lambda^2 \pi_1 + (\lambda + \frac{1}{\pi_1})^2 \pi_1 } (3\lambda^2 - \nicefrac{1}{\pi_1^2})
\end{align*}
We need this to be an increasing function in $\pi_1$ so that a decrease in $\pi_1$ implies a decrease in the ratio. This is true when  $3\lambda^2 < \nicefrac{1}{\pi_1^2}$. So, to ensure the ratio decreases after a step, we need an additional assumption on $\lambda$ and $\pi_1$, which is that $3\lambda^2 < \nicefrac{1}{\pi_1^2}$. This is notably always satisfied for $\lambda=0$. 

\end{proof}

\subsection{Convergence with gap baselines}
\label{app:3arm_gap_baseline}
\begin{proposition}
For a three-arm bandit with deterministic rewards, choosing the baseline $b$ so that $r_1 > b > r_2$ where $r_1$ (resp. $r_2$) is the value of the optimal (resp. second best) arm, natural policy gradient converges to the best arm almost surely.
\end{proposition}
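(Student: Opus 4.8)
The plan is to exploit the sign structure of the updates under a gap baseline to force the optimal arm to be played infinitely often, which then pins its probability to $1$; noise plays essentially no role except that one must handle the non-i.i.d.\ nature of the draws.

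First I would remove the free parameter $\lambda$ from the natural-gradient update. Since the softmax is invariant under a global shift of $\theta$, the update after sampling arm $i$, namely $\theta \leftarrow \theta + \alpha (r_i - b)\bigl(\lambda e + \tfrac{1}{\pi_i} e_i\bigr)$, is equivalent to $\theta_i \leftarrow \theta_i + \alpha(r_i-b)/\pi_i$ with the other two coordinates left unchanged (take $\lambda = 0$). Because $r_1 > b > r_2 \ge r_3$, we have $r_1 - b > 0$ and $r_2 - b, r_3 - b < 0$, so along any trajectory $\theta_1^{(t)}$ is nondecreasing while $\theta_2^{(t)}$ and $\theta_3^{(t)}$ are nonincreasing. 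In particular $\theta_2^{(t)} \le \theta_2^{(0)}$ and $\theta_3^{(t)} \le \theta_3^{(0)}$ for all $t$, which gives the uniform lower bound $\pi_1^{(t)} \ge \pi_1^{(0)} =: c > 0$.

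Next I would show that arm $1$ is sampled infinitely often almost surely. Conditioned on the past, the probability of sampling arm $1$ at step $t$ is exactly $\pi_1^{(t)} \ge c$, so $\sum_t \sP(\text{arm }1\text{ at }t \mid \mathcal{F}_t) = \infty$ surely; the conditional (Lévy) form of the second Borel--Cantelli lemma --- or, directly, $\sP(\text{arm }1\text{ not sampled in steps } t{+}1,\dots,t{+}k \mid \mathcal{F}_t) \le (1-c)^k \to 0$ --- then yields that arm $1$ is played infinitely often with probability $1$. Writing $N_t$ for the number of times arm $1$ has been played by time $t$, each such play raises $\theta_1$ by $\alpha(r_1-b)/\pi_1 \ge \alpha(r_1-b) =: \delta > 0$ (using $\pi_1 \le 1$), hence $\theta_1^{(t)} \ge \theta_1^{(0)} + \delta N_t \to \infty$ almost surely.

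Finally I would combine the two estimates: $\pi_1^{(t)} \ge e^{\theta_1^{(t)}}/\bigl(e^{\theta_1^{(t)}} + e^{\theta_2^{(0)}} + e^{\theta_3^{(0)}}\bigr) \to 1$ as $\theta_1^{(t)} \to \infty$, so $\pi_1^{(t)} \to 1$ almost surely, i.e.\ the policy converges to the optimal arm. I expect the only delicate point to be the infinitely-often step: since the samples are not i.i.d.\ (the sampling distribution changes every iteration), one cannot invoke the classical Borel--Cantelli lemma and instead needs its martingale/conditional version; everything else is elementary bookkeeping of update signs and the shift-invariance of the softmax. (With non-constant stepsizes $\alpha_t$ one would additionally require $\sum_t \alpha_t = \infty$, so that $\delta$ may be replaced by $\alpha_t(r_1-b)$ and the partial sums still diverge; for a fixed stepsize this is automatic.)
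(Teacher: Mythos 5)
Your proof is correct, but it reaches the conclusion by a different route than the paper. Both arguments rest on the same structural observation: with a gap baseline, the (natural-gradient, $\lambda=0$) update increases $\theta_1$ whenever arm 1 is drawn and decreases $\theta_2,\theta_3$ whenever they are drawn, so $\pi_t(1)\ge\pi_0(1)>0$ deterministically along every trajectory. From there the paper takes a concentration route: it forms the martingale $X_t=\alpha\Delta_1 t+\theta^1_0-\theta^1_t$, uses the lower bound $\pi_t(1)\ge\pi_0(1)$ to get bounded increments, and applies Azuma--Hoeffding to show $\theta^1_t\ge\tfrac{\alpha\Delta_1}{2}t+\theta^1_0$ with probability at least $1-\exp\bigl(-\tfrac{\pi_0(1)^2}{8}t\bigr)$, i.e.\ linear growth of $\theta^1_t$ with exponentially small failure probability. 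You instead argue qualitatively: the uniform lower bound on $\pi_t(1)$ plus the conditional second Borel--Cantelli lemma (or your direct $(1-c)^k$ geometric bound iterated via the tower property) gives that arm 1 is sampled infinitely often a.s., each such sample adds at least $\alpha(r_1-b)$ to $\theta_1$, and the other parameters are bounded above by their initial values, so $\pi_t(1)\to 1$ a.s. Your version is more elementary --- no martingale concentration is needed, only sign bookkeeping and a conditional Borel--Cantelli argument --- and it isolates cleanly why the gap baseline works (the optimal arm's parameter can only go up and must go up infinitely often). What the paper's Azuma argument buys in exchange is a quantitative rate: almost-sure \emph{linear} growth of $\theta^1_t$ with an explicit exponential tail, which is stronger than the bare a.s.\ convergence you obtain and parallels the quantitative bounds used elsewhere in the appendix. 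Your parenthetical remark about non-constant stepsizes ($\sum_t\alpha_t=\infty$ sufficing) is also sound, though not needed for the fixed-stepsize statement.
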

\begin{proof}

Let us define $\Delta_i = r_i-b$ which is striclty positive for $i=1$, stricly negative otherwise. Then the gradient on the parameter $\theta^i$ of arm $i$
\begin{eqnarray*}
g^i_t &=& \bm{1}_{\{A_t = i\}} \frac{\Delta_i}{\pi_t(i)}, \ i \sim \pi_t(\cdot)
\end{eqnarray*}
Its expectation is therefore
\begin{eqnarray*}
\E[\theta^i_t] &=& \alpha t \Delta_i  + \theta^i_0\\
\end{eqnarray*}

Also note that there is a nonzero probability of sampling each arm at $t=0$: $\theta_0 \in \R^3$, $\pi_0(i) > 0$.
Furthermore, $\pi_t(1) \ge \pi_0(1)$ as $\theta_1$ is increasing and $\theta_i, i>1$ decreasing because of the choice of our baseline. Indeed, the updates for arm 1 are always positive and negative for other arms.

For the martingale $X_t = \alpha \Delta_1  t + \theta^1_0 - \theta^1_t  $, we have
$$|X_t - X_{t-1}| \le \alpha \frac{\Delta_1}{\pi_0(1)}$$
thus satisfying the \textit{bounded increments} assumption of Azuma's inequality.
We can therefore show

\begin{eqnarray*}
\sP\big(\theta^1_t > \frac{\alpha \Delta_1}{2} t   + \theta^1_0\big) &=& \sP\big(\theta^1_t - \alpha \Delta_1 t  - \theta^1_0> - \frac{\alpha \Delta_1}{2} t \big) \\
&=& \sP\big(X_t < \frac{\alpha \Delta_1}{2} t \big) \\
&=& 1- \sP\big(X_t \ge \frac{\alpha \Delta_1}{2} t \big) \\
&\ge& 1-\exp\big(-\frac{(\frac{\alpha \Delta_1}{2} t)^2 \pi_0(1)^2}{2 t \alpha^2 \Delta_1^2 }\big)\\
&\ge& 1-\exp\big(-\frac{\pi_0(1)^2}{8} t\big)
\end{eqnarray*}

This shows that $\theta^1_t$ converges to $+\infty$ almost surely while the $\theta^i_t, i>1$ remain bounded by $\theta^i_0$, hence we converge to the optimal policy almost surely.

\end{proof}

\subsection{Convergence with off-policy sampling}
\label{app:3arm_off_policy}
We show that using importance sampling with a separate behaviour policy can guarantee convergence to the optimal policy for a three-armed bandit. 

Suppose we have an $n$-armed bandit where the rewards for choosing action $i$ are distributed according to $P_i$, which has finite support and expectation $r_i$. 
Assume at the $t$-th round the behaviour policy selects each action $i$ with probability $\mu_t(i)$.
Then, if we draw action $i$, the stochastic estimator for the natural policy gradient with importance sampling is equal to 
\begin{align*}
    g_t = \frac{R_i - b}{\mu_t (i)} \mathbbm{1}_{\{A_t = i\} } 
\end{align*}
with probability $\mu_t(i)$ and $R_i$ drawn from $P_i$.

We have that $\E[g_t] = r - be$, where $r$ is a vector containing elements $r_i$ and $e$ is a vector of ones. 
We let $\E[g_t] = \Delta$ for notational convenience.

By subtracting the expected updates, we define the multivariate martingale
$X_t = \theta_t - \theta_0 - \alpha \Delta t$. 
Note that the $i$-th dimension $X^{i}_t $ is a martingale for all $i$.

\begin{lemma}[Bounded increments] Suppose we have bounded rewards and a bounded baseline and a behaviour policy selecting all actions with probability at least $\epsilon_t$ at round $t$. Then, the martingale $\{X_t\}$ associated with natural policy gradient with importance sampling has bounded increments
$$|X^{i}_t - X^{i}_{t-1}| \le \frac{C \alpha}{\epsilon_t}$$
for all dimensions $i$ and some fixed constant $C$.
\end{lemma}

\begin{proof}
    The updates and $X_t$ are defined as above. 
    
    Furthermore $\E[g_t|\theta_0] = \E[\E[g_t | \theta_t] | \theta_0] = \Delta $.
    As the rewards are bounded, $\exists R_{max}>0$ such that, for all actions $i$, $|R_i| \le R_{max}$ with probability 1. 
    
    For the $i$-th dimension,
    \begin{eqnarray*}
    |X_t^i - X_{t-1}^i| &=& \alpha |g_t^i - |\Delta_i| |\\
    &\le& \alpha\big(| g_t^i| + |\Delta_i| \big)\\
     &\le& \alpha\big( \frac{|R_{max}-b|}{\epsilon_t} + |\Delta_i| \big) \\
    &\le& \alpha \frac{R_{max} + |b| + |\Delta_i| }{\epsilon_t} \quad \text{as}\ \epsilon_t \le 1\\
    \end{eqnarray*}
    
    Thus   $|X_t^i - X_{t-1}^i| \le \frac{C \alpha}{\epsilon_t}$ for all $i$.
\end{proof}

\is*

\begin{proof}
Let $r_i = \E [R_i]$, the expected reward for choosing action $i$.
Without loss of generality, we order the arms such that $r_1 > r_2 > ... > r_n$. 
Also, let $\Delta_i = r_i - b$, the expected natural gradient for arm $i$. 

Next, we choose $\delta \in (0,1)$ such that $(1-\delta) \Delta_1 > (1+\delta) \Delta_j$.
We apply Azuma's inequality to $X^1_t$, the martingale associated to the optimal action, with $\epsilon = \alpha \delta \Delta_i t$.

\begin{align*}
    \sP(\theta_t^1 \le \theta_0^1 + \alpha (1-\delta) \Delta_1 t  )
    &= \sP(\theta_t^1 - \theta_0^1 - \alpha \Delta_1 t  \le -\alpha \delta \Delta_1 t  )\\
    &\le \exp \left(- \frac{(\alpha \delta \Delta_1 t)^2 \epsilon_t^2}{2t \alpha^2 C^2}  \right) \\ 
    &= \exp \left(- \frac{ \delta^2 \Delta_1^2 }{2 C^2} t \epsilon_t^2 \right) \\
\end{align*}

Similarly, we can apply Azuma's inequality to actions $i \neq 1$ and obtain
\begin{align*}
    \sP(\theta_t^i \ge \theta_0^i + \alpha (1 + \delta) \Delta_i t  ) 
    &= \sP(\theta_t^i - \theta_0^i - \alpha \Delta_i t  \ge \alpha \delta \Delta_i t  )\\ 
    &\le \exp \left(- \frac{ \delta^2 \Delta_i^2 }{2 C^2} t \epsilon_t^2 \right)
\end{align*}

Letting $A$ be the event $\theta_t^1 \le \theta_0^1 + \alpha (1-\delta) \Delta_1 t$ and $B_i$ be the event that $\theta_t^i - \theta_0^i \ge \alpha (1 + \delta) \Delta_i t $ for $i \neq 1$, we can apply the union bound to get 
\[ \sP (A \cup B_1 \cup ... \cup B_n) \le \sum_{i=1}^n  \exp \left(- \frac{ \delta^2 \Delta_i^2 }{2 C^2} t \epsilon_t^2 \right) \]
The RHS goes to $0$ when $\sum_{t\ge 0} t \epsilon_t^2 = \infty$. 

Notice that $A^\complement$ is the event $\theta_t^1 > \theta_0^1 + \alpha (1-\delta) \Delta_1 t$ and $B^\complement$ is the event $\theta_t^i < \theta_0^i + \alpha (1 + \delta) \Delta_i t$. 
Then, inspecting the difference between $\theta^1_t$ and $\theta^i_t$, we have
\begin{align*}
    \theta_t^1 - \theta_t^i &> \theta_0^1 + \alpha (1-\delta) \Delta_1 t - (\theta_0^i + \alpha (1 + \delta) \Delta_i t) \\
    &= \theta_0^1 - \theta_0^i  + \alpha ( (1-\delta) \Delta_1 - (1 + \delta) \Delta_i) t 
\end{align*}
By our assumption on $\delta$, the term within the parenthesis is positive and hence the difference grows to infinity as $t \xrightarrow{} \infty$.
Taken together with the above probability bound, we have convergence to the optimal policy in probability.

\end{proof}

\section{Other results}
\label{app:other_results}
\subsection{Minimum-variance baselines}
\label{app:optimal_baseline}
For completeness, we include a derivation of the minimum-variance baseline for the trajectory policy gradient estimate (REINFORCE) and the state-action policy gradient estimator (with the true state-action values).

\textbf{Trajectory estimator (REINFORCE)} \\
We have that $\nabla J(\theta) = \E_{\tau \sim \pi} [R(\tau) \nabla \log \pi(\tau)] = \E_{\tau \sim \pi} [(R(\tau) - b) \nabla \log \pi(\tau)] $ and our estimator is $g = (R(\tau) -b) \nabla \log \pi (\tau)$ for a sampled $\tau$ for any fixed $b$.
Then we would like to minimize the variance: 
\begin{align*}
    Var(g) &= \E[ \|g\|^2_2] - \| \E[g] \|^2_2 \\ 
    &= \E[ \|g\|^2_2] - \| \E[(R(\tau) -b) \nabla \log \pi (\tau)] \|^2_2  \\
    &= \E[ \|g\|^2_2] - \| \E[R(\tau) \nabla \log \pi (\tau)] \|^2_2 
\end{align*}
The second equality follows since the baseline doesn't affect the bias of the estimator.
Thus, since the second term does not contain $b$, we only need to optimize the first term.

Taking the derivative with respect to $b$, we have:
\begin{align*}
    \frac{\partial}{\partial b} \E[ \|g\|^2_2] &= \frac{\partial}{\partial b} \E[ \| R(\tau) \nabla \log \pi (\tau)\|^2 - 2\cdot R(\tau) b \|\nabla \log \pi (\tau)]\|^2 + b^2  \|\nabla \log \pi (\tau)]\|^2]\\
    &= 2\ \big( b \cdot \E [\|\nabla \log \pi (\tau)]\|^2] - \E [ R(\tau) \|  \nabla \log \pi (\tau)]\|^2] \big)\\
\end{align*}
The minimum of the variance can then be obtained by finding the baseline $b^\ast$ for which the gradient is $0$, i.e

$$b^\ast = \frac{ \E [R(\tau) \| \nabla \log \pi (\tau)]\|^2] }{ \E [\| \nabla \log \pi (\tau)]\|^2] }$$

\textbf{State-action estimator (actor-critic)} \\
In this setting we assume access to the $Q$-value for each state-action pair $Q^\pi(s,a)$, in that case the update rule is $\nabla J(\theta) = \E_{s,a \sim d^\pi} [Q^\pi(s,a) \nabla \log \pi(a|s)]  = \E_{s,a \sim d^\pi} [(Q^\pi(s,a) - b(s)) \nabla \log \pi(a|s)] $ and our estimator is $g = (Q^\pi(s,a) - b(s)) \nabla \log \pi(a|s)$ for a sampled $s,a$.
We will now derive the best baseline for a given state $s$ in the same manner as above

\begin{align*}
Var(g | s) &= \E_{a \sim \pi} [\|g\|^2] - \| \E_{a \sim \pi} [g]\|^2\\
 &= \E_{a \sim \pi} [\|g\|^2] - \| \E_{a \sim \pi} [Q^\pi(s,a) \nabla \log \pi(a|s)]\|^2\\
\end{align*}
So that we only need to take into account the first term.

\begin{align*}
    \frac{\partial}{\partial b}  \E_{a \sim \pi} [\|g\|^2]&= \frac{\partial}{\partial b} \E_{a \sim \pi} [ \| Q^\pi(s,a)\nabla \log \pi (a|s))\|^2 - 2\cdot Q^\pi(s,a) b(s) \|\nabla \log \pi (a|s)]\|^2 + b(s)^2  \|\nabla \log \pi (a|s)]\|^2]\\
    &= 2\ \big( b(s) \cdot \E [\|\nabla \log \pi (a|s)]\|^2] - \E [ Q^\pi(s,a) \|  \nabla \log \pi (a|s)]\|^2] \big)\\
\end{align*}
Therefore the baseline that minimizes the variance for each state is 
$$b^\ast(s) =\frac{ \E [ Q^\pi(s,a) \|  \nabla \log \pi (a|s)]\|^2] }{ \E [ \|  \nabla \log \pi (a|s)]\|^2] \big)} $$

Note that for the natural policy gradient, the exact same derivation holds and we obtain that 
$$b^\ast(s) =\frac{ \E [ Q^\pi(s,a) \|  F^{-1}_s   \nabla \log \pi (a|s)]\|^2] }{ \E [ \| F^{-1}_s  \nabla \log \pi (a|s)]\|^2] \big)} $$ 
where $F^{-1}_s = \E_{a \sim \pi(\cdot,s)} [\nabla \log \pi (a|s) \nabla \log \pi(a|s)^\top]$

\subsection{Natural policy gradient for softmax policy in bandits}
\label{app:npg_softmax_bandit}
We derive the natural policy gradient estimator for the multi-armed bandit with softmax parameterization.

The gradient for sampling arm $i$ is given by $g_i = e_i - \pi$, where $e_i$ is the vector of zeros except for a $1$ in entry $i$. 
The Fisher information matrix can be computed to be $F = diag(\pi) - \pi \pi^T$, where $diag(\pi)$ is a diagonal matrix containing $\pi_i$ as the $i$-th diagonal entry. \\
Since $F$ is not invertible, then we can instead find the solutions to $Fx = g_i$ to obtain our updates.
Solving this system gives us $x = \lambda e + \frac{1}{\pi_i} e_i$, where $e$ is a vector of ones and $\lambda \in \R$ is a free parameter.
Since the softmax policy is invariant to the addition of a constant to all the parameters, we can choose any value for $\lambda$.

\subsection{Link between minimum variance baseline and value function}
\label{app:value_minvar_baseline}
We show here a simple link between the minimum variance baseline and the value function. While we prove this for the REINFORCE estimator, a similar relation holds for the state-action value estimator.
\begin{align*}
    b^\ast &= \frac{ \E [R(\tau) \| \nabla \log \pi (\tau)]\|^2] }{ \E [\| \nabla \log \pi (\tau)]\|^2] }\\
     &= \frac{ \E [R(\tau) \| \nabla \log \pi (\tau)]\|^2] }{ \E [\| \nabla \log \pi (\tau)]\|^2] } - V^\pi + V^\pi\\
     &=  \frac{ \E [R(\tau) \| \nabla \log \pi (\tau)]\|^2]  - \E[R(\tau)] \E [\| \nabla \log \pi (\tau)]\|^2 }{ \E [\| \nabla \log \pi (\tau)]\|^2] } + V^\pi\\
     &=  \frac{\text{Cov}\big(R(\tau\big), \| \nabla \log \pi (\tau)]\|^2)}{ \E [\| \nabla \log \pi (\tau)]\|^2] } + V^\pi\\
\end{align*}

\subsection{Variance of perturbed minimum-variance baselines} \label{app:var_perturbed_baseline}
Here, we show that the variance of the policy gradient estimator is equal for baselines $b_+ = b^* + \epsilon$ and $b_- = b^* - \epsilon$, where $\epsilon > 0$ and $b^*$ is the minimum-variance baseline.
We will use the trajectory estimator here but the same argument applies for the state-action estimator. 

We have $g = R(\tau) -b) \nabla \log \pi (\tau) $ and the variance is given by
\begin{align*}
    Var(g) &= \E[ \|g\|^2_2] - \| \E[g] \|^2_2 \\ 
    &= \E[ \|g\|^2_2] - \| \E[(R(\tau) -b) \nabla \log \pi (\tau)] \|^2_2  \\
    &= \E[ \|g\|^2_2] - \| \E[R(\tau) \nabla \log \pi (\tau)] \|^2_2 
\end{align*}
where the third line follows since the baseline does not affect the bias of the policy gradient.

Focusing on the first term:
\begin{align*}
    \E [||g||^2_2 ||] &= \E [R(\tau) -b) \nabla \log \pi (\tau) ] \\
    &= \E[ (R(\tau) -b)^2 ||\nabla \log \pi (\tau) ||^2_2 ] \\
    &= \sum_\tau  (R(\tau) -b)^2 ||\nabla \log \pi (\tau) ||^2_2 \pi (\tau) 
\end{align*}
Since $(R(\tau) -b)^2$ is a convex quadratic in $b$ and $||\nabla \log \pi (\tau)||^2_2 \pi (\tau)$ is a positive constant for a fixed $\tau$, the sum of these terms is also a convex quadratic in $b$.
Hence, it can be rewritten in vertex form $\E [||g||^2_2 ||] = a (b-b_0)^2 + k$ for some $a > 0$, $b_0, k \in \mathbb{R}$.

We see that the minimum is achieved at $b^* = b_0$ (in fact, $b_0$ is equal to the previously-derived expression for the minimum-variance baseline).
Thus, choosing baselines $b_+ = b^* + \epsilon$ or  $b_- = b^* - \epsilon$ result in identical expressions $\E [||g||^2_2 ||] = a \epsilon^2 + k$ and therefore yield identical variance. 

Note this derivation also applies for the natural policy gradient. The only change would be the substitution of $\nabla \log \pi(\tau)$ by $F^{-1} \nabla \log \pi (\tau)$ where $F = \E_{s_t \sim d_\pi, a_t \sim \pi} [\nabla \log \pi(a_t|s_t) \nabla \log \pi(a_t|s_t)^\top]$

\subsection{Baseline for natural policy gradient and softmax policies}
We show that introducing a baseline does not affect the bias of the stochastic estimate of the natural policy gradient. 
The estimator is given by 
$g = (R_i - b) F^{-1} \nabla \log \pi (a_i)$, where $F^{-1} = \E_{a \sim \pi} [\nabla \log \pi(a) \nabla \log \pi(a)^\top]$.

For a softmax policy, this is:
$ g = (R_i - b) (\frac{1}{\pi_\theta(i)}e_i + \lambda e)$, where $e_i$ is a vector containing a 1 at position $i $ and 0 otherwise, $e$ is a vector of all one and $\lambda$ is an arbitrary constant.
Checking the expectation, we see that
\begin{align*}
    \E[g] &= \E[(R_i - b) \left( \frac{1}{\pi_\theta(a_i)}e_i + \lambda e \right)] \\
    &= \E[R_i\left( \frac{1}{\pi_\theta(a_i)}e_i + \lambda e \right)] - b  \E[ \left( \frac{1}{\pi_\theta(a_i)}e_i + \lambda e \right)] \\ 
    &= \E[R_i\left( \frac{1}{\pi_\theta(a_i)}e_i + \lambda e \right)] - b (e + \lambda e) \\
\end{align*}
So the baseline only causes a constant shift in all the parameters. But for the softmax parameterization, adding a constant to all the parameters does not affect the policy, so the updates remained unbiased. In other words, we can always add a constant vector to the update to ensure the expected update to $\theta$ does not change, without changing the policy obtained after an update.

\subsection{Natural policy gradient estimator for MDPs} \label{app:npg_mdp_estimate}
In this section, we provide a detailed derivation of the natural policy gradient with $Q$-values estimate used in the MDP experiments. 

Suppose we have a policy $\pi_\theta$. Then, the (true) natural policy gradient is given by 
$u = F^{-1}(\theta) \nabla J(\theta)$
where $F(\theta) = \E_{s \sim d_{\pi_\theta}} [F_s(\theta)]$ and $F_s(\theta) = \E_{a \sim \pi} [\nabla \log \pi (a|s) \nabla \log \pi(a|s)^\top]$.
We want to approximate these quantities with trajectories gathered with the current policy.
Assuming that we have a tabular representation for the policy (one parameter for every state-action pair), our estimators for a single trajectory of experience $(s_0, a_0, r_0, ..., s_{T-1}, a_{T-1}, r_{T-1}, s_T)$ are as follows:
$\hat{F} = \frac{1}{T} \sum_{i=0}^{T-1} F(s_i)$ and $\widehat{\nabla J} = \frac{1}{T} \sum_{i=0}^{T-1} (Q_\pi (s_i,a_i) - b(s)) \nabla \log \pi (a_i|s_i)$.

Together, our estimate of the policy gradient is
\begin{align*}
    \hat{F}^{-1} \widehat{\nabla J} &= \left( \frac{1}{T} \sum_{i=0}^{T-1} F(s_i) \right)^{-1} 
    \left( \frac{1}{T} \sum_{i=0}^{T-1} (Q_\pi (s_i,a_i) - b(s)) \nabla \log \pi (a_i|s_i) \right) \\
    &= \left( \sum_{i=0}^{T-1} F(s_i) \right)^{-1} 
    \left( \sum_{i=0}^{T-1} (Q_\pi (s_i,a_i) - b(s)) \nabla \log \pi (a_i|s_i) \right)
\end{align*}
Since we have a tabular representation, $F(s_i)$ is a block diagonal matrix where each block corresponds to one state and $F(s_i)$ contains nonzero entries only for the block corresponding to state $s_i$. Hence, the sum is a block diagonal matrix with nonzero entries corresponding to the blocks of states $s_0, ..., s_{T-1}$ and we can invert the sum by inverting the blocks. 
It follows that the inverse of the sum is the sum of the inverses. 
\begin{align*}
    &= \left( \sum_{i=0}^{T-1} F(s_i)^{-1} \right)
    \left( \sum_{i=0}^{T-1} (Q_\pi (s_i,a_i) - b(s)) \nabla \log \pi (a_i|s_i) \right) \\
    &= 
    \sum_{i=0}^{T-1} (Q_\pi (s_i,a_i) - b(s))  \left( \sum_{j=0}^{T-1} F(s_j)^{-1} \right) \nabla \log \pi (a_i|s_i)
\end{align*}
Finally, we notice that $\nabla \log \pi(a_i|s_i)$ is a vector of zeros except for the entries corresponding to state $s_i$. So, $F(s_j)^{-1} \nabla \log \pi(a_i|s_i)$ is nonzero only if $i = j$ giving us our final estimator
\begin{align*}
        \hat{u} = \sum_{i=0}^{T-1} (Q_\pi (s_i,a_i) - b(s))  F(s_i)^{-1} \nabla \log \pi (a_i|s_i) .
\end{align*}

Note that this is the same as applying the natural gradient update for bandits at each sampled state $s$, where the rewards for each action is given by $Q_\pi(s,a)$.

\end{appendix}

\end{document}